\documentclass{article}

% if you need to pass options to natbib, use, e.g.:
%     \PassOptionsToPackage{numbers, compress}{natbib}
% before loading neurips_2025

% ready for submission
% \usepackage{neurips_2025}

% to compile a preprint version, e.g., for submission to arXiv, add add the
% [preprint] option:
% \usepackage[preprint]{neurips_2025}

% to compile a camera-ready version, add the [final] option, e.g.:
\usepackage[final]{neurips_2025}

% to avoid loading the natbib package, add option nonatbib:
%    \usepackage[nonatbib]{neurips_2025}

\usepackage[utf8]{inputenc} % allow utf-8 input
\usepackage[T1]{fontenc}    % use 8-bit T1 fonts
\usepackage{hyperref}       % hyperlinks
\usepackage{url}            % simple URL typesetting
\usepackage{booktabs}       % professional-quality tables
\usepackage{amsfonts}       % blackboard math symbols
\usepackage{nicefrac}       % compact symbols for 1/2, etc.
\usepackage{microtype}      % microtypography
\usepackage[dvipsnames]{xcolor}         % colors

\usepackage{dsfont}
\usepackage{empheq}
\usepackage{tabularx}
\usepackage{comment}
\setcounter{tocdepth}{2}
\usepackage{algorithm}
\usepackage{algorithmic}
\usepackage{color}
\usepackage{mathrsfs}
\usepackage{float}
\usepackage{amssymb}
\usepackage{amsthm}
\usepackage{stmaryrd}
\usepackage{stackrel}
\usepackage{placeins}
\usepackage{subfig}
\usepackage{tikz}

% new commands
\newcommand{\Dn}{\mathscr{D}_{n}}
\newcommand{\bX}{\textbf{X}}
\newcommand{\bZ}{\textbf{Z}}

\newcommand{\bx}{\textbf{x}}

\renewcommand{\P}{\mathds{P}}
\newcommand{\R}{\mathds{R}}
\newcommand{\E}{\mathbb{E}}
\newcommand{\V}{\mathbb{V}}

\newcommand{\Nstar}{\mathbb{N}^{\star}}

\newcommand{\cA}{\mathcal{A}}
\newcommand{\cP}{\mathcal{P}_p}
\newcommand{\cT}{\mathcal{T}}
\newcommand{\cV}{\mathcal{V}_p}

\newtheorem{theorem}{Theorem}
\newtheorem{definition}{Definition}

\newtheorem{lemma}{Lemma}

\newtheorem{assumption}{Assumption}

\newtheorem{corollary}{Corollary}

\title{Tree Ensemble Explainability through the Hoeffding Functional Decomposition and TreeHFD Algorithm}

% The \author macro works with any number of authors. There are two commands
% used to separate the names and addresses of multiple authors: \And and \AND.
%
% Using \And between authors leaves it to LaTeX to determine where to break the
% lines. Using \AND forces a line break at that point. So, if LaTeX puts 3 of 4
% authors names on the first line, and the last on the second line, try using
% \AND instead of \And before the third author name.

\author{%
    Clément Bénard \\ 
    Thales cortAIx-Labs - SINCLAIR AI Lab \\
    1 avenue Augustin Fresnel, 91120 Palaiseau, France \\
    \texttt{clement-l.benard@thalesgroup.com}
}

\begin{document}

\maketitle

\begin{abstract}
    Tree ensembles have demonstrated state-of-the-art predictive performance across a wide range of problems involving tabular data. Nevertheless, the black-box nature of tree ensembles is a strong limitation, especially for applications with critical decisions at stake. The Hoeffding or ANOVA functional decomposition is a powerful explainability method, as it breaks down black-box models into a unique sum of lower-dimensional functions, provided that input variables are independent. In standard learning settings, input variables are often dependent, and the Hoeffding decomposition is generalized through hierarchical orthogonality constraints. Such generalization leads to unique and sparse decompositions with well-defined main effects and interactions. However, the practical estimation of this decomposition from a data sample is still an open problem. Therefore, we introduce the TreeHFD algorithm to estimate the Hoeffding decomposition of a tree ensemble from a data sample. We show the convergence of TreeHFD, along with the main properties of orthogonality, sparsity, and causal variable selection. The high performance of TreeHFD is demonstrated through experiments on both simulated and real data, using our \texttt{treehfd} Python package (\href{https://github.com/ThalesGroup/treehfd}{https://github.com/ThalesGroup/treehfd}). Besides, we empirically show that the widely used TreeSHAP method, based on Shapley values, is strongly connected to the Hoeffding decomposition.
\end{abstract}

\section{Introduction}
\label{sec:intro}

Tree ensembles have demonstrated remarkable predictive performance to tackle supervised learning problems with tabular data, over the past two decades. In particular, gradient boosted trees \citep{friedman2001greedy, chen2016xgboost} and random forests \citep{breiman2001random} are probably the most successful algorithms to build tree ensembles. Recently, the extensive benchmark of \citet{grinsztajn2022tree} has shown that tree ensembles still outperform deep neural networks on tabular data, and are therefore state-of-the-art on a wide range of problems. However, tree ensembles suffer from a major limitation with their lack of interpretability. Indeed, the prediction mechanism of a tree ensemble  typically computes thousands of operations for a single prediction, making impossible to grasp how inputs are combined to generate predictions. Such black-box problem is shared by most machine learning algorithms, and is a serious obstacle when critical decisions are at stake, which is the case for industrial or healthcare applications, for example. 
The field of eXplainable AI, often called XAI, develops methods to explain predictions of learning algorithms, and has raised a strong interest in the community over the past few years \citep{guidotti2018survey, speith2022review, amoukou2023trustworthy}. The two main approaches to obtain explainable algorithms are transparent models and post-hoc explanations. In the first case, the model structure is constrained to have a limited complexity, in order to keep a clear relation between the inputs and the output of the algorithm. However, transparent models usually have a limited accuracy, because of the strong constraints on their structure. The second type of methods provides explanations through post-treatments of the initial black-box model. A popular example is variable importance, which quantifies the influence of each input variable on the algorithm output \citep{lundberg2017unified, williamson2020unified}. Although variable importance often provides highly relevant insights, it only gives partial information about the relation between the inputs and the output of the black box. 
On the other hand, functional decompositions have recently gained considerable momentum as an alternative approach to obtain more precise explainable methods than transparent models or variable importance \citep{bordt2023shapley, hiabu2023unifying, idrissi2023understanding}. 
The principle is to break down a black-box model into a sum of functions with subsets of input variables as arguments. All functional components of one or two variables can be represented, and are therefore intrinsically transparent, while the accuracy of the initial black box is preserved. However, numerous functional decompositions can be obtained for the same black-box model, leading to different representations and interpretations. Therefore, the theoretical formulation of a functional decomposition must be clearly stated, to provide representations of the initial black box with clear properties, and enable precise conclusions about the underlying relation between the inputs and the output of the studied model.
The Hoeffding decomposition \citep{hoeffding1948}, extended by \citet{stone1994use} and \citet{hooker2007generalized}, and Shapley values \citep{lundberg2017unified, bordt2023shapley} provide theoretical frameworks to obtain such well-defined decompositions.

\paragraph{Hoeffding functional decomposition.}
The Hoeffding functional decomposition (HFD) breaks down a regression function into a sum of functions with variable subsets as arguments, and involves one functional component for each possible variable subset. Originally introduced in the seminal article of \citet{hoeffding1948} for independent input variables, the decomposition is unique and all functions are orthogonal. In the case of dependent input variables, a major breakthrough was done by \citet{stone1994use} and \citet{hooker2007generalized} to generalize the Hoeffding decomposition through hierarchical orthogonality constraints, which imply that two functions are orthogonal if one of the two variable subset arguments is included in the other one. Hence, the decomposition is still unique for dependent inputs, and a functional component is null if it is possible to break down the target function using only lower-order terms. This property provides a clear definition of interactions following the reluctance principle \citep{yu2019reluctant}, and often leads to sparse decompositions essentially involving main effects and second-order interactions, which are intrinsically transparent.
Later, \citet{chastaing2012hoeffding} and \citet{idrissi2023understanding} extended the validity of the decomposition for unbounded supports of the input distribution.
Unfortunately, the practical estimation of the Hoeffding decomposition is a notoriously difficult problem \citep{hooker2007generalized, chastaing2012hoeffding}, and consequently, the HFD has long remained an abstract theoretical tool. Recently, \citet{lengerich2020purifying} proposed an algorithm to estimate this decomposition when the target function is a tree ensemble, and the input distribution is known. 
However, only a data sample is often available in practice, and the estimation of the input distribution for moderate or large dimensions is a very difficult task. 

\paragraph{Shapley values.}
Shapley values build on game theory to define variable importance algorithms with attractive properties. Initially introduced by \citet{owen2014sobol} and \citet{lundberg2017unified} for machine learning applications, Shapley values are now widely used to interpret both tree ensembles and neural networks. In particular, TreeSHAP \citep{lundberg2020local, yu2022linear, muschalik2024beyond} is a fast algorithm to compute Shapley values for tree ensembles, and is implemented in the highly popular \texttt{xgboost} package.
Recently, \citet{herren2022statistical}, \citet{bordt2023shapley}, and \citet{hiabu2023unifying} made strong connections between functional decompositions of black-box models and Shapley values---see the Supplementary Material.
However, all these approaches inherit the estimation obstacles of Shapley values \citep{kumar2020problems}, and heuristics with approximations are required to recover tractable algorithms, such as TreeSHAP with interactions, in the case of tree ensembles. Although TreeSHAP has become a highly popular and successful XAI method, it is also criticized for the lack of theoretical understanding of the estimated values \citep[Chap. $3$]{amoukou2023trustworthy}.

\paragraph{Contributions.}
The goal of this article is to introduce the TreeHFD algorithm to precisely estimate the Hoeffding decomposition of a tree ensemble, when only a data sample is available and the input distribution is unknown. Importantly, the theoretical analysis of TreeHFD shows the algorithm convergence, and exhibits the main properties of the obtained decomposition, providing a clear understanding of the resulting representation. To our best knowledge, TreeHFD is the first algorithm to provide accurate estimates of the Hoeffding functional decomposition in standard machine learning settings, where input variables are dependent and the input distribution is unknown.
Additionally, we show that the functional decomposition induced by TreeSHAP is closely related to the HFD, through several experiments on both simulated and real data. This new result mitigates the main flaw of TreeSHAP, since we connect the estimated values to well-defined theoretical quantities. However, TreeSHAP generates quite noisy functional components, and main effects can be entangled with interactions, which undermines the provided explanations.
In the second section, the Hoeffding decomposition is formalized in the case of tree ensembles through a discretization of the orthogonality constraints, which enforces that the decomposition components are also unique and piecewise constant. These two characteristics are the cornerstone to build efficient estimates of the decomposition, without the input distribution. Then, we state the main properties of the HFD for tree ensembles, mainly about the component orthogonality, the sparsity of the decomposition, the causal variable selection, and its accuracy to approximate the HFD of the underlying regression function.
In the third section, we introduce the TreeHFD algorithm, defined as a least square problem with a linear complexity with respect to the sample size, and implemented in the \href{https://github.com/ThalesGroup/treehfd}{\texttt{treehfd} Python package}. Then, we show the convergence of TreeHFD.
Finally, we conduct extensive experiments in the last section, to show the high performance of TreeHFD, and the close connections between TreeSHAP and the HFD.

\section{Hoeffding functional decomposition of tree ensembles} \label{sec:treeHFD}

We first formalize the original Hoeffding decomposition of a square-integrable function in Theorem \ref{thm:hoeffding}.
While this decomposition has many valuable theoretical properties, its practical estimation from a data sample is still an open question, as explained in the introduction. However, \citet{lengerich2020purifying} take advantage of the piecewise constant form of tree ensembles to introduce an estimate of the HFD, provided that the distribution of the input variables is known. 
We overcome this limitation by introducing the Hoeffding decomposition of tree ensembles.
We first formalize this decomposition in Theorem \ref{thm:hoeffding_trees}, and then derive its main theoretical properties. % about the orthogonality, sparsity, causal variable selection, and accuracy of the decomposition. 

\subsection{Mathematical definition}

We consider a standard supervised learning setting with a real random input vector $\bX$ of dimension $p \in \Nstar$, and a real random output $Y$, defined in the following assumptions. 
We also denote by $\cV = \{1, \hdots, p\}$ the variable index, by $\smash{\bX^{(J)}}$ the subvector of $\bX$ with only the components in $J \subset \cV$, and by $\cP$ the set of all subsets of $\cV$. 
Initially, \citet{stone1994use} and \citet{hooker2007generalized} defined the HFD for distributions of input variables with a hyperrectangle as support. In fact, it coincides with the theoretical analyses of tree ensembles, which also often assume this type of support for input distributions \citep{scornet2015consistency, wager2018estimation}, and is a mild assumption in practice. We therefore focus on such hyperrectangle supports, and take $[0,1]^p$ without loss of generality.%, as formalized in the assumptions below. 
\begin{assumption} \label{assumption:distrib}
    The input vector $\bX$ takes values on $[0,1]^p$, and admits a density $f$ that is bounded by strictly positive constants $c_1, c_2 > 0$, that is, for all $\bx \in [0,1]^p$, $c_1 \leq f(\bx) \leq c_2$.
\end{assumption}
\begin{assumption} \label{assumtion:reg_fct}
    The output $Y$ is defined by $Y = m(\bX) + \varepsilon$, where $m: [0,1]^p \to \mathbb{R}$ is a square-integrable function, and $\varepsilon$ is an independent noise.
\end{assumption}
The underlying regression function $m$ is estimated using a tree ensemble.
More precisely, we denote by $T$ the prediction function of a tree ensemble built with $M \in \Nstar$ trees, from a given realization of a training dataset of fixed sample size, and from a realization of the tree randomizations. In the sequel, the function $T$ is thus considered as a deterministic function of a new query point $\bX$, which avoids the repetitions that theoretical results are stated conditional on the training dataset. Furthermore, we define $T(\bX) = \sum_{\ell} T_{\ell}(\bX)$, where $T_{\ell}$ is the prediction function of the $\ell$-th tree, multiplied by an aggregation coefficient. Such coefficient is typically the learning rate for boosted tree ensembles, or $1/M$ for random forests.
We also denote by $\cT_M = \{1, \hdots, M\}$ the tree index. 
Now, we state the original HFD introduced by \citet{stone1994use} and \citet{hooker2007generalized} in the context of Assumption \ref{assumption:distrib}.
\begin{theorem}[Hoeffding Decomposition \citep{stone1994use, hooker2007generalized}] \label{thm:hoeffding}
    If Assumption \ref{assumption:distrib} is satisfied, and $\nu$ is a square-integrable real function defined on $[0, 1]^p$, then there exists a unique set of functions $\smash{\{\nu^{(J)}}\}_{J \in \cP}$, such that for all $J \in \cP$, $I \subset J$ with $I \neq J$, $\smash{\E[\nu^{(J)}(\bX^{(J)}) | \bX^{(I)}] = 0}$, and \vspace*{-2mm}
    \begin{align*}
        \nu(\bX) = \sum_{J \in \cP} \nu^{(J)}(\bX^{(J)}).
    \end{align*}
\end{theorem}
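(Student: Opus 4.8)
The plan is to recast the statement as an orthogonal-decomposition problem in the Hilbert space $\mathcal{H} = L^2([0,1]^p, \P_{\bX})$ of square-integrable functions of $\bX$, endowed with $\langle g, h\rangle = \E[g(\bX)h(\bX)]$; Assumption~\ref{assumtion:reg_fct} ensures $\nu \in \mathcal{H}$. For each $J \in \cP$ let $\mathcal{H}_J$ be the closed subspace of elements of $\mathcal{H}$ depending on $\bX$ only through $\bX^{(J)}$, so that the conditional expectation $\E[\,\cdot \mid \bX^{(J)}]$ is exactly the orthogonal projection onto $\mathcal{H}_J$. The first step is the elementary observation that, for any $g \in \mathcal{H}_I$, $\langle \nu^{(J)}, g\rangle = \E\big[g(\bX^{(I)})\,\E[\nu^{(J)} \mid \bX^{(I)}]\big]$, so the constraint $\E[\nu^{(J)}(\bX^{(J)}) \mid \bX^{(I)}] = 0$ for every $I \subsetneq J$ is equivalent to $\nu^{(J)} \in \mathcal{H}_J^0 := \mathcal{H}_J \cap \mathcal{G}_J^{\perp}$, where $\mathcal{G}_J := \sum_{I \subsetneq J} \mathcal{H}_I$. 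The theorem then amounts to proving that $\mathcal{H} = \bigoplus_{J \in \cP} \mathcal{H}_J^0$ is a direct-sum decomposition: existence of the $\nu^{(J)}$ is the spanning property $\mathcal{H} = \sum_J \mathcal{H}_J^0$, and uniqueness is the linear independence of the family $\{\mathcal{H}_J^0\}_{J \in \cP}$.

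Second, I would obtain the spanning property by induction on $|J|$, proving $\mathcal{H}_J = \sum_{K \subseteq J} \mathcal{H}_K^0$ for every $J$ and reading off the case $J = \cV$, where $\mathcal{H}_{\cV} = \mathcal{H}$. The base case $\mathcal{H}_{\emptyset} = \mathcal{H}_{\emptyset}^0$ is the constants. For the step, the induction hypothesis gives $\mathcal{G}_J = \sum_{I \subsetneq J} \mathcal{H}_I = \sum_{K \subsetneq J} \mathcal{H}_K^0$; if $\mathcal{G}_J$ is closed, then the orthogonal decomposition of the Hilbert space $\mathcal{H}_J$ along its closed subspace $\mathcal{G}_J$ yields $\mathcal{H}_J = \mathcal{G}_J \oplus (\mathcal{H}_J \cap \mathcal{G}_J^{\perp}) = \mathcal{G}_J + \mathcal{H}_J^0$, hence $\mathcal{H}_J = \sum_{K \subseteq J} \mathcal{H}_K^0$. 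For uniqueness I would argue that if $\sum_J d^{(J)} = 0$ with $d^{(J)} \in \mathcal{H}_J^0$, then the top term peels off at once: $d^{(\cV)} = -\sum_{J \subsetneq \cV} d^{(J)} \in \mathcal{G}_{\cV}$ while simultaneously $d^{(\cV)} \perp \mathcal{G}_{\cV}$, forcing $d^{(\cV)} = 0$; however, the remaining relation $\sum_{J \subsetneq \cV} d^{(J)} = 0$ mixes incomparable maximal subsets and cannot be peeled the same way, so the full conclusion $d^{(J)} = 0$ must rest on the linear independence of $\{\mathcal{H}_J^0\}$ as a global property.

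The main obstacle, and the sole place Assumption~\ref{assumption:distrib} is needed, is therefore the pair of facts that each $\mathcal{G}_J$ is closed and that the family $\{\mathcal{H}_J^0\}$ is linearly independent; equivalently, that the subspaces meet at strictly positive angles. For dependent inputs the $\mathcal{H}_I$ are not mutually orthogonal, so neither property is automatic, and a degenerate (e.g. deterministically linked) input law would destroy them. I would resolve this by transfer from the product measure: since $c_1 \le f \le c_2$ with $c_1, c_2 > 0$, the norm of $\mathcal{H}$ is equivalent to that of $L^2([0,1]^p, \lambda)$ under Lebesgue measure $\lambda$, through $c_1 \|g\|_\lambda^2 \le \|g\|_{\mathcal{H}}^2 \le c_2 \|g\|_\lambda^2$. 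Under $\lambda$ the inputs are independent, so the classical ANOVA tensor structure makes the centered component spaces mutually orthogonal; the sums $\mathcal{G}_J$ are then closed and the family is trivially linearly independent. The equivalence of norms transfers closedness verbatim and keeps the angles bounded away from zero, which delivers linear independence in $\mathcal{H}$ as well. Making this angle bound quantitative in $c_1, c_2$ is the delicate point, and it is exactly what the two-sided density bound of Assumption~\ref{assumption:distrib} supplies.
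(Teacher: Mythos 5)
Your Hilbert-space framing is sound, and the existence half essentially works; note for context that the paper never proves Theorem \ref{thm:hoeffding} at all --- it is quoted from \citet{stone1994use} and \citet{hooker2007generalized}, and the only ingredient of its proof that the paper reproduces is Stone's Lemma 3.1, restated as Lemma \ref{lemma:stone} in the supplement. Within your argument, the translation of the hierarchical constraints into $\nu^{(J)} \in \mathcal{H}_J^0 = \mathcal{H}_J \cap \mathcal{G}_J^{\perp}$ is correct, and the closedness transfer is legitimate precisely for the spaces you apply it to: $\mathcal{H}_I$, and hence $\mathcal{G}_J = \sum_{I \subsetneq J} \mathcal{H}_I$, are the same \emph{sets} of functions under $f\,d\lambda$ and under $\lambda$ (by the two-sided bound of Assumption \ref{assumption:distrib}); under $\lambda$ one has $\mathcal{G}_J = \bigoplus_{K \subsetneq J} \tilde{\mathcal{H}}_K^0$, a finite orthogonal sum of closed subspaces (writing $\tilde{\mathcal{H}}_K^0$ for the classical ANOVA spaces of the product measure), so $\mathcal{G}_J$ is closed there, and closedness of a fixed set survives passage to an equivalent norm. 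The induction then gives the spanning property. (Incidentally, it is the square-integrability hypothesis on $\nu$ together with Assumption \ref{assumption:distrib}, not Assumption \ref{assumtion:reg_fct}, that places $\nu$ in $\mathcal{H}$.)

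The genuine gap is uniqueness. Your claim that linear independence of the family $\{\tilde{\mathcal{H}}_J^0\}$ passes to $\{\mathcal{H}_J^0\}$ because the norms are equivalent and ``the angles stay bounded away from zero'' is a non sequitur: unlike $\mathcal{H}_J$ and $\mathcal{G}_J$, the spaces $\mathcal{H}_J^0 = \mathcal{H}_J \cap \mathcal{G}_J^{\perp}$ are built from the $f$-inner product, so changing the measure changes the family itself --- already for a single variable, zero-$f$-mean functions and zero-Lebesgue-mean functions are different hyperplanes of $\mathcal{H}_{\{j\}}$. Norm equivalence preserves the topology of a fixed set; it says nothing about orthogonal complements, which move with the inner product, so there is no fixed family whose independence you can carry over. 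Two honest ways to close the hole: (i) the quantitative route of the cited literature, namely the coercivity estimate $\E[(\sum_{J} \nu^{(J)}(\bX^{(J)}))^2] \geq \delta \sum_{J} \E[\nu^{(J)}(\bX^{(J)})^2]$ with $\delta = (1 - \sqrt{1 - c_1/c_2^2})^{2^p - 1}$, which is exactly the paper's Lemma \ref{lemma:stone} and delivers directness and closedness simultaneously, but whose proof is a non-trivial induction rather than a transfer (see also \citep{chastaing2012hoeffding}); or (ii) a soft top-down induction that mixes the two structures correctly: expand each $d^{(J)}$ in the $\lambda$-ANOVA, observe that the $\tilde{\mathcal{H}}_K^0$-component of $d^{(J)}$ vanishes unless $K \subseteq J$, so equating $\tilde{\mathcal{H}}_K^0$-components in $\sum_J d^{(J)} = 0$ and descending on $|K|$ leaves only the term $J = K$ at each level, giving $d^{(K)} \in \bigoplus_{K' \subsetneq K} \tilde{\mathcal{H}}_{K'}^0 = \mathcal{G}_K$, whence $d^{(K)} \in \mathcal{G}_K \cap \mathcal{G}_K^{\perp} = \{0\}$ by the hierarchical $f$-orthogonality. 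Your peeling of $d^{(\cV)}$ is exactly the first step of (ii); as written, everything below the top level is asserted, not proved.
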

In particular, if the input variables are further assumed to be mutually independent, the components of the decomposition take an explicit form as the Möbius transform of the conditional expectations, that is $\smash{\nu^{(J)}(\bX^{(J)}) = \sum_{I \subset J} (-1)^{|J| - |I|} \E[\nu(\bX) | \bX^{(I)}]}$, for all $J \in \cP$. However, this formula does not hold in the general case of dependent inputs, which makes the estimation of the Hoeffding decomposition especially difficult.
In this general setting, Theorem \ref{thm:hoeffding} provides the HFD of the regression function $m(\bX)$ and the tree ensemble $T(\bX)$, respectively denoted by $\smash{\{m^{(J)}\}_{J \in \cP}}$ and $\smash{\{T^{(J)}\}_{J \in \cP}}$, which also depend on the distribution $f$ of the input variables $\bX$. However, both decompositions are unknown in practice, since $m$ and $f$ are unknown, while only a data sample is available, and estimating $f$ is a very difficult problem. Our main goal is to provide an efficient explainability method of the tree ensemble $T$ through the HFD of the function $T$ for dependent input variables. Additionally, $T$ often estimates $m$ with a high accuracy, and therefore, the HFD of T also approximates the HFD of the function $m$, which is highly valuable to explain the relation between $\bX$ and $Y$.
Hence, we build an adaptation of the HFD tailored for tree ensembles.
By construction, each tree of an ensemble is a piecewise constant function, and therefore discard the variability of the underlying regression function within each cell of the tree partition. Obviously, such approximation is necessary to obtain a good accuracy, since the trees are learned from a finite sample, which cannot capture the data patterns with an arbitrary high accuracy. In the same spirit, we apply a piecewise constant approximation to the distribution $f$ of the inputs $\bX$. However, we use a more fine-grained partition than the original tree partition to efficiently handle the orthogonality constraints: the Cartesian tree partition, introduced in Definition \ref{def:cartesian} and illustrated in Figure \ref{fig:cartesian_partition}.
\begin{definition}[Cartesian Tree Partition] \label{def:cartesian}
    For a tree $\ell \in \cT_M$ and a variable $X^{(j)}$ with $j \in \cV$, the values of the node splits involving $\smash{X^{(j)}}$ are collected over all tree nodes to build a sequence of intervals $\smash{\cA^{(j)}_{\ell}}$, which forms a partition of $[0,1]$. Then, for any $J \in \cP$, $\smash{\cA^{(J)}_{\ell} = \bigotimes_{j \in J} \cA^{(j)}_{\ell}}$ is the partition of $\smash{[0,1]^{|J|}}$, defined as the Cartesian product of the one-dimensional partitions. Clearly, the function $T_{\ell}$ is piecewise constant over $\smash{\cA^{(\cV)}_{\ell}}$ (to lighten notations, $\smash{\cA^{(\cV)}_{\ell}}$ is written $\smash{\cA_{\ell}}$ in the sequel).
\end{definition}
\begin{figure*}
\centering
\begin{tikzpicture}[scale = 0.7]

\draw[->] (-0,0) -- (5.5,0);
\draw (6,0) node[below] {$X^{(1)}$};
\draw [->] (0,-0) -- (0,5);
\draw (0,5) node[above] {$X^{(2)}$};
\draw [color = black, line width = 0.4 mm] (3.4,3.5) -- (3.4,5);
\draw [color = black, line width = 0.4 mm] (1.7,0) -- (1.7,3.5);
\draw [color = black, line width = 0.4 mm] (5.5,1.7) -- (1.7,1.7);
\draw [color = black, line width = 0.4 mm] (-0,3.5) -- (5.5,3.5);
\draw [color = black, line width = 0.4 mm] (4.5,1.7) -- (4.5,-0);

\draw[->, line width = 0.4 mm, color = red] (8,0) -- (13.5,0);
\draw (14,0) node[below] {$X^{(1)}$};
\draw [->, line width = 0.4 mm, color = blue] (8,-0) -- (8,5);
\draw (8,5) node[above] {$X^{(2)}$};
\draw [color = black, line width = 0.6 mm, dashed] (11.4,3.5) -- (11.4,5);
\draw [color = black, line width = 0.6 mm, dashed] (9.7,0) -- (9.7,3.5);
\draw [color = black, line width = 0.6 mm, dashed] (13.5,1.7) -- (9.7,1.7);
\draw [color = black, line width = 0.6 mm, dashed] (8,3.5) -- (13.5,3.5);
\draw [color = black, line width = 0.6 mm, dashed] (12.5,1.7) -- (12.5,-0);
\draw [color = red, line width = 0.3 mm] (11.4,0) -- (11.4,5);
\draw [color = red, line width = 0.3 mm] (9.7,0) -- (9.7,5);
\draw [color = red, line width = 0.3 mm] (12.5,5) -- (12.5,-0);
\draw [color = blue, line width = 0.3 mm] (13.5,1.7) -- (8,1.7);
\draw [color = blue, line width = 0.3 mm] (8,3.5) -- (13.5,3.5);
\draw (9, -0.4) node[color = red] {$A_1^{(1)}$};
\draw (10.6, -0.4) node[color = red] {$A_2^{(1)}$};
\draw (12, -0.4) node[color = red] {$A_3^{(1)}$};
\draw (13, -0.4) node[color = red] {$A_4^{(1)}$};
\draw (7.5, 0.9) node[color = blue] {$A_1^{(2)}$};
\draw (7.5, 2.6) node[color = blue] {$A_2^{(2)}$};
\draw (7.5, 4.2) node[color = blue] {$A_3^{(2)}$};
\draw (10.6, 2.6) node[color = black] {$A_k^{(1,2)}$};

\end{tikzpicture}
\caption{Example of the partition of $[0,1]^2$ by a tree $T_{\ell}$ (left side), and the associated Cartesian tree partitions \textcolor{red}{$\smash{\cA_{\ell}^{(1)} = \{A_1^{(1)}, A_2^{(1)}, A_3^{(1)}, A_4^{(1)}\}}$}, \textcolor{blue}{$\smash{\cA_{\ell}^{(2)} =  \{A_1^{(2)}, A_2^{(2)}, A_3^{(2)}\}}$}, and  $\smash{\cA_{\ell}^{(1,2)}}$ (right side).}
\label{fig:cartesian_partition}
\end{figure*}
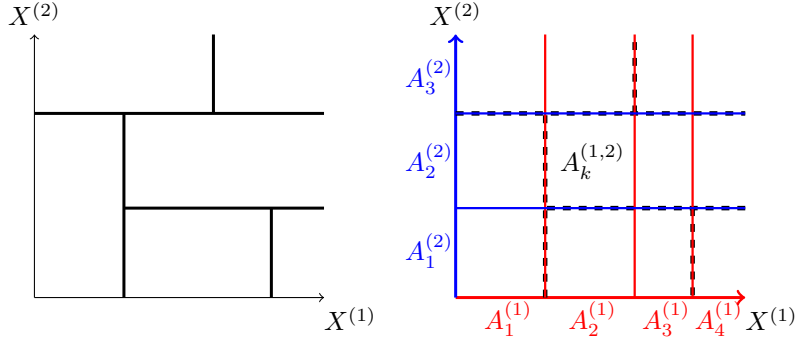
For a tree $\ell \in \cT_M$, we define the tree HFD as the original HFD applied to the tree prediction function $T_{\ell}$, but with the density $f$ averaged over each cell of the Cartesian tree partition, instead of $f$. Theorem \ref{thm:hoeffding_trees} below shows four critical properties of the tree HFD. First, the decomposition can still be written with the input vector $\bX$ of distribution $f$, with a discretized version of the orthogonality constraints. More importantly, such a decomposition is unique for a given pair of tree $T_{\ell}$ and density $f$, the components of the decomposition are also piecewise constant over the Cartesian tree partitions, and the decomposition is independently defined for each tree. These three ingredients enable the design of consistent estimates of the tree HFD from a data sample, as we will see in the next section. 
Notice that all proofs are gathered in the Supplementary Material (S-Mat).
\begin{theorem}[HFD for Tree Ensembles] \label{thm:hoeffding_trees}
    Let Assumption \ref{assumption:distrib} be satisfied, $T = \sum_{\ell} T_{\ell}$ be the prediction function of a tree ensemble, and $\smash{\{\cA^{(J)}_{\ell}}\}_{J \in \cP, \ell \in \cT_M}$ the associated Cartesian tree partitions of Definition \ref{def:cartesian}.
    Then, $T$ has a unique decomposition into a sum of functions $\smash{\{\eta_{\ell}^{(J)}\}_{J \in \cP, \ell \in \cT_M}}$, such that $\smash{\eta_{\ell}^{(J)}}$ is piecewise constant over $\smash{\cA^{(J)}_{\ell}}$ for all $J \in \cP, \ell \in \cT_M$, and 
    \begin{align*}
        &T(\bX) = \sum_{J \in \cP} \eta^{(J)}(\bX^{(J)}), 
        \hspace*{2mm} \textrm{with } \hspace*{1mm}
        \eta^{(J)}(\bX^{(J)}) = \sum_{\ell\in \cT_M} \eta_{\ell}^{(J)}(\bX^{(J)}) 
        \hspace*{1mm} \textrm{and} \hspace*{1mm}
        T_{\ell}(\bX) = \sum_{J \in \cP} \eta_{\ell}^{(J)}(\bX^{(J)}), \\[-8mm]
    \end{align*}
    and for all $\ell \in \cT_M$, $J \in \cP$, $I \subset J$ with $I \neq J$, $A \in \cA_{\ell}^{(I)}$, $\E[\eta_{\ell}^{(J)}(\bX^{(J)})  | \bX^{(I)} \in A] = 0$.
\end{theorem}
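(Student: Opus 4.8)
The plan is to reduce the statement to a single tree and then to a direct application of Theorem~\ref{thm:hoeffding} under a suitably discretized density. Since the identities $T_\ell = \sum_{J} \eta_\ell^{(J)}$ and the orthogonality constraints are all stated tree by tree, and since setting $\eta^{(J)} = \sum_{\ell} \eta_\ell^{(J)}$ recovers $T = \sum_{\ell} T_\ell$ by summation, it suffices to construct, for a fixed tree $\ell \in \cT_M$, a unique family $\{\eta_\ell^{(J)}\}_{J \in \cP}$ of functions piecewise constant over the $\cA_\ell^{(J)}$ with $T_\ell = \sum_J \eta_\ell^{(J)}$ and the discretized orthogonality.

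First I would introduce the discretized density $\tilde f_\ell$, constant on each cell $A \in \cA_\ell$ of the full Cartesian partition with value $\P(\bX \in A)/\lambda(A)$, where $\lambda$ is the Lebesgue measure. Averaging preserves the bounds, so $c_1 \le \tilde f_\ell \le c_2$ and $\tilde f_\ell$ is again a density satisfying Assumption~\ref{assumption:distrib}; Theorem~\ref{thm:hoeffding} then applies to the bounded, hence square-integrable, function $T_\ell$, yielding a unique family $\{\eta_\ell^{(J)}\}_{J\in\cP}$ with $T_\ell = \sum_J \eta_\ell^{(J)}$ and $\tilde\E[\eta_\ell^{(J)}(\bX^{(J)}) \mid \bX^{(I)}] = 0$ for all $I \subsetneq J$, where $\tilde\E$ denotes expectation under $\tilde f_\ell$. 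Two elementary facts drive the rest: (i) any function piecewise constant over $\cA_\ell$ has the same expectation under $\tilde f_\ell$ and under $f$, since both assign probability $\P(\bX\in A)$ to each cell; and (ii) because $\tilde f_\ell$ is constant on the product cells and $\cA_\ell = \bigotimes_j \cA_\ell^{(j)}$, averaging out the coordinates outside $J$ of a function depending only on $\bx^{(J)}$ returns a function of the $\cA_\ell^{(J)}$-cell of $\bx^{(J)}$ alone.

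The main obstacle is to show that the components $\eta_\ell^{(J)}$ returned by Theorem~\ref{thm:hoeffding} are piecewise constant over $\cA_\ell^{(J)}$, and I would handle this by an invariance-plus-uniqueness argument. Let $Q$ be the $L^2(\tilde f_\ell)$-orthogonal projection onto the finite-dimensional space $H$ of functions piecewise constant over $\cA_\ell$, i.e. the conditional expectation given the $\cA_\ell$-cell; then $Q(T_\ell) = T_\ell$. By fact (ii), $Q$ maps any function of $\bx^{(J)}$ into the subspace $H_J \subset H$ of functions of $\bx^{(J)}$ piecewise constant over $\cA_\ell^{(J)}$, and maps any function of $\bx^{(I)}$ to a function of $\bx^{(I)}$. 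Applying $Q$ to $T_\ell = \sum_J \eta_\ell^{(J)}$ gives $T_\ell = \sum_J Q(\eta_\ell^{(J)})$ with $Q(\eta_\ell^{(J)}) \in H_J$. To check that this new family still satisfies the constraints, I would test against an arbitrary bounded $h(\bX^{(I)})$ and use self-adjointness of $Q$: $\tilde\E[Q(\eta_\ell^{(J)})\, h(\bX^{(I)})] = \tilde\E[\eta_\ell^{(J)}\, Q(h(\bX^{(I)}))] = 0$, since $Q(h(\bX^{(I)}))$ is a function of $\bX^{(I)}$ and $\tilde\E[\eta_\ell^{(J)} \mid \bX^{(I)}] = 0$. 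Thus $\{Q(\eta_\ell^{(J)})\}$ is also a Hoeffding decomposition of $T_\ell$ under $\tilde f_\ell$, and the uniqueness in Theorem~\ref{thm:hoeffding} forces $Q(\eta_\ell^{(J)}) = \eta_\ell^{(J)}$, so each $\eta_\ell^{(J)} \in H_J$ is piecewise constant over $\cA_\ell^{(J)}$.

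It then remains to convert the pointwise orthogonality under $\tilde f_\ell$ into the discretized orthogonality under $f$, and to establish uniqueness. For $\eta_\ell^{(J)} \in H_J$ and $I \subset J$, fact (ii) shows that $\tilde\E[\eta_\ell^{(J)} \mid \bX^{(I)} = \bx^{(I)}]$ depends only on the cell $A \in \cA_\ell^{(I)}$ containing $\bx^{(I)}$, hence equals $\tilde\E[\eta_\ell^{(J)} \mid \bX^{(I)} \in A]$; and since $\eta_\ell^{(J)} \mathbf{1}(\bX^{(I)} \in A)$ is piecewise constant over $\cA_\ell$, fact (i) rewrites this as $\E[\eta_\ell^{(J)} \mid \bX^{(I)} \in A]$ under the true density $f$. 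So $\tilde\E[\eta_\ell^{(J)} \mid \bX^{(I)}] = 0$ is equivalent to $\E[\eta_\ell^{(J)} \mid \bX^{(I)} \in A] = 0$ for every $A \in \cA_\ell^{(I)}$, which is precisely the stated constraint. Uniqueness follows by reading this equivalence backwards: any family of piecewise-constant components summing to $T_\ell$ and satisfying the discretized constraints under $f$ satisfies the pointwise constraints under $\tilde f_\ell$, and hence coincides with the unique family of Theorem~\ref{thm:hoeffding}. Summing over $\ell$ and setting $\eta^{(J)} = \sum_\ell \eta_\ell^{(J)}$ yields the ensemble decomposition together with its uniqueness.
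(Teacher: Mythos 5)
Your proposal is correct and follows essentially the same route as the paper's proof: both introduce the cell-averaged (discretized) density, apply the known Hoeffding decomposition theorem to the tree under that density, establish piecewise constancy of the components by an averaging-plus-uniqueness argument (your projection $Q$ is exactly the paper's explicit cell-average construction of $\mu_\ell^{(J)}$, with your self-adjointness step replacing the paper's tower-property computation), and transfer the orthogonality constraints between the discretized and true densities (your facts (i)–(ii) are the paper's Lemma \ref{lemma:hfd_constraints}), with uniqueness obtained by reading that equivalence backwards. The only differences are stylistic, not substantive.
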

Besides, Theorem \ref{thm:hoeffding_trees} also holds when Assumption \ref{assumption:distrib} is not satisfied. Indeed, since the number of terminal leaves is finite, the tree ensemble is constant in each direction outside an hyperrectangle, and the tree HFD is also constant in this area. Therefore, even if the input distribution $f$ does not have a bounded support, Theorem \ref{thm:hoeffding} can be applied within this hyperrectangle, and the decomposition is simply extended outside with constant values. Furthermore, if the distribution $f$ takes null values, its average over each cell of a tree partition is strictly positive, as all terminal leaves contain as least one data point, and Theorem \ref{thm:hoeffding} can also be applied in this context to state the tree HFD.

\subsection{Theoretical properties}

\paragraph{Orthogonality.}
One of the main characteristics of the original HFD defined in Theorem \ref{thm:hoeffding} is that functions are hierarchically orthogonal, that is $\smash{\mathrm{Cov}[\nu^{(J)}(\bX^{(J)}), \nu^{(I)}(\bX^{(I)})] = 0}$ for $I \subset J$ and $I \neq J$. In particular, this property clearly differentiates interaction terms from main effects in the decomposition. For example with $p=2$, $\smash{\E[\nu(\bX) | X^{(1)}] = \nu^{(1)}(X^{(1)}) + \E[\nu^{(2)}(X^{(2)}) | X^{(1)}]}$, and $\smash{\nu^{(1, 2)}}$ vanishes by construction. Therefore, $\smash{\nu^{(1, 2)}}$ does not contain main effects, and is a pure interaction, as explained in \citet{lengerich2020purifying}, and this property is obviously also true for higher-order interactions.
In Theorem \ref{thm:orthogonal} below, we show that hierarchical orthogonality still holds for the tree HFD of a given tree of the ensemble, whereas it is not necessary the case for the global tree HFD components of the ensemble, because of the discretization across various tree partitions. However, we prove that orthogonality is almost satisfied, provided that the density $f$ of $\bX$ does not vary too much within each cell of the tree partitions. We will show in the experimental section that orthogonality is almost satisfied in practice.
To formalize the result, we denote by $\| \nu \|_{\infty}$ the usual infinite norm of a function $\nu$, and $\Delta_{\cA, f}$ the maximum variability of $f$ within each cell of the Cartesian tree partitions $\cA = \smash{\{\cA^{(J)}_{\ell}}\}_{J \in \cP, \ell \in \cT_M}$. More precisely, we write $\smash{\Delta_{\cA, f} = \sup_{A \in \cup_{\ell} \cA_{\ell}} \sup_{\bx, \bx' \in A} |f(\bx) - f(\bx')|}$.
\begin{theorem} \label{thm:orthogonal}
     Let Assumption \ref{assumption:distrib} be satisfied, $T = \sum_{\ell} T_{\ell}$ be the prediction function of a tree ensemble, $\cA = \smash{\{\cA^{(J)}_{\ell}}\}_{J \in \cP, \ell \in \cT_M}$ the associated Cartesian tree partitions, and $\smash{\{\eta_{\ell}^{(J)}}\}_{J \in \cP, \ell \in \cT_M}$ the tree HFD defined in Theorem \ref{thm:hoeffding_trees}. We also consider $J \in \cP$, and $I \subset J$ with  $I \neq J$. Then, we have \vspace*{-2mm}
     \begin{align*}
         &(i) \quad \forall \ell \in \cT_M, \quad \mathrm{Cov}[\eta_{\ell}^{(J)}(\bX^{(J)}), \eta_{\ell}^{(I)}(\bX^{(I)})] = 0, \\[-0.4em]
         &(ii) \quad \big|\mathrm{Cov}[\eta^{(J)}(\bX^{(J)}), \eta^{(I)}(\bX^{(I)})]\big| < \frac{c_2 - c_1}{c_1^2} \| \eta^{(I)} \|_{\infty} \Big( \sum_{\ell=1}^M \| \eta_{\ell}^{(J)} \|_{\infty} \Big) \Delta_{\cA, f}.
     \end{align*}
\end{theorem}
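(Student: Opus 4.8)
For (i), fix $\ell \in \cT_M$, $J \in \cP$ and $I \subset J$ with $I \neq J$. Taking $I = \emptyset$ in the constraint of Theorem~\ref{thm:hoeffding_trees} gives $\E[\eta_\ell^{(J)}(\bX^{(J)})] = 0$, and likewise $\E[\eta_\ell^{(I)}(\bX^{(I)})] = 0$, so that $\Cov[\eta_\ell^{(J)}, \eta_\ell^{(I)}] = \E[\eta_\ell^{(J)}(\bX^{(J)})\,\eta_\ell^{(I)}(\bX^{(I)})]$. Now $\eta_\ell^{(I)}$ is piecewise constant over $\cA_\ell^{(I)}$, so I expand the expectation over the cells $A \in \cA_\ell^{(I)}$, pull out the constant value of $\eta_\ell^{(I)}$ on each $A$, and recognise the remaining factor as $\P(\bX^{(I)} \in A)\,\E[\eta_\ell^{(J)}(\bX^{(J)}) \mid \bX^{(I)} \in A]$, which vanishes by the very constraint of Theorem~\ref{thm:hoeffding_trees}. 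Summing over $A$ yields $\Cov[\eta_\ell^{(J)}, \eta_\ell^{(I)}] = 0$.

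For (ii), the same mean-zero property gives $\Cov[\eta^{(J)}, \eta^{(I)}] = \E[\eta^{(J)}\eta^{(I)}] = \sum_{\ell \in \cT_M} \E[\eta_\ell^{(J)}(\bX^{(J)})\,\eta^{(I)}(\bX^{(I)})]$. The obstruction to applying the argument of (i) term by term is that $\eta^{(I)} = \sum_{\ell'} \eta_{\ell'}^{(I)}$ aggregates components attached to different partitions, hence is \emph{not} piecewise constant over $\cA_\ell^{(I)}$: the value of $\eta^{(I)}$ can no longer be pulled out of the cell $A$. I therefore introduce the per-tree residual $\phi_\ell(\bX^{(I)}) := \E[\eta_\ell^{(J)}(\bX^{(J)}) \mid \bX^{(I)}]$, so that $\E[\eta_\ell^{(J)}\eta^{(I)}] = \E[\eta^{(I)}(\bX^{(I)})\,\phi_\ell(\bX^{(I)})]$, and the constraint of Theorem~\ref{thm:hoeffding_trees} now states only that $\phi_\ell$ has vanishing $f$-mean on each cell $A \in \cA_\ell^{(I)}$, not that $\phi_\ell \equiv 0$.

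The heart of the proof is a pointwise bound on $\phi_\ell$. Writing $K = J \setminus I$ and using that the cell-averaged density $\tilde f_\ell$ underlying the tree HFD preserves the $f$-mass of every cell of $\cA_\ell$, the cell-averaged conditional law of $\bX^{(K)}$ given $\{\bX^{(I)} \in A\}$ coincides with the true one, and for $\bx^{(I)} \in A$ one gets $\phi_\ell(\bx^{(I)}) = \sum_{B \in \cA_\ell^{(K)}} \eta_\ell^{(J)}(A, B)\,\big(\P(\bX^{(K)} \in B \mid \bX^{(I)} = \bx^{(I)}) - \P(\bX^{(K)} \in B \mid \bX^{(I)} \in A)\big)$, where $\eta_\ell^{(J)}(A,B)$ denotes the constant value of $\eta_\ell^{(J)}$ on the cell $A \times B$. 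Each bracket is a difference between a pointwise conditional probability and its cell average; being a difference of normalised marginal densities of $f$ whose denominators are bounded below by $c_1$, it is controlled by the oscillation of $f$ within the cells, i.e.\ by $\Delta_{\cA, f}$, and produces a factor of order $(c_2 - c_1)/c_1^2$. Since the increments sum to zero over $B$, I may centre $\eta_\ell^{(J)}(A, \cdot)$ before bounding, which keeps the prefactor at $\| \eta_\ell^{(J)} \|_\infty$; this gives $|\phi_\ell| \leq \frac{c_2 - c_1}{c_1^2}\,\| \eta_\ell^{(J)} \|_\infty\,\Delta_{\cA, f}$ up to the book-keeping constants. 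Finally $|\E[\eta^{(I)}\phi_\ell]| \leq \| \eta^{(I)} \|_\infty\,\E|\phi_\ell|$ and summation over $\ell$ give the announced bound, the inequality being strict because the density-oscillation and density-ratio estimates cannot all be saturated simultaneously on every cell.

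The step I expect to be most delicate is this pointwise estimate of $\phi_\ell$: a naive bound on the difference of conditional probabilities brings in $c_2$ rather than $c_2 - c_1$ in the numerator, so the sharp constant requires exploiting both the zero-sum structure of the probability increments over $B$ and the simultaneous control of the numerator spread and of the density oscillation by $c_2 - c_1$ and $\Delta_{\cA, f}$. By contrast, part (i) is exact precisely because a single tree's components share one partition $\cA_\ell$, the structure that the aggregation over $\ell$ destroys.
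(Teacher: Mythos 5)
Your proposal is correct and follows essentially the same route as the paper: part (i) is identical (expand over the cells of $\cA_{\ell}^{(I)}$, pull out the constant value of $\eta_{\ell}^{(I)}$, and invoke the discretized constraint $\E[\eta_{\ell}^{(J)}(\bX^{(J)}) \mid \bX^{(I)} \in A] = 0$), and your decomposition of $\phi_{\ell}(\bx^{(I)})$ as $\sum_{B} \eta_{\ell}^{(J)}(A,B)\big(\P(\bX^{(K)} \in B \mid \bX^{(I)} = \bx^{(I)}) - \P(\bX^{(K)} \in B \mid \bX^{(I)} \in A)\big)$ is exactly the paper's key identity, followed by the same pointwise bound $|\phi_{\ell}| \leq \frac{c_2 - c_1}{c_1^2}\|\eta_{\ell}^{(J)}\|_{\infty}\Delta_{\cA,f}$ and the same tower-property conclusion. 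The only difference is that you sketch the delicate probability-difference estimate (the paper derives the $c_2 - c_1$ factor by sandwiching both $\mathrm{Vol}(A)\P(\bX^{(I)} \in A^{(I)})$ and $\P(\bX^{(J)} \in A \times A^{(I)})$ between $c_1$ and $c_2$ times $\mathrm{Vol}(A \times A^{(I)})$, with the volume factor making the sum over cells converge without your centering trick), but your outline of that step is consistent with the paper's computation.
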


\paragraph{Sparsity.}
The HFD is sparse, since a functional component is included in the decomposition only if terms of lower orders are not sufficient to break down the target model. Therefore, we say that the HFD complies with the reluctance principle \citep{yu2019reluctant, lengerich2020purifying}, and this is a consequence of hierarchical orthogonality. Theorem \ref{thm:sparsity} below shows that the tree HFD preserves this important property, which often leads to sparse decompositions with essentially main effects and second-order interactions, which are intrinsically transparent.
\begin{theorem} \label{thm:sparsity}
    Under the same assumptions as Theorem \ref{thm:orthogonal}, if the functions $T_{\ell}$ can be written $\smash{T_{\ell}(\bX) = g_{\ell}(\bX^{(I)}) + h_{\ell}(\bX^{(I')})}$ for a pair of variable sets $I,I' \in \cP$, with real functions $g_{\ell}, h_{\ell}$ for all $\ell \in \cT_M$, then for all $J \in \cP$ such that $J \not\subset I$ and $\smash{J \not\subset I'}$, we have $\smash{\eta^{(J)}(\bX^{(J)}) = 0}$ a.s.
\end{theorem}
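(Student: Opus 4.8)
The plan is to exploit the uniqueness asserted in Theorem~\ref{thm:hoeffding_trees} and to reduce the statement to a single tree. Since the global components are defined by $\eta^{(J)} = \sum_{\ell \in \cT_M} \eta_{\ell}^{(J)}$ and the variable sets $I, I'$ are common to all trees, it suffices to prove that for every $\ell \in \cT_M$ and every $J \in \cP$ with $J \not\subset I$ and $J \not\subset I'$, the tree component $\eta_{\ell}^{(J)}$ vanishes almost surely. I would establish this by exhibiting an explicit decomposition of $T_{\ell}$ whose components are supported only on subsets of $I$ or of $I'$, and then invoke uniqueness to identify this candidate with the tree HFD of $T_{\ell}$.

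First I would reduce each summand to a piecewise constant function on the relevant Cartesian partition. Because $T_{\ell} = g_{\ell}(\bX^{(I)}) + h_{\ell}(\bX^{(I')})$ is constant on each cell of $\cA_{\ell}$, one can replace $g_{\ell}, h_{\ell}$ by functions $\bar g_{\ell}, \bar h_{\ell}$ that are piecewise constant over $\cA_{\ell}^{(I)}$ and $\cA_{\ell}^{(I')}$ respectively, with $T_{\ell} = \bar g_{\ell} + \bar h_{\ell}$ still holding. Next, since the marginal of $f$ on any coordinate subset is again bounded between positive constants (by integrating the bounds of Assumption~\ref{assumption:distrib}), Theorem~\ref{thm:hoeffding_trees} applies to $\bar g_{\ell}$ within the variable set $I$, yielding components $\{\gamma^{(J)}\}_{J \subset I}$ that are piecewise constant over $\cA_{\ell}^{(J)}$ and satisfy the discretized orthogonality constraints. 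Extending by $\gamma^{(J)} = 0$ for $J \not\subset I$, the constraints persist on the full space: they are trivial when $\gamma^{(J)} = 0$, and for $J \subset I$ with $K \subset J$ the quantity $\E[\gamma^{(J)}(\bX^{(J)}) \mid \bX^{(K)} \in A]$ involves only the marginal of $f$ on $I$, so it coincides with the reduced-space constraint. The symmetric construction yields components $\{\delta^{(J)}\}_{J \subset I'}$ of $\bar h_{\ell}$, extended by zero outside $I'$.

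Then I would set $\tilde\eta_{\ell}^{(J)} = \gamma^{(J)} + \delta^{(J)}$. Each $\tilde\eta_{\ell}^{(J)}$ is piecewise constant over $\cA_{\ell}^{(J)}$, their sum equals $\bar g_{\ell} + \bar h_{\ell} = T_{\ell}$, and by linearity of the conditional expectation each satisfies $\E[\tilde\eta_{\ell}^{(J)}(\bX^{(J)}) \mid \bX^{(K)} \in A] = 0$ for all $K \subset J$, $K \neq J$, $A \in \cA_{\ell}^{(K)}$. Hence $\{\tilde\eta_{\ell}^{(J)}\}_{J}$ satisfies the three defining properties of the tree HFD of $T_{\ell}$, and the uniqueness part of Theorem~\ref{thm:hoeffding_trees} forces $\eta_{\ell}^{(J)} = \gamma^{(J)} + \delta^{(J)}$. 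For $J \not\subset I$ and $J \not\subset I'$ both summands vanish, so $\eta_{\ell}^{(J)} = 0$ a.s., and summing over $\ell$ gives $\eta^{(J)} = 0$ a.s.

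I expect the main obstacle to be the piecewise constant reduction of $g_{\ell}$ and $h_{\ell}$ when $I \cap I' \neq \emptyset$: the additive split $T_{\ell} = g_{\ell} + h_{\ell}$ is then unique only up to an arbitrary function of the shared variables $\bX^{(I \cap I')}$, so I must fix this gauge (for instance by subtracting from $g_{\ell}$ its value at a reference point of the $I \setminus I'$ coordinates and adding it back to $h_{\ell}$) before the two summands are genuinely piecewise constant on $\cA_{\ell}^{(I)}$ and $\cA_{\ell}^{(I')}$. Concretely, I would show that constancy of $T_{\ell}$ on each cell forces $g_{\ell}(\bX^{(I)}) = \alpha(\bX^{(I\cap I')}) + \beta(\bX^{(I\setminus I')})$ with $\beta$ piecewise constant on each cell of $\cA_\ell^{(I)}$, so that the reference subtraction cancels the unconstrained part $\alpha$; the remaining verification of the orthogonality constraints is then routine.
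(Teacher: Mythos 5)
Your proposal is correct and follows essentially the same route as the paper's proof: reduce to a single tree, replace $g_{\ell}, h_{\ell}$ by cell-averaged piecewise constant versions, apply the tree HFD of Theorem \ref{thm:hoeffding_trees} to each summand on its own variable set (the constraints involving only the marginal of $f$ on $I$ or $I'$), and invoke uniqueness to identify the combined components with the tree HFD of $T_{\ell}$, so that all components with $J \not\subset I$ and $J \not\subset I'$ vanish. The one obstacle you anticipate is in fact vacuous: averaging the pointwise identity $g_{\ell}(\bx^{(I)}) + h_{\ell}(\bx^{(I')}) = T_{\ell}(\bx)$ over any product cell $A \in \cA_{\ell}$ (whose uniform measure has uniform marginals on the projections $A^{(I)}$ and $A^{(I')}$) directly yields $\bar g_{\ell}(\bx^{(I)}) + \bar h_{\ell}(\bx^{(I')}) = T_{\ell}(\bx)$ on $A$ even when $I \cap I' \neq \emptyset$, so no gauge-fixing is needed.
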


\paragraph{Causal variable selection.}
We show that the tree HFD achieves the same causal variable selection than the decomposition induced by interventional Shapley values.
Such decomposition is introduced by \citet{bordt2023shapley}, through the generalization of Shapley values to all groups of variables. More precisely, Theorem $4$ in \citet{bordt2023shapley} shows that each Shapley value function also defines a specific functional decomposition, called Shapley-GAM, which is in fact the Möbius transform of the value function \citep{osgood1957measurement}.
The two most popular value functions are $\smash{v^{(obs)}(J, \bx^{(J)}) = \E[T(\bX) | \bX^{(J)} = \bx^{(J)}]}$ for $J \in \cP$, defining observational SHAP \citep{lundberg2017unified}, and $\smash{v^{(int)}(J, \bx^{(J)}) = \E[T(\bx^{(J)}, \bX^{(-J)})]}$, defining interventional SHAP \citep{datta2016algorithmic, janzing2020feature}. While $v^{(obs)}(J, \bx^{(J)})$ is the original proposal from \citet{lundberg2017unified}, \citet{janzing2020feature} show that interventional SHAP provides the causal effects of the input variables on the output $Y$, provided quite mild assumptions on the causal structure of the problem. However, \citet{kumar2020problems} argue that both observational and interventional SHAP have strong drawbacks. For example, Theorem \ref{thm:sparsity} does not hold for observational SHAP, since the dependence within input variables may result in non-null components for a variable $\smash{X^{(j)}}$, even if T is constant with respect to $\smash{X^{(j)}}$. On the other hand, interventional SHAP may be a poor approximation of the true data patterns, even if $T$ is highly accurate for a new dataset distributed as the training data. Indeed, because of the dependence between variables, some regions of the input space may have almost no data, and consequently, $T(\bX)$ may extrapolates with a low accuracy in these areas, on which $\smash{v^{(int)}(J, \bx^{(J)})}$ highly relies through the marginal expectation \citep{hooker2021unrestricted}.
In practice, a functional decomposition rarely involves all possible variable subsets, but rather a fraction of them. We define by $\Gamma$ the set of exogenous variables, which only have null components in a decomposition $\smash{\{\nu^{(J)}\}_{J \in \cP}}$, that is $\smash{\Gamma = \{j \in \cV : \forall J \in \cP, \nu^{(J \cup \{j\})} = 0 \}}$.
In the following Theorem \ref{thm:shap}, we show that the set of exogenous variables is the same for tree HFD and interventional SHAP, which therefore achieves the same causal variable selection. 
\begin{theorem} \label{thm:shap}
    Under the same assumptions as Theorem \ref{thm:orthogonal}, if $\Gamma_{h}$ is the set of exogenous variables for the tree HFD of $T_{\ell}$ with $\ell \in \cT_M$, and $\Gamma_{s}$ for the decomposition of $T_{\ell}$ induced by interventional SHAP, then we have $\Gamma_{h} = \Gamma_{s}$.
\end{theorem}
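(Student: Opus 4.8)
The plan is to reduce both exogeneity notions to a single, density-free, purely functional criterion. Say that coordinate $j$ is \emph{null} for $T_\ell$ if $T_\ell(\bx)$ does not depend on $x^{(j)}$, i.e.\ $T_\ell(\bx) = T_\ell(\bx')$ whenever $\bx$ and $\bx'$ differ only in their $j$-th entry. I would then establish the two characterizations $\Gamma_h = \{j \in \cV : j \text{ is null for } T_\ell\} = \Gamma_s$ separately and conclude $\Gamma_h = \Gamma_s$. This factorization is convenient because the nullity criterion depends only on $T_\ell$ and not on the density (true or cell-averaged), which fully decouples the two decompositions.

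To handle $\Gamma_h$, I would argue both directions from the reconstruction identity $T_\ell(\bX) = \sum_{J \in \cP} \eta_\ell^{(J)}(\bX^{(J)})$ of Theorem \ref{thm:hoeffding_trees}. If $j \in \Gamma_h$, every term with $j \in J$ vanishes, so $T_\ell$ is a sum of functions none of which takes $X^{(j)}$ as an argument, whence $j$ is null. Conversely, if $j$ is null I would write $T_\ell(\bX) = g_\ell(\bX^{(\cV \setminus \{j\})}) + 0$ and apply Theorem \ref{thm:sparsity} with $I = \cV \setminus \{j\}$ and $I' = \emptyset$: every $J \ni j$ satisfies $J \not\subset I$ and $J \not\subset I'$, so $\eta_\ell^{(J)} = 0$ and $j \in \Gamma_h$. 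I would note here that Theorem \ref{thm:sparsity}, although stated for the ensemble, applies verbatim to the single tree $T_\ell$, since Theorem \ref{thm:hoeffding_trees} defines the tree HFD independently for each tree.

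To handle $\Gamma_s$, I would use that the interventional Shapley--GAM components $\phi_\ell^{(J)}$ of $T_\ell$ are the Möbius transform of the interventional value function, $\phi_\ell^{(J)}(\bx^{(J)}) = \sum_{I \subseteq J} (-1)^{|J|-|I|} v^{(int)}(I, \bx^{(I)})$, together with the inversion $v^{(int)}(J, \bx^{(J)}) = \sum_{I \subseteq J} \phi_\ell^{(I)}(\bx^{(I)})$ and the boundary value $v^{(int)}(\cV, \bx) = T_\ell(\bx)$. If $j \in \Gamma_s$, the inversion at $J = \cV$ gives $T_\ell(\bx) = \sum_{I : j \notin I} \phi_\ell^{(I)}(\bx^{(I)})$, which is independent of $x^{(j)}$, so $j$ is null. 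For the converse, the crux is the identity $v^{(int)}(I \cup \{j\}, \bx^{(I \cup \{j\})}) = v^{(int)}(I, \bx^{(I)})$ for every $I$ with $j \notin I$ when $j$ is null: marginalizing the null coordinate over its marginal leaves the value unchanged, and fixing it as an argument adds nothing. Substituting this into the Möbius sum for any $J \ni j$ and pairing each $I \subseteq J \setminus \{j\}$ with $I \cup \{j\}$ makes the two contributions cancel, since their signs are opposite and their values coincide; hence $\phi_\ell^{(J)} = 0$ and $j \in \Gamma_s$.

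The main obstacle I anticipate is precisely this marginalization identity for $v^{(int)}$: I must verify that dropping a null coordinate from the conditioning set and integrating it out against the marginal induced by the joint law of $\bX$ returns exactly the same value, which requires tracking which reference distribution $\bX^{(-J)}$ follows. Because $T_\ell$ does not depend on the null coordinate, the inner integral collapses to $1$ irrespective of whether the reference is the joint marginal of the complement or a product of marginals, so the identity is robust. Making this step rigorous, together with the sign bookkeeping of the paired Möbius cancellation, is the only genuinely delicate part; everything else follows from the reconstruction formulas already available in the excerpt.
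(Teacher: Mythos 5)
Your proposal is correct and takes essentially the same route as the paper: one inclusion comes from Theorem \ref{thm:sparsity} applied to a single tree, and the other from the paired-sign cancellation in the Möbius transform of $v^{(int)}$ after establishing $v^{(int)}(I \cup \{j\}, \cdot) = v^{(int)}(I, \cdot)$, which is exactly the paper's computation. The only difference is organizational: you factor both inclusions through the pointwise criterion that $T_\ell$ does not depend on $x^{(j)}$, whereas the paper derives the same value-function invariance from the a.s.\ representation $T_\ell(\bX) = \sum_{J \subset \cV \setminus \Gamma_h} \eta_\ell^{(J)}(\bX^{(J)})$ and applies the sparsity theorem to the Shapley-GAM reconstruction directly.
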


\paragraph{HFD of the regression function.}
Our main goal is to provide a functional decomposition of the tree ensemble $T$ with clear properties, as provided in the previous paragraphs. Furthermore, according to Assumption \ref{assumtion:reg_fct}, the output $Y$ is defined by $Y = m(\bX) + \varepsilon$, where $m$ is the underlying regression function. Therefore, $T$ is an estimate of $m$, and the tree HFD of $T$ also provides an approximation of the original HFD of the function $m$. In fact, Theorem \ref{thm:accuracy} below shows that such approximation is of high accuracy if the Mean Square Error (MSE) of $T$ with respect to $m$ is small, and the distribution $f$ does not vary too much over each cell of the Cartesian tree partitions, that is $\Delta_{\cA, f}$ is small. Tree ensembles often have a high accuracy in practice, as shown in the benchmark of \citet{grinsztajn2022tree} for example, and the HFD of $T$ is thus often a good approximation of the HFD of $m$. 
\begin{theorem} \label{thm:accuracy}
    Let Assumptions \ref{assumption:distrib} and \ref{assumtion:reg_fct} be satisfied, and $\{m^{(J)}\}_{J \in \cP}$ be the HFD of $m(\bX)$.
    Then, there exists two constants $K_1, K_2 > 0$, such that for any tree ensemble $T=\sum_{\ell} T_{\ell}$ of Cartesian tree partitions $\cA$ and tree HFD $\{\eta^{(J)}\}_{J \in \cP}$, we have for $J \in \cP$, \vspace*{-4mm}
    \begin{align*}
        \E[(\eta^{(J)}(\bX^{(J)}) - m^{(J)}(\bX^{(J)}))^2] \leq K_1 \E[( m(\bX) - T(\bX) )^2] + K_2 \Delta_{\cA, f}^2 \Big(\sum_{\ell=1}^M \sqrt{\E[T_{\ell}(\bX)^2]}\Big)^2.
    \end{align*}
\end{theorem}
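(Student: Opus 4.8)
The plan is to split the error $\eta^{(J)} - m^{(J)}$ into a term governed by the approximation quality of the ensemble and a term governed by the density variability $\Delta_{\cA,f}$, exploiting the linearity of the decomposition and its stability under perturbations of the density. Write $\langle \cdot, \cdot\rangle$ and $\|\cdot\|$ for the inner product and norm of $L^2(f)$. Two structural facts drive the argument. First, for each $J \in \cP$ the map sending a square-integrable function to its HFD component of Theorem~\ref{thm:hoeffding} is a bounded linear operator $P^{(J)}$ on $L^2(f)$, whose operator norm is controlled by a constant depending only on the density bounds $c_1, c_2$ and the dimension $p$; this boundedness is the quantitative form of the uniqueness asserted in Theorem~\ref{thm:hoeffding}, and the lower bound $c_1>0$ is precisely what prevents the operator from blowing up. Second, because the averaged density $\bar f_\ell$ is the cell-mean of $f$ over the Cartesian partition $\cA_\ell$, one has the pointwise estimate $|f(\bx)-\bar f_\ell(\bx)| \le \Delta_{\cA,f}$ for every $\bx$, the integrals of $f$ and $\bar f_\ell$ agree over every union of cells, and $\bar f_\ell$ again takes values in $[c_1,c_2]$.

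I would then observe that the tree HFD component $\eta_\ell^{(J)}$ coincides with the ordinary HFD of $T_\ell$ computed under the density $\bar f_\ell$: since $T_\ell$ and $\bar f_\ell$ are piecewise constant over $\cA_\ell$, the discretized constraints of Theorem~\ref{thm:hoeffding_trees} and the continuous constraints of Theorem~\ref{thm:hoeffding} coincide on this space, the two conditional expectations agreeing because $f$ and $\bar f_\ell$ integrate identically over the cells. Denoting by $P^{(J)}$ the $f$-HFD operator and by $\bar P_\ell^{(J)}$ the $\bar f_\ell$-HFD operator, I have $m^{(J)} = P^{(J)} m$, $\eta^{(J)} = \sum_\ell \bar P_\ell^{(J)} T_\ell$, and the intermediate $P^{(J)} T = \sum_\ell P^{(J)} T_\ell$. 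The triangle inequality yields
\[
\|\eta^{(J)} - m^{(J)}\| \le \sum_{\ell \in \cT_M} \big\| \bar P_\ell^{(J)} T_\ell - P^{(J)} T_\ell \big\| + \big\| P^{(J)}(T - m) \big\|.
\]
The last term is bounded by $\|P^{(J)}\|_{\mathrm{op}}\,\|T - m\|$, which after squaring furnishes the $K_1\,\E[(m(\bX)-T(\bX))^2]$ contribution, while each term in the sum compares the HFD of the single tree $T_\ell$ under the two densities $\bar f_\ell$ and $f$.

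The crux is the per-tree perturbation estimate $\|\bar P_\ell^{(J)} T_\ell - P^{(J)} T_\ell\| \le C\,\Delta_{\cA,f}\,\sqrt{\E[T_\ell(\bX)^2]}$ for a constant $C=C(c_1,c_2,p)$. I would derive it by writing both decompositions as the solution of the linear system enforcing hierarchical orthogonality, whose coefficients are the conditional-expectation (Gram-type) operators $g \mapsto \E_g[\cdot \mid \bX^{(I)}]$ evaluated at $g \in \{f, \bar f_\ell\}$. The pointwise bound $|f - \bar f_\ell| \le \Delta_{\cA,f}$ together with $c_1 \le f, \bar f_\ell \le c_2$ shows that these operators differ in operator norm by at most a multiple of $\Delta_{\cA,f}/c_1$, while the lower bound $c_1$ keeps both systems uniformly invertible. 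A resolvent identity of the form $\bar P_\ell^{(J)} - P^{(J)} = -(\mathcal{G}+\delta\mathcal{G})^{-1}\,\delta\mathcal{G}\,P^{(J)}$, with $\mathcal{G}$ the $f$-system operator and $\delta\mathcal{G}$ its $\bar f_\ell$-perturbation, then gives the claimed Lipschitz dependence on the density, the factor $\sqrt{\E[T_\ell(\bX)^2]} = \|T_\ell\|$ arising from applying the operator difference to $T_\ell$. Summing over $\ell$, squaring, and combining the two contributions through $(a+b)^2 \le 2a^2 + 2b^2$ produces the stated bound with $K_1 = 2\,\|P^{(J)}\|_{\mathrm{op}}^2$ and $K_2 = 2\,C^2$.

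The step I expect to be the main obstacle is the uniform well-conditioning underlying this perturbation estimate: I must guarantee that the decomposition operators stay invertible with an inverse bounded by a constant depending only on $c_1, c_2, p$, uniformly over all admissible densities and all Cartesian partitions, so that $C$ and $\|P^{(J)}\|_{\mathrm{op}}$ are genuinely independent of the tree ensemble. This is exactly where Assumption~\ref{assumption:distrib}, and in particular $c_1 > 0$, is indispensable, since a vanishing density would let the conditional-expectation operators degenerate and the Lipschitz constant diverge. A secondary point requiring care is that $\bar P_\ell^{(J)} T_\ell$ is piecewise constant whereas $P^{(J)} T_\ell$ is not, so the comparison must be conducted in the full space $L^2(f)$ rather than in the finite-dimensional space of functions that are piecewise constant over $\cA_\ell$.
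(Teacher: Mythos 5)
Your proposal has the same skeleton as the paper's proof: you compare $\eta^{(J)}$ and $m^{(J)}$ through the intermediate $f$-HFD of $T$ (your $P^{(J)}T$, the paper's $T^{(J)}=\sum_{\ell}T_{\ell}^{(J)}$), you bound that second gap by applying a boundedness/coercivity property of the HFD map to $T-m$, you reduce the discretization gap to a per-tree comparison of the HFD of $T_{\ell}$ under $f$ and under the cell-averaged density $f_{\ell}$, and you assemble with $(a+b)^2\leq 2a^2+2b^2$. The genuine difference is how the per-tree estimate $\|\bar P_{\ell}^{(J)}T_{\ell}-P^{(J)}T_{\ell}\|\leq C\,\Delta_{\cA,f}\,\sqrt{\E[T_{\ell}(\bX)^2]}$ is obtained. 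The paper (Lemmas \ref{lemma:HFD_equivalence} and \ref{lemma:bound_tree_HFD}) establishes an explicit algebraic identity, $\eta_{\ell}^{(J)}=T_{\ell}^{(J)}f^{(J)}/f_{\ell}^{(J)}-\delta_{\ell}^{(J)}$ with $\delta_{\ell}^{(J)}$ the $f_{\ell}$-HFD of $\delta_{\ell}=\sum_{J}T_{\ell}^{(J)}\big(f^{(J)}/f_{\ell}^{(J)}-1\big)$, so the bound follows from the pointwise estimate $|f^{(J)}/f_{\ell}^{(J)}-1|\leq\Delta_{\cA,f}/c_1$ plus one more use of the coercivity inequality, with no operator inversion at all. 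Your resolvent-perturbation route is more abstract and can be made to work, but it makes you responsible for formulating and uniformly inverting the constraint system; note also that, as written, your identity $\bar P_{\ell}^{(J)}-P^{(J)}=-(\mathcal{G}+\delta\mathcal{G})^{-1}\delta\mathcal{G}\,P^{(J)}$ silently assumes the right-hand side of the system is density-free (otherwise a $\delta\mathcal{B}$ term appears), so you would need a formulation where the data enters only through the exact-sum constraint.

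The point you must repair is the justification of the two quantitative facts you assume. Both your ``structural fact'' (boundedness of $P^{(J)}$ with constant depending only on $c_1,c_2,p$) and the ``uniform well-conditioning'' you flag as the main obstacle are precisely Lemma 3.1 of Stone (1994), the paper's Lemma \ref{lemma:stone}: $\E[\nu(\bX)^2]\geq\big(1-\sqrt{1-c_1/c_2^2}\big)^{2^p-1}\sum_{J}\E[\nu^{(J)}(\bX^{(J)})^2]$, valid for every density bounded in $[c_1,c_2]$, hence for every $f_{\ell}$ and every Cartesian partition. Citing it closes your obstacle. However, your stated reason --- that boundedness is ``the quantitative form of the uniqueness'' --- is not correct: uniqueness together with the closed graph theorem gives a bounded component map for each fixed density, with a constant you do not control, whereas the theorem requires constants uniform over all tree ensembles, hence over all cell-averaged densities $f_{\ell}$. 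That uniformity is a genuine geometric fact supplied by Stone's argument, not a consequence of uniqueness; without it your constant $C$ could depend on the partition and the proof of the stated bound would collapse.
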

In Corollary \ref{cor:accuracy}, we derive asymptotic conditions on the training of the tree ensemble $T$ to obtain a vanishing error of the tree HFD with respect to the HFD of $m$, inspired from \citep{meinshausen2006quantile}. We recall that $T$ is a deterministic function, trained with a realization of the training dataset of fixed sample size, throughout the article. Here only, we denote by $n_T$ the size of this training dataset, and analyze the tree HFD accuracy when $n_T$ grows for a random training dataset.
\begin{corollary} \label{cor:accuracy}
    Under the same assumptions as Theorem \ref{thm:accuracy}, if the input distribution $f$ is further assumed to be a Lipschitz function, the tree ensemble $T$ is $L_2$-consistent, all trees are bounded, the tree depth grows to infinity with the sample size $n_T$, the split of all tree nodes leaves at least a fraction $\gamma > 0$ of the observations in each child node, and the optimization of each node split is slightly randomized to have a positive probability to split with each variable, then we have for $J \in \cP$, $\smash{\E[(\eta^{(J)}(\bX^{(J)}) - m^{(J)}(\bX^{(J)}))^2] \underset{n_T \to \infty}{\longrightarrow} 0}$.
\end{corollary}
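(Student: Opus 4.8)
The plan is to invoke Theorem \ref{thm:accuracy} and show that each of the two terms on its right-hand side vanishes as $n_T \to \infty$, where the expectations are now understood to be taken over both the query point $\bX$ and the random training sample. The first term $K_1 \E[(m(\bX) - T(\bX))^2]$ is exactly $K_1$ times the $L_2$-risk of $T$, which tends to $0$ by the assumed $L_2$-consistency of the tree ensemble. For the second term, I would first argue that the factor $\big(\sum_{\ell=1}^M \sqrt{\E[T_\ell(\bX)^2]}\big)^2$ stays bounded by a constant independent of $n_T$: since all trees are bounded, $\E[T_\ell(\bX)^2] \leq \|T_\ell\|_\infty^2$, so the sum is at most $\sum_\ell \|T_\ell\|_\infty$, which is finite for a fixed number $M$ of bounded trees. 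The entire burden then falls on showing $\Delta_{\cA, f} \to 0$; since $\Delta_{\cA, f} \leq c_2 - c_1$ is uniformly bounded under Assumption \ref{assumption:distrib}, convergence in probability over the training sample will suffice, as bounded convergence then yields $\E[\Delta_{\cA, f}^2] \to 0$ and hence the vanishing of the second term.

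To control $\Delta_{\cA, f}$, I would use that $f$ is Lipschitz: writing $\lambda$ for its Lipschitz constant, $\Delta_{\cA, f} \leq \lambda \sup_A \mathrm{diam}(A)$, where the supremum runs over all cells of the Cartesian tree partitions. The key structural observation is that the Cartesian tree partition is a refinement of the original leaf partition, because every split hyperplane of a tree is a separating hyperplane of its Cartesian partition; thus each Cartesian cell is contained in a leaf cell, and $\sup_A \mathrm{diam}(A) \leq \max_{\text{leaf } L} \mathrm{diam}(L)$. It therefore suffices to show that the largest leaf diameter of each tree shrinks to $0$, and since all cells live in $[0,1]^p$, this in turn reduces to proving that, for every coordinate $j \in \cV$, the width of the widest leaf in direction $j$ tends to $0$.

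The geometric lemma I would establish is that a single split on variable $j$ contracts the cell width in direction $j$ by a factor bounded away from $1$. For a cell of width $w$ in direction $j$, the $\gamma$-balance condition forces each child to carry at least a fraction $\gamma$ of the cell's probability mass; combined with the density bounds $c_1 \leq f \leq c_2$, this implies that each child has width at least $\frac{c_1}{c_2}\gamma\, w$ in direction $j$, hence width at most $\rho\, w$ with $\rho = 1 - \frac{c_1}{c_2}\gamma < 1$. Then, using the randomized split rule, which gives each variable a probability at least $\pi_0 > 0$ of being selected at every node, together with the fact that the tree depth grows to infinity, each variable is split of order $\pi_0 d$ times along any root-to-leaf path of depth $d$, so the leaf width in direction $j$ is at most $\rho^{\,\Theta(\pi_0 d)} \to 0$.

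The step I expect to be the main obstacle is upgrading this pathwise statement into a uniform one over \emph{all} leaves simultaneously, since $\Delta_{\cA, f}$ is a supremum over cells rather than the diameter of the cell containing a typical point. Here I would combine a Chernoff-type concentration bound on the number of times each variable is selected along a given path with a union bound over the (at most exponentially many) leaves, in the spirit of the consistency arguments of \citet{meinshausen2006quantile}; the $\gamma$-balance condition is what keeps the depth at $O(\log n_T)$ and the tree regular enough for the union bound to close, while the passage from the empirical balance condition to the population density bounds requires a standard concentration argument valid for $n_T$ large. Once $\max_{\text{leaf } L} \mathrm{diam}(L) \to 0$ in probability is established, $\Delta_{\cA, f} \to 0$ follows, boundedness of the remaining factor gives the convergence of the second term, and Theorem \ref{thm:accuracy} yields the claim.
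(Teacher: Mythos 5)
Your proposal is correct and follows essentially the same route as the paper's proof: apply Theorem \ref{thm:accuracy}, kill the first term by the assumed $L_2$-consistency, and for the second term use the Lipschitz property to bound $\Delta_{\cA, f}$ by a constant times the maximal cell diameter, which vanishes in probability thanks to the $\gamma$-balance condition, the randomized split selection, and the growing depth, with the boundedness of the trees and bounded convergence finishing the argument. The only difference is one of packaging: where you sketch the per-split contraction, Chernoff, and union-bound argument for the vanishing diameters (correctly, including the clean reduction from Cartesian cells to tree leaves), the paper simply cites Lemma 2 of Meinshausen (2006) and Lemma 5 of B\'enard et al. (2022), which establish exactly this fact.
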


\section{TreeHFD algorithm} \label{sec:algo}

We first show that the tree HFD of Theorem \ref{thm:hoeffding_trees} can be parametrized with a set of real coefficients, and is in fact the minimum of a convex loss function for each tree. Then, the TreeHFD algorithm is defined from an empirical version of this least square problem using a data sample. Finally, we prove the convergence of TreeHFD.

\paragraph{A least square problem.}
The tree HFD functions are piecewise constant on the Cartesian tree partitions.
For each tree, such partition is easily constructed by collecting the split values of the tree nodes, and deriving the corresponding intervals. Then, for $\ell \in \cT_M$, $J \in \cP$, we estimate $\smash{\eta_{\ell}^{(J)}}$ by a piecewise constant function $\smash{\mu_{\ell}^{(J)}}$ of the form $\mu_{\ell}^{(J)}(\bx^{(J)}) = \sum_{k=1}^{K_J} \beta_{k}^{(J)} \mathds{1}_{\bx^{(J)} \in A_k^{(J)}}$, for $\bx^{(J)} \in [0, 1]^{|J|}$, where $\smash{\beta_{1}^{(J)}, \hdots, \beta_{K_J}^{(J)} \in \R}$ and $\smash{\{A_k^{(J)}\}_k}$ are the $K_J$ cells of the Cartesian partition $\smash{\cA_{\ell}^{(J)}}$.
Next, we define a loss function to quantify the violation of the orthogonality constraints and the error of the decomposition with respect to each tree of the ensemble. We use Lemma 4.1 from \citet{hooker2007generalized} to simplify the conditions of orthogonality, transformed into the following constraints: for all $j \in J$ and $\smash{A \in \cA_{\ell}^{(J \setminus j)}}$, $\smash{\E[\eta_{\ell}^{(J)}(\bX^{(J)}) | \bX^{(J \setminus j)} \in A] = 0}$.
Then, we define the theoretical loss function $L^{\star}$ by \vspace*{-1mm}
\begin{align*}
    L^{\star}(\{ \mu_{\ell}^{(J)} \})
    = \E\big[\big(T_{\ell}(\bX) - \sum_{J \in \cP} \mu_{\ell}^{(J)}(\bX^{(J)})\big)^2\big]
      + \sum_{J \in \cP} \sum_{j \in J}  \sum_{A \in \cA^{(J \setminus j)}} \E[\mu_{\ell}^{(J)}(\bX^{(J)}) \mathds{1}_{\{\bX^{(J \setminus j)} \in A\}}]^2, \\[-6.5mm]
\end{align*}
and the tree HFD is the unique minimum of $L^{\star}$, where $L^{\star} = 0$.
Given the form of the functions $\smash{\mu_{\ell}^{(J)}}$, minimizing such loss function with respect to $\smash{\{\beta_{k}^{(J)}\}}_{k,J}$ clearly boils down to a least square problem.

\paragraph{Algorithm description.}
The loss function $L^{\star}$ defined above has a number of terms growing exponentially with the dimension $p$, and seems intractable in practice. In fact, only few variables are involved in the splits of a shallow tree of a boosted ensemble. Consequently, only the associated subsets of variables are considered in the loss $L_n$ for each tree, and the other components of the tree HFD are null by definition. Additionally, we only consider HFD components involving variable interactions of limited order, and denote by $d_I$ this hyperparameter. While interactions of order two often occur in practice, it is rarely the case for interactions of higher orders. Therefore, $d_I$ is typically set to two by default, but it is obviously possible to set greater values in TreeHFD.
Then, we define $\smash{\cP^{(\ell)} \subset \cP}$, the set of all possible variable sets collected along the paths of the $\ell$-th tree, and of maximum size $d_I$. Overall, only a small fraction of the variable subsets are involved in the loss function of each tree, which can therefore be optimized at a small computational cost.
Next, we assume that we have access to a data sample $\Dn = \{(\bX_i, Y_i)\}_{i=1,\hdots,n}$, independent of the training dataset of $T$. We define $L_n$, the empirical counterpart of $L^{\star}$ estimated with $\Dn$ for each tree---see proof of Lemma \ref{lemma:loss}, by \vspace*{-3mm}
\begin{align*}
    &L_n =  \frac{1}{n} \sum_{i=1}^n \Big( T_{\ell}(\bX_{i}) - \sum_{J \in \cP^{(\ell)}} \sum_{k=1}^{K_J} \beta_{k}^{(J)} \mathds{1}_{\bX_i^{(J)} \in A_k^{(J)}} \Big)^2
    \\[-1em] \hspace*{1mm} &+ \sum_{J \in \cP^{(\ell)}} \sum_{j \in J} \sum_{k=1}^{K_{J \setminus j}} \Big[ 
    \Big( \frac{1}{n} \sum_{i=1}^n \mathds{1}_{\big\{\bX_i^{(J \setminus j)} \in A_{k}^{(J \setminus j)}\big\}} \Big)^{-\frac{1}{2}}
    \sum_{k'=1}^{K_J} \beta_{k'}^{(J)} \frac{1}{n} \sum_{i=1}^n \mathds{1}_{\big\{\bX_i^{(J)} \in A_{k'}^{(J)} \bigcap \bX_i^{(J \setminus j)} \in A_{k}^{(J \setminus j)}\big\}} \Big]^2. \\[-6mm]
\end{align*}
Notice that for $\smash{J \in \cP^{(\ell)}}$ with $|J| = 1$, we get $J \setminus j = \emptyset$. In these cases, we set by convention that the involved indicator functions always take value one, and the corresponding terms in $L_n$ simply enforce that all HFD components have zero mean.
Hence, the set of functions $\smash{\{\mu_{n, \ell}^{(J)}\}_{J}}$ is defined as the minimum of $L_n$ through the optimization of  the parameters $\smash{\{\beta_{k}^{(J)} \}_{k,J}}$.
Although the loss $L_n$ has a quite complex form, its computation only requires to collect the list of cells in which each data point falls. Then, simple counts give all the necessary coefficients to express $L_n$ as a quadratic function of the parameters $\smash{\{\beta_{k}^{(J)}}\}_{k,J}$.
In practice, we can use the available efficient software to solve quadratic programs.
Indeed, the parameters $\smash{\{\beta_{k}^{(J)}}\}_{k,J}$ are stacked together to form a single vector $\boldsymbol{\beta}$. Then, the loss $L_n$ is translated into a constraint matrix $\mathbf{C}_n$, where each square term gives a row of $\mathbf{C}_n$, and each element of the row is the coefficient of the corresponding $\smash{\beta_{k}^{(J)}}$ in the square term of $L_n$. The target vector $\mathbf{Z}_n$ is defined by $T_{\ell}(\bX_{i})$ for the constraints of the first part of $L_n$, and by the null value for the orthogonality constraints.
Also notice that many constraint rows are identical for the first part of $L_n$, since many data points fall in the same collection of cells, where $T_{\ell}$ is constant. These rows can thus be deduplicated and multiplied with appropriate weights to compact $\mathbf{C}_n$. 
Finally, the functions $\smash{\{\mu_{n, \ell}^{(J)}\}_{J}}$ are defined from the parameters $\smash{\boldsymbol{\beta}_n^{\star}}$, which solve $\smash{\boldsymbol{\beta}_n^{\star} = \mathrm{argmin}_{\boldsymbol{\beta}} \ \| \mathbf{Z}_n - \mathbf{C}_n \boldsymbol{\beta} \|_2^2}$, where $\smash{\|\cdot\|_2}$ is the Euclidean distance.
Then, the least square problem is solved for each tree of the ensemble, and the estimated tree HFD component $\smash{\mu_{n}^{(J)}}$ of $T$ is given by $\smash{\mu_{n}^{(J)} = \sum_{\ell=1}^M \mu_{n, \ell}^{(J)}}$.

\paragraph{Computational complexity.}
The computational complexity of solving the above least square problem depends on the number of cells and variable subsets, and thus on the tree depth, but not on the sample size $n$ or directly on the dimension $p$. Additionally, the construction of the matrix $\mathbf{C}_n$ has a linear complexity with respect to the sample size $n$, and $\mathbf{C}_n$ is highly sparse by design, which greatly reduces the cost of the least square problem. Overall, the computational complexity of TreeHFD is linear with respect to $n$, does not depend on $p$ for shallow trees, and grows exponentially with the tree depth for large $p$. Therefore, TreeHFD is highly efficient for ensembles of shallow trees, such as boosted tree ensembles.

\paragraph{Additional algorithm details. }
We specify four additional details and extensions for TreeHFD algorithm.
First, we can increase the efficiency of TreeHFD for deep trees that are typically involved in random forests, by introducing two hyperparameters $d_T$ and $d_V$ to control tree depth and the number of variable subsets. Since deep splits in random forest trees are often not significant \citep{duroux2018impact}, we use a preprocessing step before computing the Cartesian tree partitions, where node splits are pruned whenever they exceed a given tree depth threshold, provided by the hyperparameter $d_T$. Then, the original tree predictions are averaged in each cell of the new partition to define the new prediction function of the tree. 
Additionally, when the tree depth is large, the number of variable subsets $\smash{\cP^{(\ell)}}$ may become very high. Consequently, only the variable subsets at the first $d_V$ levels of the tree are extracted to build $\smash{\cP^{(\ell)}}$, to control its size, and thus the computational complexity of TreeHFD for deep trees, while focusing on the most influential variables.
Secondly, by construction of the Cartesian tree partitions, some cells may contain no training data, since the Cartesian partitions are finer than the original tree ones. Hence, $L_n$ does not depend on the $\smash{\beta_{k}^{(J)}}$ parameters of such empty cells. Once the reduced $\boldsymbol{\beta}_n^{\star}$ is computed, the parameter of each empty cell is set to the same value as the largest neighboring nonempty cell. For this specific reason, it is critical to break down the decomposition by tree, whereas merging all tree partitions with a unique cost function is highly inefficient.
Thirdly, TreeHFD can also easily handle categorical and discrete variables, as fully explained in the S-Mat.
Finally, TreeHFD algorithm also applies to classification problems for boosted tree ensembles, where the logit of each class is a continuous output that can be handled as the regression case. TreeHFD also applies to binary classification for all tree ensembles.

\paragraph{Algorithm convergence.}
We show the convergence of the TreeHFD algorithm towards the theoretical tree HFD formalized in Theorem \ref{thm:hoeffding_trees}, when the sample size grows. Therefore, TreeHFD provides a clear decomposition of the initial tree ensemble, which satisfies all the properties of orthogonality, sparsity, causal variable selection, and accuracy, stated in the theorems of the previous section. In the sequel, we show that these properties are satisfied in practice, through several batches of experiments.
We highlight that Theorem \ref{thm:eta_convergence} is stated for a deterministic tree ensemble $T$, trained with a fixed set of points, and for a growing size of $\Dn$, used to fit TreeHFD from $T$.
\begin{theorem} \label{thm:eta_convergence}
    If Assumption \ref{assumption:distrib} is satisfied, the hyperparameters $d_I$, $d_T$, and $d_V$ are set to the tree depth, $\smash{\{ \mu_{n, \ell}^{(J)} \}_{J \in \cP}}$ is the minimum of $L_n$ over a compact set for $\ell \in \cT_M$, and $\smash{\{\eta_{\ell}^{(J)}\}_{J \in \cP}}$ is the tree HFD of $T_{\ell}(\bX)$ defined in Theorem \ref{thm:hoeffding_trees}, then for all $J \in \cP$, we have $\mu_{n, \ell}^{(J)} \overset{p}{\longrightarrow} \eta_{\ell}^{(J)}$.
\end{theorem}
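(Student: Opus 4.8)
The plan is to cast the statement as a standard argmin-consistency (M-estimation) result for the finite-dimensional parameter vector $\boldsymbol{\beta}$ that encodes the piecewise constant functions $\mu_{\ell}^{(J)}$. Because the hyperparameters $d_I$, $d_T$, and $d_V$ are all set to the tree depth, the collection $\cP^{(\ell)}$ contains every variable subset appearing along the paths of the $\ell$-th tree, so no component of the tree HFD is truncated away and $L_n$ is genuinely the empirical counterpart of $L^{\star}$ in the same parametrization. Writing $\boldsymbol{\beta}_n^{\star} = \mathrm{argmin}_{\boldsymbol{\beta}} L_n(\boldsymbol{\beta})$ and $\boldsymbol{\beta}^{\star} = \mathrm{argmin}_{\boldsymbol{\beta}} L^{\star}(\boldsymbol{\beta})$ over the compact set, it suffices to prove $\boldsymbol{\beta}_n^{\star} \overset{p}{\longrightarrow} \boldsymbol{\beta}^{\star}$; since each $\mu_{n, \ell}^{(J)}(\bx^{(J)}) = \sum_{k} \beta_{n,k}^{(J)} \mathds{1}_{\bx^{(J)} \in A_k^{(J)}}$ is a fixed linear map of the coefficients on a fixed finite partition, coordinate convergence of $\boldsymbol{\beta}_n^{\star}$ immediately yields $\mu_{n, \ell}^{(J)} \overset{p}{\longrightarrow} \eta_{\ell}^{(J)}$.

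First I would establish the uniform convergence $\sup_{\boldsymbol{\beta}} |L_n(\boldsymbol{\beta}) - L^{\star}(\boldsymbol{\beta})| \overset{p}{\longrightarrow} 0$ over the compact set. Both losses are quadratic in $\boldsymbol{\beta}$, with coefficients built from empirical averages of the form $\frac{1}{n}\sum_i \mathds{1}_{\bX_i^{(J)} \in A_k^{(J)}}$, $\frac{1}{n}\sum_i T_{\ell}(\bX_i)\mathds{1}_{\bX_i^{(J)} \in A_k^{(J)}}$, and products of two indicators. Each summand is i.i.d. and bounded ($T_{\ell}$ takes finitely many values since the tree has finitely many leaves, and indicators are bounded by one), so the law of large numbers gives convergence of every such average to its expectation. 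The only nonlinearity is the weight $(\frac{1}{n}\sum_i \mathds{1}_{\bX_i^{(J \setminus j)} \in A})^{-1/2}$: by Assumption \ref{assumption:distrib} the marginal density of any subvector is bounded below by $c_1 > 0$, so every cell of positive volume has strictly positive probability; the empirical frequency therefore converges to a strictly positive limit and, by the continuous mapping theorem, so does its inverse square root. Since the parameter set is compact and the loss coefficients converge, uniform convergence follows.

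Then I would invoke the classical argmin theorem. By Lemma \ref{lemma:loss} and Theorem \ref{thm:hoeffding_trees}, $L^{\star}$ attains the value zero at a unique point $\boldsymbol{\beta}^{\star}$, the coefficient encoding of the tree HFD of $T_{\ell}(\bX)$. Being a convex quadratic with a unique minimizer, $L^{\star}$ is well-separated on the compact set: for every $\epsilon > 0$ we have $\inf_{\|\boldsymbol{\beta} - \boldsymbol{\beta}^{\star}\| \geq \epsilon} L^{\star}(\boldsymbol{\beta}) > 0 = L^{\star}(\boldsymbol{\beta}^{\star})$. Combining this separation with the uniform convergence above, the standard argmin (M-estimation) consistency argument shows that any sequence of near-minimizers of $L_n$, in particular $\boldsymbol{\beta}_n^{\star}$, converges in probability to $\boldsymbol{\beta}^{\star}$, which completes the proof after translating back to the functions.

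The hard part will be the well-separation step, i.e. turning the qualitative uniqueness of the minimizer of $L^{\star}$ (inherited from Theorem \ref{thm:hoeffding_trees}) into the strict separation required by the argmin theorem. Concretely, one must verify that the quadratic form defining $L^{\star}$ is positive definite on the admissible parameter space, which is precisely an identifiability statement for the discretized orthogonality constraints. Some care is also needed for cells of the Cartesian partition that are empty in the sample, although under Assumption \ref{assumption:distrib} every cell has positive probability and is therefore non-empty with probability tending to one, so these cells do not affect the limit.
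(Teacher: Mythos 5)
Your proposal is correct and follows essentially the same route as the paper's proof: the paper likewise reduces to M-estimation consistency (van der Vaart's Theorem 5.7), establishing pointwise convergence $L_n \overset{p}{\longrightarrow} L^{\star}$ by the law of large numbers (its Lemma \ref{lemma:loss}), upgrading to uniform convergence on the compact set, and using that $L^{\star}$ is a convex nonnegative function whose unique zero is the tree HFD by the uniqueness in Theorem \ref{thm:hoeffding_trees}. The only cosmetic difference is that you obtain uniform convergence directly from convergence of the quadratic's coefficients, while the paper passes from pointwise convergence to uniform convergence via convexity on the compact set; your flagged ``hard part'' (well-separation) is handled in both cases exactly as you describe, by compactness plus the uniqueness inherited from Theorem \ref{thm:hoeffding_trees}.
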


\section{Experiments} \label{sec:xp}

We first analyze an analytical case to better understand the behavior of TreeHFD, and then evaluate  TreeHFD performance with real datasets, using \href{https://github.com/ThalesGroup/treehfd}{\texttt{treehfd} package}. The main competitor of TreeHFD is the decomposition induced by Shapley values, which is implemented in \texttt{xgboost} as an extension of TreeSHAP \citep{lundberg2020local, bordt2023shapley}. Although the proposal of \citet{lengerich2020purifying} is a major step to estimate the HFD of tree ensembles, it requires to know the input distribution, and is therefore not adapted to our setting where only a data sample is available.

\paragraph{Analytical case.}
We consider a Gaussian random vector $\bX$ of dimension $p = 6$, where each component has unit variance and all pairs of variables have the same correlation $\rho = 1/2$. Then, the output $Y$ is defined by $Y = m(\bX) + \varepsilon$, where  $\smash{m(\bX) = \sin(2\pi X^{(1)}) + X^{(1)} X^{(2)} + X^{(3)} X^{(4)}}$, and $\varepsilon \sim \mathcal{N}(0, 0.5^2)$.
\begin{figure}
	\begin{center}
		\includegraphics[scale=0.41]{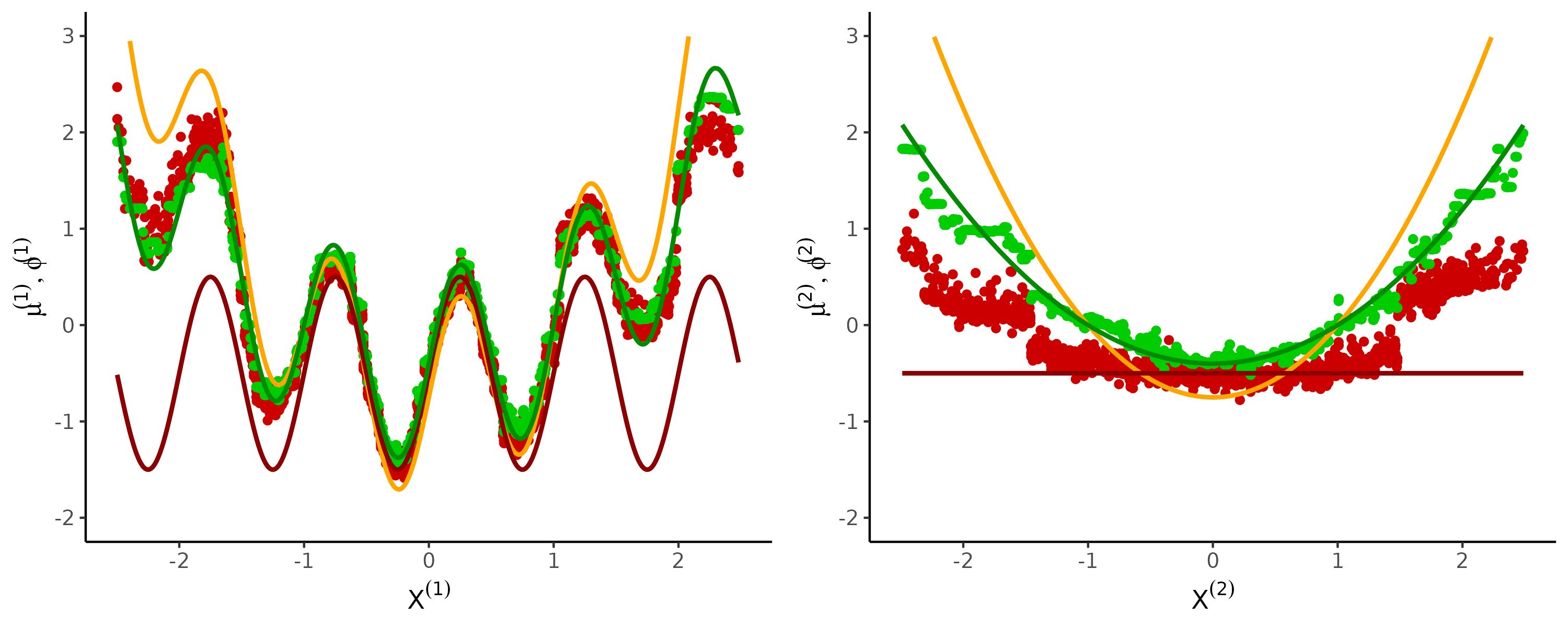}
		\caption{Main effects of the decompositions for $X^{(1)}$ and $X^{(2)}$. Solid lines provide the theoretical functions, with the \textcolor{Green}{HFD in green}, \textcolor{red}{int. SHAP in red}, and \textcolor{orange}{obs. SHAP in orange}. Green and red points are respectively the values provided by \textcolor{Green}{TreeHFD} and \textcolor{red}{TreeSHAP with interactions} for \texttt{xgboost}.} \label{fig:xp_analytical_main_1}
	\end{center}
\end{figure}
We study this analytical case because sinusoidal and linear functions are quite difficult to approximate with piecewise constant estimates, and it is possible to compute the analytical decompositions and introduce strong correlations within input variables, in the case of a Gaussian input vector. In the S-Mat, we state the analytical functional decompositions of $m(\bX)$ provided by the HFD, interventional SHAP, and observational SHAP. Notice that for the HFD, the correlation between $\smash{X^{(1)}}$ and $\smash{X^{(2)}}$ (respectively $\smash{X^{(3)}}$ and $\smash{X^{(4)}}$) introduces main effects for these variables, to purify interactions. As expected from Theorem \ref{thm:shap}, the same functional components are null for the HFD and interventional SHAP, whereas observational SHAP have non-null components for all variable subsets, because of the input dependence. It is clear that all components of observational SHAP mix all terms of the original function $m$, and it is therefore difficult to understand the influence of each input from such a decomposition.
Next, we fit a boosted tree ensemble on a training dataset $\Dn$ of size $n = 5000$, using \texttt{xgboost} with $M = 100$ trees, and the default value for the other parameters. Then, we fit TreeHFD on the tree ensemble using $\Dn$, and display the result in Figure \ref{fig:xp_analytical_main_1}. Finally, we estimate the Mean Square Error (MSE) of each TreeHFD component, using the analytical formulas and an independent testing dataset. %The resulting TreeHFD functions are displayed in Figure \ref{fig:xp_analytical_main_1}. 
Besides, we perform the same experiment with TreeSHAP, and also using random forests (RF) instead of \texttt{xgboost}. Results are averaged over ten repetitions, and reported in Table \ref{table:xp_analytical_mse}, where standard deviations are negligible and thus omitted.
Hence, it is clear from Table \ref{table:xp_analytical_mse} that TreeHFD based on \texttt{xgboost} provides highly accurate estimates of the HFD, and outperforms the decomposition provided by TreeSHAP. This result is quite expected since TreeSHAP is not explicitly designed to target the HFD. However, Table \ref{table:xp_analytical_mse} also highlights a surprising phenomenon: TreeSHAP more accurately estimates the HFD than the theoretical decompositions induced by interventional SHAP and observational SHAP. This implies that TreeSHAP has strong connections with the HFD, and the real data cases will further confirm this trend. 
Importantly, the obtained TreeHFD components are almost hierarchically orthogonal, with correlation coefficients between interactions and the associated main effects of about $0.05$ or smaller.
We provide all experimental settings in the S-Mat, including additional experiments with various sample sizes, tree depths, and comparisons with \texttt{glex} \citep{hiabu2023unifying} and EBM \citep{nori2019interpretml}. In particular, we show that TreeHFD is not very sensitive to tree depth, and thus remains highly efficient for shallow trees.
\begin{table}
\caption{MSE for TreeHFD and TreeSHAP with respect to the specified target decompositions.}
\centering
\setlength{\tabcolsep}{3pt}
% \begin{tabular}{|c|c||c|c|c|c|c|c|c|c|c|}
\begin{tabular}{ccccccccccc}
  \hline
  \hline
   Algorithm & Target & $\eta^{(1)}$ & $\eta^{(2)}$ & $\eta^{(3)}$ & $\eta^{(4)}$ & $\eta^{(5)}$ & $\eta^{(6)}$ & $\eta^{(1,2)}$ & $\eta^{(3,4)}$ & Others \\
  \hline
  TreeHFD - \texttt{xgboost} & HFD & $0.02$ & $0.01$ & $0.02$ & $0.02$ & $0.002$ & $0.002$ & $0.04$ & $0.04$ & $\leq 0.01$ \\
  TreeHFD - RF & HFD & $0.15$ & $0.02$ & $0.01$ & $0.03$ & $0.004$ & $0.003$ & $0.10$ & $0.29$ & $\leq 0.01$ \\
  TreeSHAP - \texttt{xgboost} & HFD & $0.06$ & $0.23$ & $0.13$ & $0.12$ & $0.002$ & $0.002$ & $0.35$ & $0.27$ & $\leq 0.02$ \\
  TreeSHAP - \texttt{xgboost} & int. SHAP & $0.49$ & $0.16$ & $0.24$ & $0.25$ & $0.002$ & $0.002$ & $0.77$ & $0.82$ & $\leq 0.01$ \\
  TreeSHAP - \texttt{xgboost} & obs. SHAP & $0.35$ & $0.73$ & $0.51$ & $0.51$ & $0.50$ & $0.50$ & $0.68$ & $0.60$ & NA \\
  \hline 
  \hline
\end{tabular}
\label{table:xp_analytical_mse}
\end{table}

\begin{table}
\setlength{\tabcolsep}{2.5pt}
\caption{Dataset characteristics and performance of TreeHFD and TreeSHAP.}
\centering
% \begin{tabular}{|c|c|c|c|c|c c|c c|}
\begin{tabular}{cccccc cc c}
  \hline
  \hline
    Dataset & $n$ & $p$ & Accuracy &  Residual MSE & \multicolumn{2}{c}{Orthogonality} & \multicolumn{2}{c}{Local Variability} \\ 
    & & & \texttt{xgboost} & TreeHFD & TreeHFD & TreeSHAP & TreeHFD & TreeSHAP \\
  \hline
   Abalone & $4176$ & $9$ & $50\%$ & $1\%$ & $0.05$ & $0.85$ & $0.009$ & $0.09$ \\
   Airfoil & $1503$ & $5$ & $95\%$ & $2 \%$ & $0.02$ & $0.24$ & $2.10^{-5}$ & $0.04$ \\
   Bike Sharing & $17389$ & $8$ & $87\%$ & $3\%$ & $0.004$ & $0.13$ & $3.10^{-4}$ & $0.1$ \\
   Housing & $20640$ & $8$ & $83\%$ & $1\%$ & $0.05$ & $0.55$ & $7.10^{-4}$ & $0.05$ \\
   Concrete & $1030$ & $8$ & $94\%$ & $0.4\%$ & $0.003$ & $0.33$ & $0.01$ & $0.05$\\
   Nutrition & $2278$ & $7$ & $20\%$ & $3\%$ & $0.03$ & $0.43$ & $0.01$ & $0.06$ \\
   Parkinson & $5875$ & $19$ & $96\%$ & $1\%$ & $0.07$ & $0.18$ & $0.02$ & $0.1$ \\
   Power Plant & $9568$ & $4$ & $97\%$ & $0.1\%$ & NA & $0.9$ & $0.006$ & $0.02$ \\
   Superconduc. & $21263$ & $81$ & $92\%$ & $1\%$ & NA & NA & $0.007$ & $0.2$ \\
  \hline 
  \hline
\end{tabular}
\label{table:datasets}
\end{table}
\paragraph{Real data cases.}
We assess the performance of TreeHFD using nine real public datasets from the UCI repository  \citep{kelly_uci}, with the main characteristics summarized in Table \ref{table:datasets}---see the S-Mat for all details about the displayed metrics.
We first show that TreeHFD with interactions of second order (i.e. $d_I=2$) successfully estimates the HFD of the fitted $\texttt{xgboost}$ models, since the TreeHFD residual MSE is about $1\%$ of the output variance for all tested datasets, as reported in Table \ref{table:datasets}, and hierarchical orthogonality is well approximated by TreeHFD. Indeed, we compute the correlation coefficients between the interaction components and their associated main effects, and report the maximum absolute value for each dataset in Table \ref{table:datasets}, for both TreeHFD and TreeSHAP. Hence, the results show that the maximum absolute correlation for each dataset is about $0.05$ for TreeHFD or smaller, whereas it is frequently above $0.5$ for TreeSHAP. Therefore, TreeHFD components are almost hierarchical orthogonal, whereas main effects and interactions can be strongly entangled for TreeSHAP.
By construction, TreeHFD inherits the eventual overfitting of the initial tree ensemble, as for the ``Nutrition'' dataset.
Furthermore, we show that TreeSHAP generates quite noisy decompositions, which can be problematic for local interpretations. Indeed, a slight perturbation of the input may result in a large change in the output decomposition values. This phenomenon clearly undermines the local explanations provided by TreeSHAP. We quantify local variability as the mean variance of a functional component over the nearest neighbors of each point, and averaged over all main components---see the S-Mat for a formal definition. The obtained metric is reported in Table \ref{table:datasets} and shows that TreeHFD is locally much more stable than TreeSHAP for all tested cases.

\paragraph{Housing dataset.}
\begin{figure}
	\begin{center}
		\includegraphics[scale=0.41]{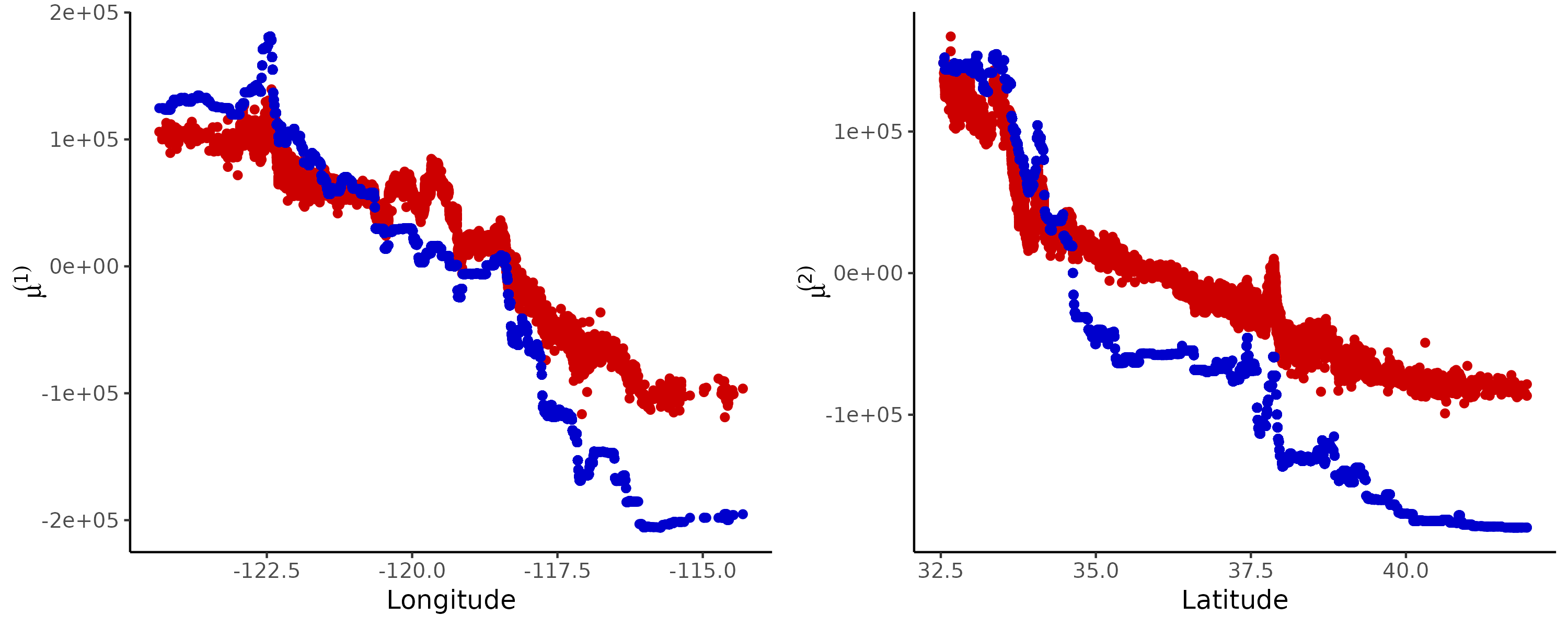}
		\caption{For the ``Housing'' dataset, main effects of ``Longitude'' and ``Latitude'' in the decompositions of respectively \textcolor{blue}{TreeHFD in blue} and \textcolor{red}{TreeSHAP with interactions in red}.} \label{fig:xp_housing_main_effects_X1_X2}
	\end{center}
\end{figure}
We focus on the California Housing dataset \citep{kelly_uci}, which gathers housing prices  with various characteristics.
We fit \texttt{xgboost} and TreeHFD, and display the main resulting components in Figure \ref{fig:xp_housing_main_effects_X1_X2}.
We see  that the ``Longitude'' component of TreeHFD clearly identifies the peak in housing prices, corresponding to the San Francisco Bay Area (-122$^{\circ}$25'), whereas TreeSHAP does not really detect it. Indeed, TreeSHAP decomposition is quite noisy, and the main effect of the ``Longitude'' variable is entangled with the interaction between ``Longitude'' and ``Latitude'' variables. For TreeSHAP, this interaction function has a variance of $16\%$ of the total variance $\V[T(\bX)]$ of the tree ensemble output, whereas it is only $4\%$ in the case of TreeHFD, because interactions are purified. From another perspective, the absolute correlation coefficients between the two main effects and the interaction are about $0.05$ for TreeHFD, close to the targeted null value of hierarchical orthogonality, whereas it is about $0.20$ for TreeSHAP.
Additionally, TreeSHAP hides in this interaction that house prices are lower in northern and eastern California, as shown by TreeHFD.

\paragraph{Additional datasets.}
The same analysis for the ``Bike Sharing'' and ``Superconductivity'' datasets \citep{kelly_uci} is provided in the S-Mat.
In the first case, the dataset gathers the number of bikes rented by hour from a US bike renting system, along with time and weather information. TreeHFD and TreeSHAP generate close decompositions of the bike rentals, but SHAP values are quite noisy, and can therefore be misleading when predictions are analyzed one by one. The identified patterns can be easily interpreted. For example, there are renting peaks at the beginning and the end of the day, when people commute to work, and a strong interaction between the variables ``Hour'' and ``Week day'', since the peak of bike rentals is in the middle of the day during the weekend. There is also a drop of bike rentals when it is too hot or too cold, as expected.
Finally, the ``Superconductivity'' dataset shows the high performance of TreeHFD for a quite large input dimension of $81$.

\section{Conclusion}

We have introduced the TreeHFD algorithm to estimate the Hoeffding decomposition of a tree ensemble from a data sample, which therefore provides an efficient explainability method. To our best knowledge, TreeHFD is the first algorithm to accurately estimate the HFD in standard machine learning settings, where only a data sample is available. Furthermore, we have empirically established strong connections between TreeSHAP and the HFD. In future work, conducting a theoretical analysis of this connection will deepen the understanding of the widely used TreeSHAP method.

\paragraph{Limitations.}
The main limitation of TreeHFD is the high computational complexity for large tree depths, which can be efficiently mitigated using pruning, controlled by two hyperparameters. Additionally, TreeHFD requires the access to the internal tree structures, as provided by most open source software, but not always by proprietary tools.

\begin{ack}

We thank the reviewers for their insightful comments and suggestions.
Clément Bénard has received support from the GATSBII project (ref. ANR-24-CE23-6645), funded by the ANR (french National Research Agency), and the FaRADAI project (ref. 101103386), funded by the European Commission under the European Defence Fund (EDF-2021-DIGIT-R).

\end{ack}

\bibliographystyle{apalike}
\bibliography{biblio}

%%%%%%%%%%%%%%%%%%%%%%%%%%%%%%%%%%%%%%%%%%%%%%%%%%%%%%%%%%%%
\newpage
\appendix

\centerline{\textbf{\LARGE{Supplementary Material (S-Mat)}}}

\section{Categorical and discrete variables}

Discrete and categorical variables are straightforward to handle in TreeHFD algorithm, even if we focus on continuous inputs for the sake of clarity. Indeed, discrete variables are considered as numeric ones by the initial tree ensemble. For example, if a variable $\smash{X^{(1)}}$ takes values in $\{0.2, 0.4, 0.8\}$, a tree can typically build the partition $\{[0, 0.3], [0.3, 0.6], [0.6, 1]\}$, with splits at the middle of two values of $\smash{X^{(1)}}$. Then, the associated distribution is constant in each cell, and the values are given by the weights of the initial discrete distribution. 
For categorical variables, we follow the approach of two of the main tree ensemble implementations, \texttt{xgboost} \citep{chen2016xgboost} for boosted trees and \texttt{ranger} \citep{ranger2017} for random forests, using one-hot-encoding to get binary discrete variables, naturally handled by TreeHFD as explained before.

\section{Additional experiments and settings} \label{sec_smat:xp}

Experiments were conducted with a standard computer machine with Ubuntu OS and the following main 
characteristics: Intel Core i5 CPU ($2.30$ GHz) with $6$ cores and $16$ GB of RAM.
Notice that we use \texttt{xgboost} software in the experiments, in accordance with its Apache License  2.0.

\subsection{Competitors}

\paragraph{Shapley values.}
As mentioned in the introduction, \citet{herren2022statistical}, \citet{bordt2023shapley}, and \citet{hiabu2023unifying} made strong connections between functional decompositions of black-box models and Shapley values. In particular, \citet{bordt2023shapley} introduce $n$-Shapley values as a generalization to all subgroups of variables, which then induces a functional decomposition. On the other hand, \citet{herren2022statistical} and \citet{hiabu2023unifying} respectively show how to connect observational SHAP and interventional SHAP to functional decompositions with practical estimates.
Additionally, Linear TreeSHAP \citep{yu2022linear} and TreeSHAP-IQ \citep{muschalik2024beyond, muschalik2024shapiq} improve the computational complexity of the original TreeSHAP implementation \citep{lundberg2020local}, and target the same quantities. TreeSHAP-IQ can also provide high-order interactions. 

Overall, these algorithms target observational SHAP or interventional SHAP, but not the Hoeffding functional decomposition, and are therefore not direct competitors of TreeHFD, with the notable exception of TreeSHAP, as discussed in the article. Although SHAP values have demonstrated great success to explain machine learning algorithms, \citet{kumar2020problems} show that both observational SHAP and interventional SHAP have quite strong limitations, while the HFD is an interesting alternative to overcome these problems. In particular, TreeHFD shares the causal variable selection of interventional SHAP, but is based on the original distribution of the inputs, whereas interventional SHAP may extrapolate in areas where the tree ensemble has a low accuracy. Furthermore, observational SHAP introduces functional components for exogenous variables because of the input dependence, and can thus generate decompositions with a very high number of components, which are hardly interpretable.

In the next section, we run the analytical case with the algorithm from \citet{hiabu2023unifying}, implemented in the \texttt{R}-package \texttt{glex} available online, which provides efficient estimates of the functional decompositions induced by interventional SHAP, as we will see. We do not run experiments with the promising TreeSHAP-IQ algorithm because it computes the same quantities as TreeSHAP implemented in \texttt{xgboost} for main effects and second-order interactions, and the computational time of TreeSHAP-IQ is high. Indeed, although its computational complexity is highly improved over the original TreeSHAP, TreeSHAP-IQ is designed to explain a specific prediction, and consequently, predictions are not vectorized in \texttt{shapiq} implementation \citep{muschalik2024shapiq}, which does not use a compiled language. Therefore, \texttt{shapiq} is efficient to compute a single prediction, which takes only a few seconds for the analytical case (with the machine described at the beginning of the section), but running all predictions one by one with $n=5000$ takes several hours for one of the ten repetitions, while it takes less than one minute with our \texttt{treehfd} implementation or TreeSHAP with \texttt{xgboost}.

\paragraph{Generalized additive models.}
The scope of the article is to estimate the HFD of black-box tree ensembles from a data sample, to provide an efficient post-hoc XAI method. Therefore, generalized additive models (GAMs) are out of the article scope, since they build transparent models in the first place. However, they also provide models in the same form of sums of low-order functions of the input variables. Hence, we mention two powerful and widely used GAMs: Explainable Boosting Machines (EBM) \citep{nori2019interpretml}, and GAMI-NET \citep{yang2021gami}. Both methods exhibit high predictive performance, quite often on par with boosted tree ensembles, but their targeted theoretical decomposition is unknown, which makes their interpretation more difficult.

EBM also takes advantage of boosted trees to provide GAMs with interactions of order two. In the next section, we run experiments to show that EBM can also estimate the HFD with a high accuracy. In a similar way as TreeSHAP, EBM seems to have strong connections with the HFD.
However, we will see in the analytical case that EBM can introduce functional components for exogenous variables, because of the input dependence. Therefore, EBM does not share the sparsity and causal variable selection properties of TreeHFD and TreeSHAP. EBM can also generate different decompositions than TreeHFD, and also face instability issues when the input dimension is quite large, as shown in the experiments with the ``Superconductivity'' dataset in Subsection \ref{sec_smat:real_data}. 

GAMI-NET uses neural networks to also provide GAMs with second-order interactions. However, interactions are included only if variables have main effects. Although experiments show that GAMI-NET is highly accurate \citep{yang2021gami}, it often does not target the HFD. For example, in the analytical case, if the Gaussian input vector is slightly modified to make $\smash{X^{(3)}}$ and $\smash{X^{(4)}}$ independent, then these two variables do not have main effects, and their interaction term will be set as null in GAMI-NET, instead of $\smash{X^{(3)} X^{(4)}}$ in the updated HFD.

\subsection{Analytical case}

\paragraph{Analytical decompositions.}
In Table \ref{table:analytical_decompositions}, we provide the analytical functional decompositions of $m(\bX)$ provided by the HFD, interventional SHAP, and observational SHAP \citep[Theorem $4$]{bordt2023shapley}. For the sake of clarity, some formulas are not displayed in the table and are given below.
As mentioned in the article, the correlation between $\smash{X^{(1)}}$ and $\smash{X^{(2)}}$ introduces a main effect for these two variables in the HFD, which is of increasing magnitude with the correlation $\rho$, whereas the variability of $\smash{\eta^{(1,2)}}$ reduces as $\rho$ increases. This is a consequence of the orthogonality constraints leading to pure interactions in the HFD, since $\smash{X^{(1)}}$ or $\smash{X^{(2)}}$ alone can partially estimate the term $\smash{X^{(1)}X^{(2)}}$, because of their correlation. Symmetrically, variables $\smash{X^{(3)}}$ and $\smash{X^{(4)}}$ have the same behavior. Notice that constant main effects are also introduced for these variables in the case of interventional SHAP.
As claimed in the article, the decomposition of observational SHAP is hard to interpret, since all components mix all terms of the initial function, because of the input dependence.
\begin{table}
\caption{Analytical functional decompositions ($C = \exp(-2 \pi^2 (1 - \rho^2))$). For $|J| = 1$, functions have $x$ as arguments, and $(x, z)$ for $|J| = 2$.}
\centering
\setlength{\tabcolsep}{2pt}
\begin{tabular}{|c|c|c|c|}
  \hline \hline
  $J$ & HFD & \begin{tabular}{@{}c@{}} Interventional \\ SHAP \end{tabular} & Observational SHAP  \\
  \hline
    $\emptyset$ & 2 $\rho$ &  $2 \rho$ & $2 \rho$ \\
    $\{1\}$ & $\sin(2\pi x) + \frac{\rho}{1 + \rho^2} (x^2 - 1)$ & $\sin(2\pi x) - \rho$  & $\sin(2 \pi x) + \rho(1 + \rho)(x^2 - 1)$ \\
    $\{2\}, \{3\}, \{4\}$ & $\frac{\rho}{1 + \rho^2} (x^2 - 1)$ & $- \rho$ & $C \sin(2 \pi \rho x) + \rho(1 + \rho)(x^2 - 1)$ \\
    $\{5\}, \{6\}$ & $0$ & $0$ & $C \sin(2 \pi \rho x) + 2\rho^2 (x^2 - 1)$ \\
    $\{1, 2\}, \{3, 4\}$ &  $\rho \frac{1 - \rho^2}{1 + \rho^2} 
        - \frac{\rho}{1 + \rho^2} (x^2 + z^2) + x \times z$ & $\rho + x \times z $ & $\phi^{(1,2)}, \phi^{(3,4)} \neq 0$ \\
    Others & $0$ & $0$ & $\phi^{(J)} \neq 0$ \\
  \hline 
  \hline
\end{tabular}
\label{table:analytical_decompositions}
\end{table}

The main interactions of observational SHAP decomposition are given by
\begin{align*}
    \phi^{(1,2)}(x, z) &= C_0 + C_{main} (x^2 + z^2) + C_{inter} x \times z - C_{\rho} \sin(2 \pi \rho x ) \\
    \phi^{(3,4)}(x, z) &=  C_0 + C_{main} (x^2 + z^2) + C_{inter} x \times z
                + C'_{\rho} \sin\Big(\frac{2 \pi \rho}{1 + \rho} (x + z)\Big) \\ &\quad - C_{\rho} \sin(2 \pi \rho x ) - C_{\rho} \sin(2 \pi \rho z ),
\end{align*}
where
\begin{align*}
    &C_0 = 2\rho^2 + \frac{\rho}{1 + 2\rho} \Big(1 - \frac{2 \rho^2}{1 + \rho}\Big), \qquad
    C_{main} = \frac{\rho^2}{(1 + \rho)^2} - \rho(1 + \rho), \\
    &C_{inter} = 1 + \frac{2 \rho^2}{(1 + \rho)^2}, \qquad 
    C_{\rho} = e^{-2 \pi^2 (1 - \rho^2)}, \qquad 
    C'_{\rho} = e^{-2 \pi^2 (1 - \frac{2\rho^2}{1 + \rho})}.
\end{align*}
We do not provide the other high-order interaction terms for observational SHAP to avoid tedious formulas, since these functions are not really meaningful and take the same form as $\smash{\phi^{(1,2)}}$ and $\smash{\phi^{(3,4)}}$.

\paragraph{Additional figures.}
We provide additional figures for the main case of TreeHFD combined with \texttt{xgboost}, presented in the article in Section \ref{sec:xp} for $n = 5000$, $M=100$ trees, and the default values for the other \texttt{xgboost} parameters. Notice that the obtained tree ensemble has a proportion of output explained variance of $90\%$ (noise variance is $5$\% of $\V[Y]$), and that the TreeHFD residual MSE is $2\%$ of $\V[T(\bX)]$. Hence, Figure \ref{fig:xp_analytical_main_2} provides the main effects estimated by TreeHFD and TreeSHAP for $\smash{X^{(3)}}$ and $\smash{X^{(6)}}$, and Figure \ref{fig:xp_analytical_interaction} gives the interaction function of $\smash{X^{(1)}}$ and $\smash{X^{(2)}}$ estimated by TreeHFD.
We finally specify that one run of TreeHFD for this analytical case with $n = 5000$ and $M=100$ trees takes less than $1$ minute with our \texttt{treehfd} implementation.
\begin{figure}
    \begin{center}
        \includegraphics[scale=0.5]{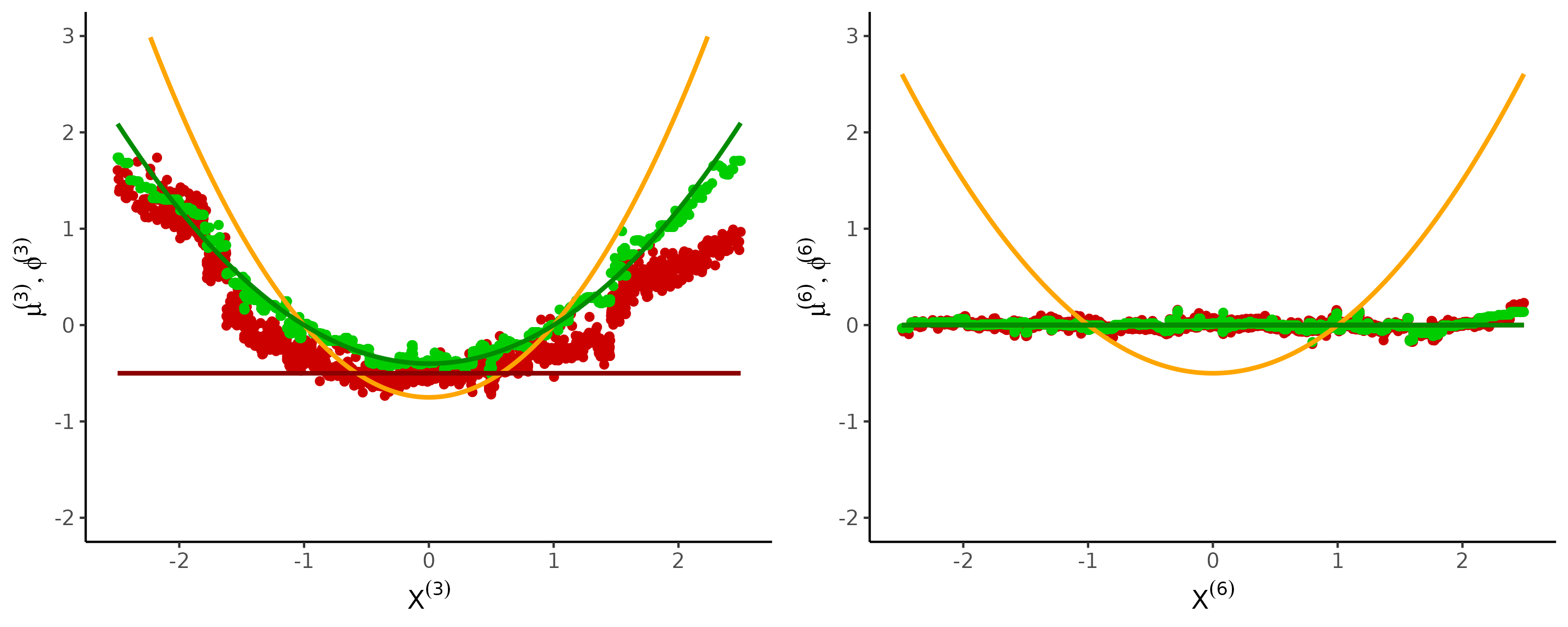}
            \caption{For the analytical case, main effects of the decompositions for $X^{(3)}$ and $X^{(6)}$. Solid lines provide the theoretical functions, with the \textcolor{Green}{HFD in green}, \textcolor{red}{int. SHAP in red}, and \textcolor{orange}{obs. SHAP in orange}. Green and red points are respectively the values provided by \textcolor{Green}{TreeHFD} and \textcolor{red}{TreeSHAP with interactions} for \texttt{xgboost}.} \label{fig:xp_analytical_main_2}
    \end{center}
\end{figure}
\begin{figure}
    \begin{center}
        \includegraphics[scale=0.35]{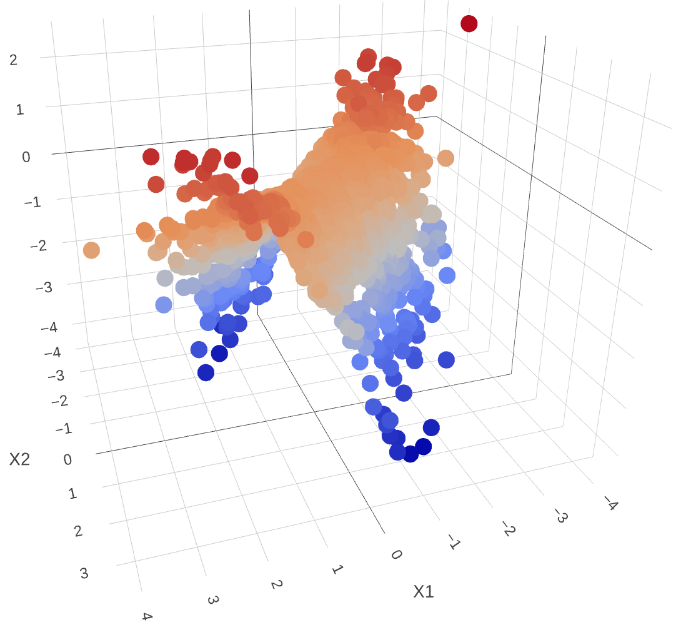}
        \caption{For the analytical case, interaction function of $X^{(1)}$ and $X^{(2)}$ estimated by       
             TreeHFD.} \label{fig:xp_analytical_interaction}
    \end{center}
\end{figure}

\paragraph{Sample size.}
We repeat the same experiments for the analytical case of Section \ref{sec:xp} with the sample size varying from $n=100$ to $n=10 000$. We report the results in Table \ref{table:xp_analytical_mse_nsample}, which displays the cumulated MSE over all components for TreeHFD and TreeSHAP based on \texttt{xgboost}, for each sample size. In the case of TreeHFD, the cumulated MSE decreases as the sample size increases, as expected from the convergence result of Theorem \ref{thm:eta_convergence}, whereas TreeSHAP MSE is quite constant for $n \geq 1000$. This suggests that TreeSHAP does not converge towards the HFD, although it more accurately estimates the HFD than the decomposition induced by observational and interventional SHAP.
\begin{table}
\caption{Cumulated MSE of the HFD components for TreeHFD and TreeSHAP with increasing sample sizes, for the analytical case (with the standard deviations in brackets).}
\setlength{\tabcolsep}{6pt}
\centering
\begin{tabular}{cccccccc}
  \hline
  \hline
  Sample size $n$ & $100$ & $500$ & $1000$ & $2000$ & $3000$ & $5000$ & $10 000$  \\
  \hline
  TreeHFD & 1.94 \tiny{(0.4)}  & 0.79 \tiny{(0.1)} & 0.55 \tiny{(0.05)} &  0.37 \tiny{(0.03)} & 0.28 \tiny{(0.02)} & 0.21 \tiny{(0.02)} & 0.14 \tiny{(0.007)} \\
  TreeSHAP & 2.38 \tiny{(0.7)} & 1.32 \tiny{(0.2)} & 1.26 \tiny{(0.09)} & 1.20 \tiny{(0.09)} & 1.20 \tiny{(0.08)} & 1.24 \tiny{(0.10)}  & 1.27 \tiny{(0.08)} \\
  \hline 
  \hline
\end{tabular}
\label{table:xp_analytical_mse_nsample}
\end{table}

\paragraph{Tree depth.}
TreeHFD relies on the discretization provided by the Cartesian tree partitions. The number of cells for each tree is a direct consequence of the tree depth. Hence, we analyze the cumulated MSE for the analytical case with \texttt{xgboost} and the settings of Section \ref{sec:xp}, but varying tree depths (instead of the default value of $6$ used in the article). The results are displayed in Table \ref{table:xp_analytical_mse_depth}, and show that even for very shallow trees (a depth of $3$ for example), TreeHFD performance is high, and is overall not very sensitive to the tree depth. Only in the case of a tree depth of $1$, interactions are not estimated and the cumulated MSE is strongly degraded.
\begin{table}
\caption{Cumulated MSE of the HFD components for TreeHFD with \texttt{xgboost} for various tree depths, for the analytical case (with the standard deviations in brackets).}
\setlength{\tabcolsep}{5pt}
\centering
\begin{tabular}{cccccccc}
  \hline
  \hline
  Tree depth & $1$ & $2$ & $3$ & $4$ & $5$ & $6$ & $7$ \\
  \hline
  Cumulated MSE & 1.49 \tiny{(0.05)} & 0.34 \tiny{(0.02)}  & 0.26 \tiny{(0.2)} & 0.20 \tiny{(0.02)} &  0.19 \tiny{(0.01)} & 0.20 \tiny{(0.01)} & 0.26 \tiny{(0.02)} \\
  \hline 
  \hline
\end{tabular}
\label{table:xp_analytical_mse_depth}
\end{table}

\paragraph{TreeHFD with random forests.} 
As briefly stated in the article, we perform the same experiment using random forests with the \texttt{ranger} implementation instead of \texttt{xgboost}, combined with TreeHFD, and results are reported in Table \ref{table:xp_analytical_mse}. The number of trees is still set to $100$, and tree depth is limited to ten, i.e. $d_T = d_V = 10$ in TreeHFD, since the reduction of the forest accuracy is small compared to fully grown trees for this value.
We observe that the accuracy is reduced in the case of TreeHFD based on random forests compared to \texttt{xgboost} for the main effect of $\smash{X^{(1)}}$ involving a sinusoidal function and also for the interaction components. Such behavior is not very surprising because of the greedy forest construction, which considers variable one by one for the split optimizations.

\paragraph{Comparisons with \texttt{glex}.}
We run the analytical case with the same settings than in the article using the algorithm from \citet{hiabu2023unifying}, implemented in the \texttt{R}-package \texttt{glex}. The results are displayed in Table \ref{table:xp_analytical_mse_glex}, along with the values of Table \ref{table:xp_analytical_mse} obtained for TreeHFD and TreeSHAP to enable comparisons. We do not display standard deviations, since they are negligible compared to the raw MSE values, as in Table \ref{table:xp_analytical_mse}.
As expected, \texttt{glex} is an efficient estimate of the decomposition induced by interventional SHAP, and strongly outperforms TreeSHAP for this specific task. Consequently, \texttt{glex} does not provide good estimates of the HFD. 
The MSE of \texttt{glex} for interventional SHAP is similar to the MSE of TreeHFD for the HFD for main effects. However, the MSE of \texttt{glex} is higher for interaction components than the MSE of TreeHFD. As already mentioned, interventional SHAP is more difficult to estimate accurately than the HFD, because the value function involves expectations with respect to input distributions that are different from the original one \citep{kumar2020problems}.
\begin{table}
\caption{MSE for \texttt{glex}, EBM, TreeHFD, and TreeSHAP with respect to the specified target decompositions, for the analytical case.}
\centering
\setlength{\tabcolsep}{3pt}
\begin{tabular}{ccccccccccc}
  \hline
  \hline
   Algorithm & Target & $\eta^{(1)}$ & $\eta^{(2)}$ & $\eta^{(3)}$ & $\eta^{(4)}$ & $\eta^{(5)}$ & $\eta^{(6)}$ & $\eta^{(1,2)}$ & $\eta^{(3,4)}$ & Others \\
  \hline
  TreeHFD - \texttt{xgboost} & HFD & $0.02$ & $0.01$ & $0.02$ & $0.02$ & $0.002$ & $0.002$ & $0.04$ & $0.04$ & $\leq 0.01$ \\
  EBM & HFD & $0.03$ & $0.01$ & $0.01$ & $0.01$ & $0.02$ & $0.01$ & $0.03$ & $0.03$ & $\leq 0.01$ \\
  \texttt{glex} - \texttt{xgboost} & HFD & $0.49$ & $0.47$ & $0.46$ & $0.47$ & $0.003$ & $0.003$ & $1.2$ & $1.2$ & $\leq 0.02$ \\
  TreeSHAP - \texttt{xgboost} & int. SHAP & $0.49$ & $0.16$ & $0.24$ & $0.25$ & $0.002$ & $0.002$ & $0.77$ & $0.82$ & $\leq 0.01$ \\
  \texttt{glex} - \texttt{xgboost} & int. SHAP & $0.03$ & $0.01$ & $0.01$ & $0.01$ & $0.003$ & $0.003$ & $0.13$ & $0.12$ & $\leq 0.02$ \\
  \hline 
  \hline
\end{tabular}
\label{table:xp_analytical_mse_glex}
\end{table}

\paragraph{Comparisons with EBM.}
We run the analytical case with the same settings than in the article, using the EBM algorithm \citep{nori2019interpretml} with the default parameters. The results are displayed in Table \ref{table:xp_analytical_mse_glex}, and show that EBM has an accuracy close to TreeHFD to estimate the HFD, even slightly better for several components, but uses $25 000$ boosting rounds instead of the arbitrary number of $100$ rounds for the \texttt{xgboost} model, broken down by TreeHFD. More importantly, the two main effect components of $\smash{X^{(5)}}$ and $\smash{X^{(6)}}$ that are null in the analytical HFD, have a MSE ten times higher for EBM than for TreeHFD and TreeSHAP. The components generated by EBM are displayed in Figure \ref{fig:xp_analytical_ebm}, which shows that EBM introduces non-null functions for these components, because of the correlation between $\smash{X^{(5)}}$ and $\smash{X^{(6)}}$ and the other input variables. We deepen this analysis by running the analytical case with EBM for increasing sample sizes. Table \ref{table:xp_analytical_mse_ebm} provides the cumulated MSE for $\smash{X^{(5)}}$ and $\smash{X^{(6)}}$ components for both EBM and TreeHFD for these various sample sizes, averaged over ten repetitions. While the MSE of TreeHFD decreases as the sample size increases as expected from Theorems \ref{thm:sparsity}, \ref{thm:shap}, and \ref{thm:eta_convergence}, the MSE of EBM is quite constant. Therefore, it seems that EBM does not converge towards the HFD, and does not share the properties of sparsity and causal variable selection that TreeHFD and TreeSHAP exhibit.
\begin{figure}
    \begin{center}
        \includegraphics[scale=0.5]{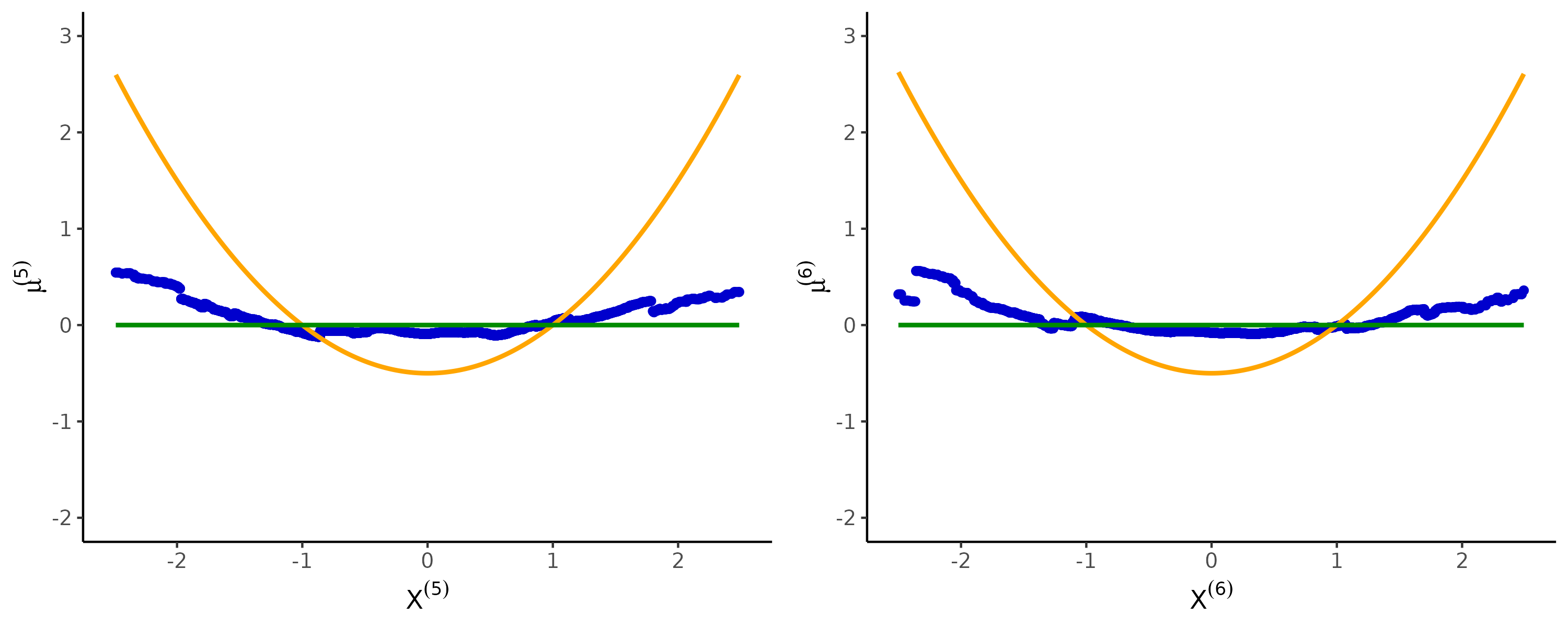}
            \caption{For the analytical case, main effects of EBM decompositions for $X^{(5)}$ and $X^{(6)}$. Solid lines provide the theoretical functions, with the \textcolor{Green}{HFD in green} and \textcolor{orange}{obs. SHAP in orange} (int. SHAP is also null and hidden by the HFD). Blue points are the values provided by \textcolor{blue}{EBM} decomposition.} \label{fig:xp_analytical_ebm}
    \end{center}
\end{figure}
\begin{table}
\caption{Cumulated MSE of $X^{(5)}$ and $X^{(6)}$ components for EBM and TreeHFD with respect to the HFD, for the analytical case with an increasing sample size (standard deviations in brackets).}
\centering
\setlength{\tabcolsep}{5pt}
\begin{tabular}{ccccc}
  \hline
  \hline
   Sample size $n$ & $10000$ & $20000$ &  $30000$ & $40000$ \\
  \hline
  TreeHFD - \texttt{xgboost} & $0.0020$ \tiny{($4.10^{-4}$)} & $0.0011$ \tiny{($1.10^{-4}$)} &  $6.1 \times 10^{-4}$ \tiny{($1.10^{-4}$)} &  $5.5 \times 10^{-4}$ \tiny{($1.10^{-4}$)}\\
  EBM & $0.029$ \tiny{($0.006$)} & $0.025$ \tiny{($0.002$)} & $0.024$ \tiny{($0.002$)} & $0.025$ \tiny{($0.002$)} \\
  \hline 
  \hline
\end{tabular}
\label{table:xp_analytical_mse_ebm}
\end{table}

\subsection{Real data cases} \label{sec_smat:real_data}

We first provide the definitions of the metrics involved in Table \ref{table:datasets}. Then, we analyze the ``Superconductivity'' and ``Bike sharing'' datasets. The first one shows the high performance of TreeHFD with a quite large input dimension, while the second dataset is an example where TreeHFD and TreeSHAP generate close decompositions. Therefore, the decompositions of the two algorithms can be similar or different depending on the data properties.

\subsubsection{Definition of performance metrics}

\paragraph{Accuracy.}
The accuracy of \texttt{xgboost} models are the proportion of output explained variance, estimated by $10$-fold cross-validation.

\paragraph{Residual MSE.}
The residual error of TreeHFD is defined as the MSE of TreeHFD predictions with respect to the tree ensemble predictions, normalized by the output variance $\V[T(\bX)]$. This ratio gives the magnitude of TreeHFD errors with respect to the original tree ensemble.

\paragraph{Hierarchical orthogonality.}
We evaluate the hierarchical orthogonality of the functional decompositions provided by TreeHFD and TreeSHAP, through the nine real data cases.
For each dataset, we consider interaction components with a non-negligible influence in the decomposition, and thus select components with a variance of at least $1\%$ of the output variance $\V[T(\bX)]$. Then, we compute the correlation coefficients between the interaction components and their associated main effects, and report the maximum absolute value for each dataset in Table \ref{table:datasets}, for both TreeHFD and TreeSHAP. We only consider interactions with a non-null variance, since the correlation coefficient is undefined otherwise. When there is no interaction component above the threshold of $1\%$, we display ``NA'' in Table \ref{table:datasets}. We also recall that a functional decomposition is hierarchically orthogonal when all these correlations are null. 

\paragraph{Local variability.}
We quantify the local variability of the two algorithms using the following procedure. We consider a given input variable, and for each observation of the dataset, we retrieve the ten nearest neighbors with respect to this variable, and compute the variance of the predictions from the associated component of the decomposition over these ten points. Then, the average variance over all observations of the dataset is derived, and normalized by the global variance of the functional component. Finally, this metric is averaged across all main effects. The obtained metric is reported in Table \ref{table:datasets} in the ``Local Variability'' columns, and shows that TreeHFD is locally much more stable than TreeSHAP for all tested datasets. Notice that the metric is not very sensitive to the number of neighbors, and we get similar results using $5$ or $20$ instead of $10$.

\subsubsection{Analysis of additional datasets}

\paragraph{Housing dataset.}
For the ``Housing'' dataset presented in the article, we provide the additional Figure \ref{fig:xp_housing_main_effects_X6_X8} for the main effects of the number of rooms and population in the house block, estimated by TreeHFD and TreeSHAP. We observe quite strong differences between the two algorithms.
\begin{figure}
	\begin{center}
		\includegraphics[scale=0.5]{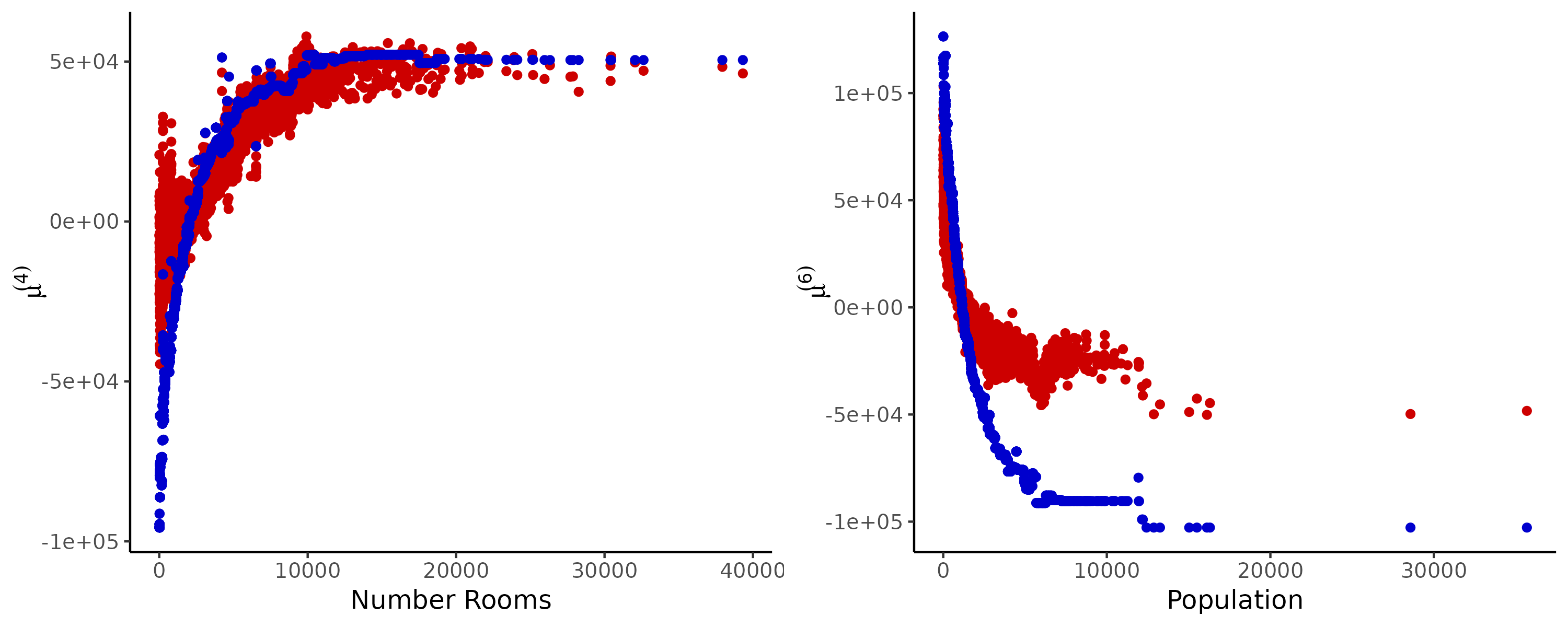}
		\caption{For the ``Housing'' dataset, main effects of the number of rooms and population in the house block in the decompositions of respectively \textcolor{blue}{TreeHFD in blue} and \textcolor{red}{TreeSHAP in red}.} \label{fig:xp_housing_main_effects_X6_X8}
	\end{center}
\end{figure}

\paragraph{Superconductivity dataset.}
The ``Superconductivity'' dataset \citep{kelly_uci} is built from the extraction of $81$ features from the chemical formula of $21263$ superconductors, based on various characteristics, such as thermal conductivity, atomic radius, valence, electron affinity, and atomic mass. The output to be predicted is the superconducting critical temperature. We first fit a \texttt{xgboost} model with all default parameters and $100$ trees, and then, TreeHFD is run with $d_I = 2$ to estimate the HFD of the resulting ensemble. The \texttt{xgboost} model has a proportion of output explained variance of $92\%$ in this case, and the TreeHFD residual MSE is $1\%$ of $\V[T(\bX)]$. 
Then, we display the resulting decomposition in Figures \ref{fig:xp_superconductivity_main_effects_X28_X68} and \ref{fig:xp_superconductivity_interaction_X10_X68}. 
Again, we observe that TreeSHAP decomposition is quite noisy.
On the other hand, TreeHFD exhibits clear functional relations for both main effects and interactions, although the input dimension of is quite large with $81$ variables. We also observe that TreeHFD and TreeSHAP provide quite different decompositions, and the targeted quantities are only clearly defined for TreeHFD.

Finally, we run a last experiment to illustrate that EBM and TreeHFD can also generate different decompositions, and EBM can be unstable. We randomly split the dataset in two halves, and compute both EBM and TreeHFD decompositions with each subsample. Figure \ref{fig:xp_superconductivity_ebm} shows the resulting component for $\smash{X^{(68)}}$, with EBM on the left and TreeHFD on the right. While the obtained TreeHFD component is stable across the two subsamples, it is not the case for EBM, which provides different functions.
\begin{figure}
	\begin{center}
		\includegraphics[scale=0.5]{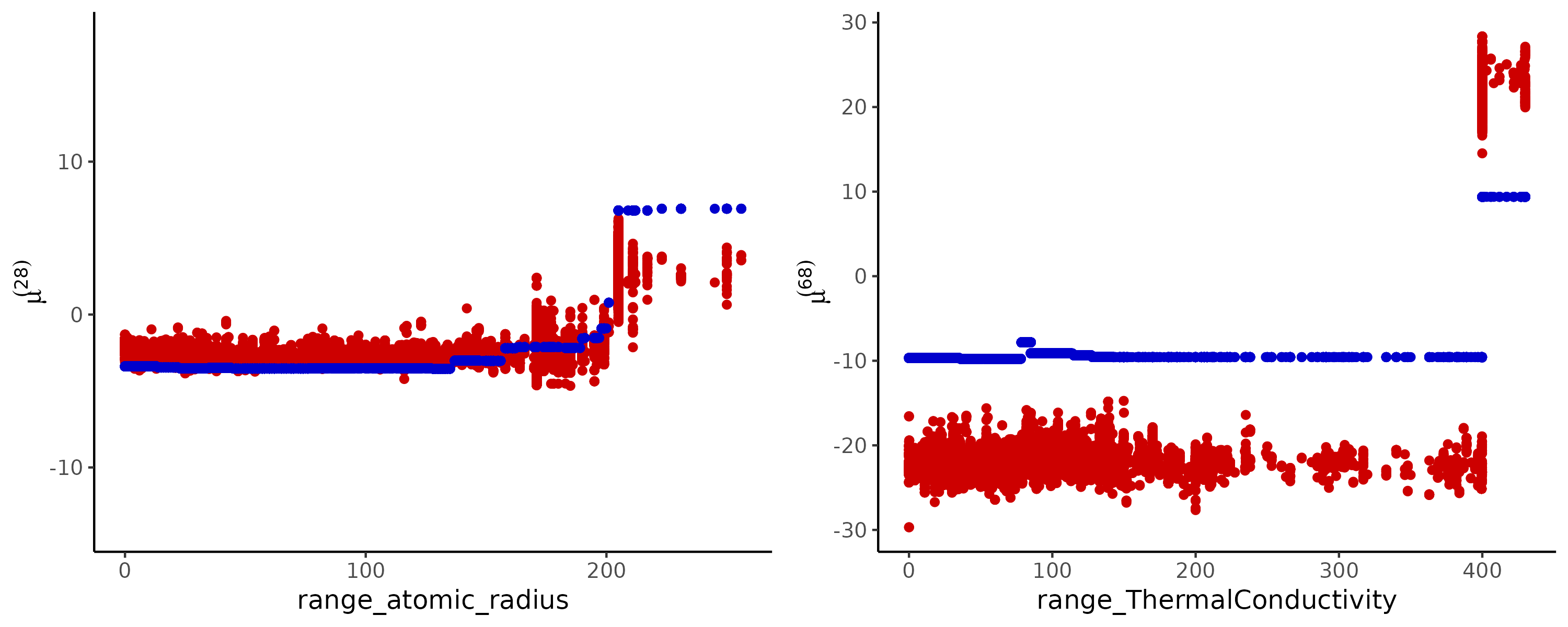}
		\caption{For the ``Superconductivity'' dataset, main effects of $X^{(28)}$ and $X^{(68)}$ in the decompositions of respectively \textcolor{blue}{TreeHFD in blue} and \textcolor{red}{TreeSHAP in red}.} \label{fig:xp_superconductivity_main_effects_X28_X68}
	\end{center}
\end{figure}
\begin{figure}
	\begin{center}
		\includegraphics[scale=0.4]{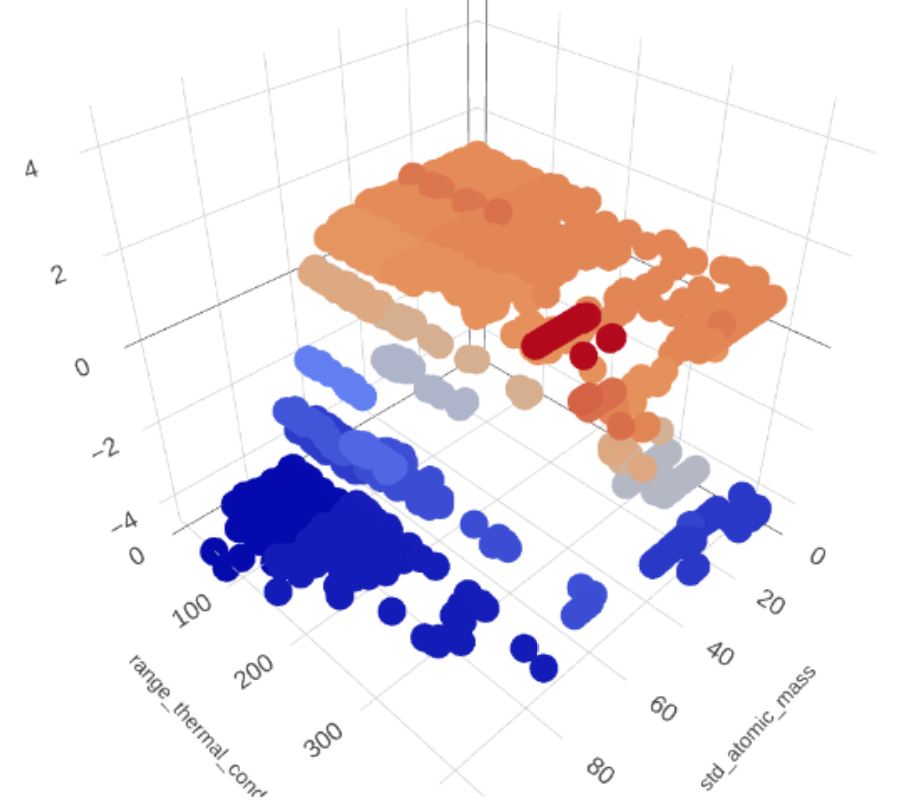}
            \includegraphics[scale=0.4]{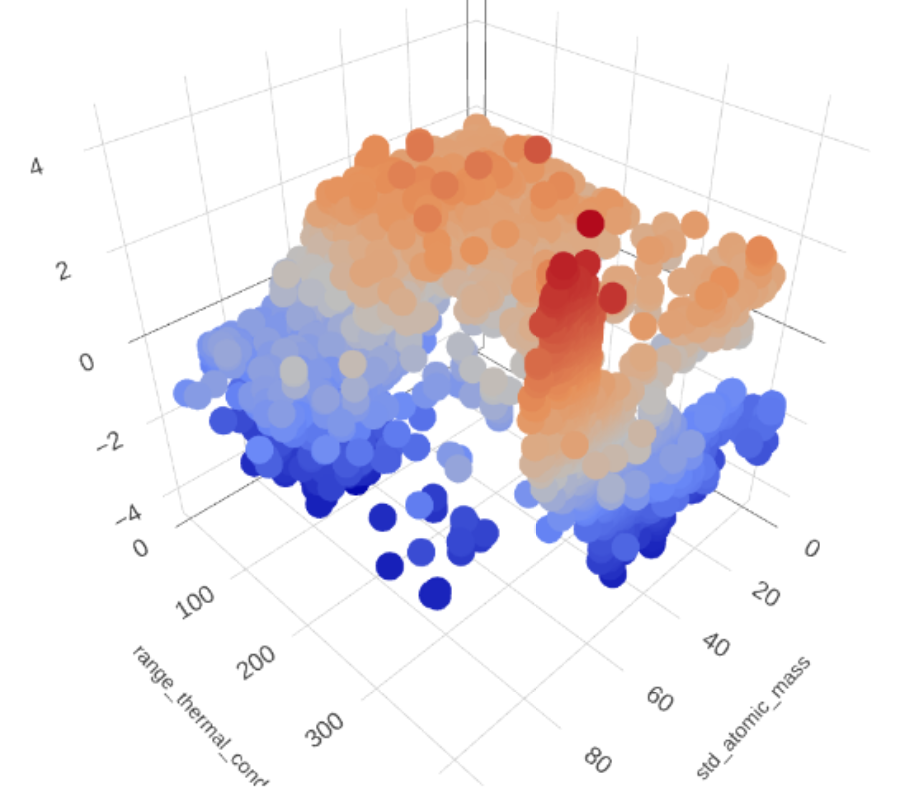}
		\caption{For the ``Superconductivity'' dataset, interaction between $X^{(10)}$ and $X^{(68)}$ in the decomposition, obtained with TreeHFD on the left, and TreeSHAP on the right.} \label{fig:xp_superconductivity_interaction_X10_X68}
	\end{center}
\end{figure}
\begin{figure}
	\begin{center}
		\includegraphics[scale=0.45]{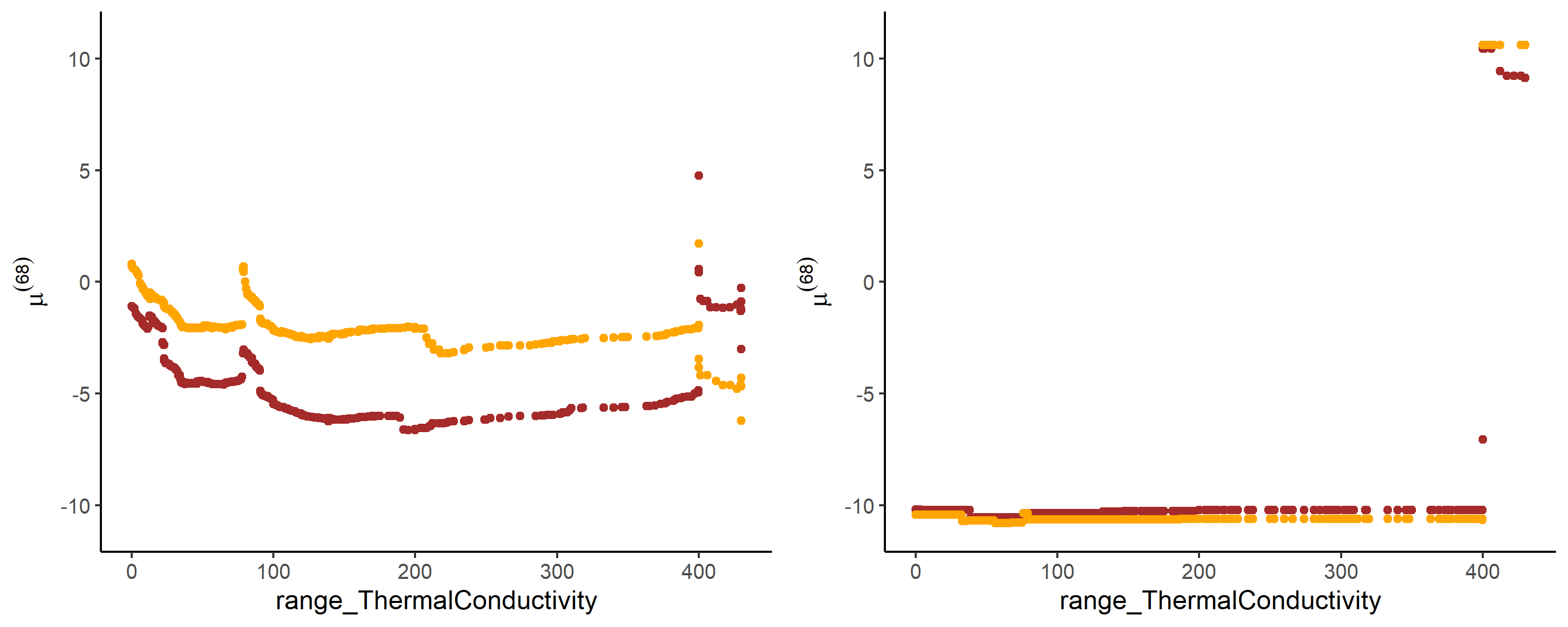}
		\caption{For the ``Superconductivity'' dataset, main effect of $X^{(68)}$ in the decompositions of respectively EBM on the left and TreeHFD on the right. The components obtained for the two subsamples are displayed in orange and brown on the same graph.} \label{fig:xp_superconductivity_ebm}
	\end{center}
\end{figure}

\paragraph{Bike Sharing dataset.} The ``Bike sharing'' dataset \citep{kelly_uci} gathers data from a US bike renting system in 2011 and 2012. The number of bikes rented by hour is collected, along with time (hour, week day, holiday, season) and weather information (normalized feeling temperature, humidity, wind speed, weather conditions).
We apply the same procedure than for the other datasets, and the proportion of output explained variance of the boosted tree ensemble is $87\%$, and the TreeHFD residual MSE is $3\%$ of the output variance $\V[T(\bX)]$. 
Finally, predictions are computed for all HFD components, and displayed in Figures \ref{fig:xp_bike_main_effects} and \ref{fig:xp_bike_interaction}, 
along with comparisons with the decomposition induced by TreeSHAP with interactions. We see that TreeHFD and TreeSHAP generate close decompositions, but SHAP values are quite noisy, and can therefore be misleading when predictions are analyzed one by one.
Besides, the identified patterns can be easily interpreted. Indeed, there are renting peaks at the beginning and the end of the day, when people commute to work. Regarding the temperature, there is a drop of bike rentals when it is too hot or too cold, as expected.
The main interaction term is between the variables ``Hour'' and ``Week day'', and is illustrated in Figure \ref{fig:xp_bike_interaction}: During the weekend (days $0$ and $6$), the peak of bike rentals is in the middle of the day, instead of the morning and the evening for working days.
\begin{figure}
	\begin{center}
		\includegraphics[scale=0.45]{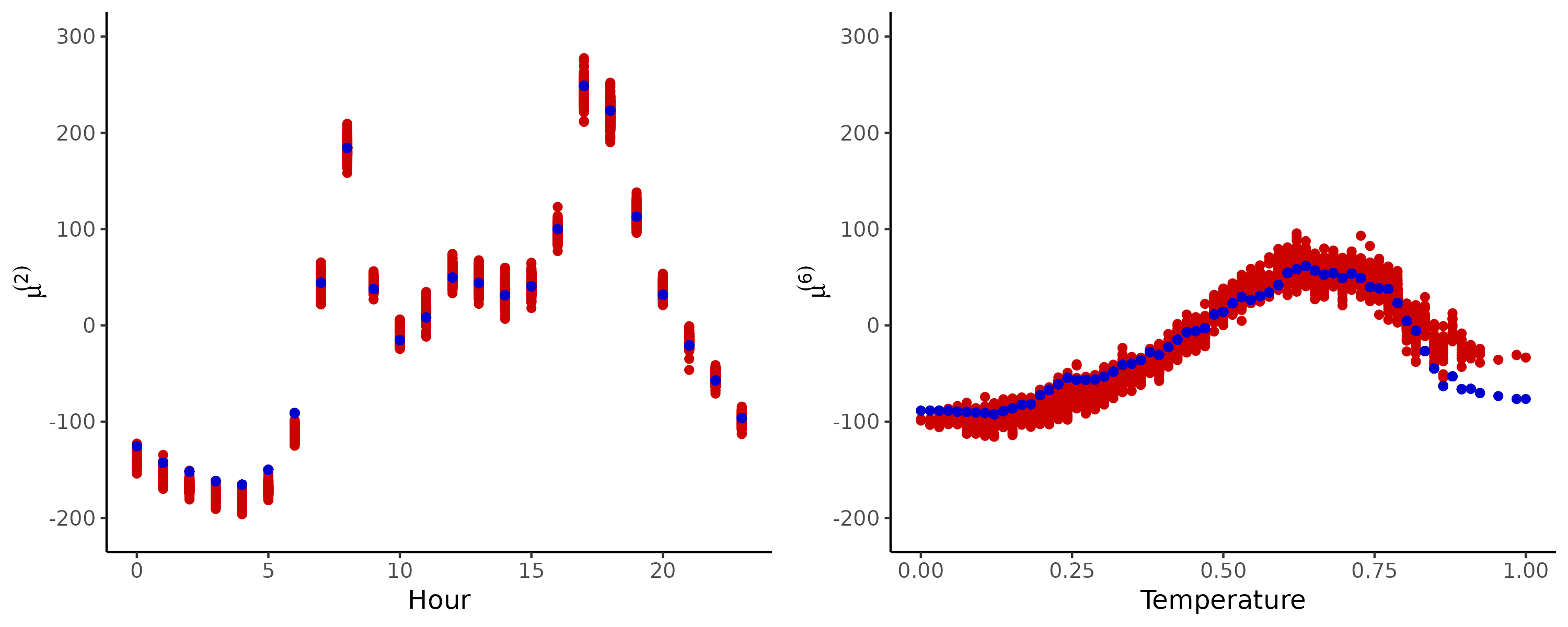}
		\caption{For the ``Bike Sharing'' dataset, main effects of ``Hour'' and ``Temperature'' in the decompositions of respectively \textcolor{blue}{TreeHFD in blue} and \textcolor{red}{TreeSHAP in red}.} \label{fig:xp_bike_main_effects}
	\end{center}
\end{figure}
\begin{figure}
	\begin{center}
		\includegraphics[scale=0.2]{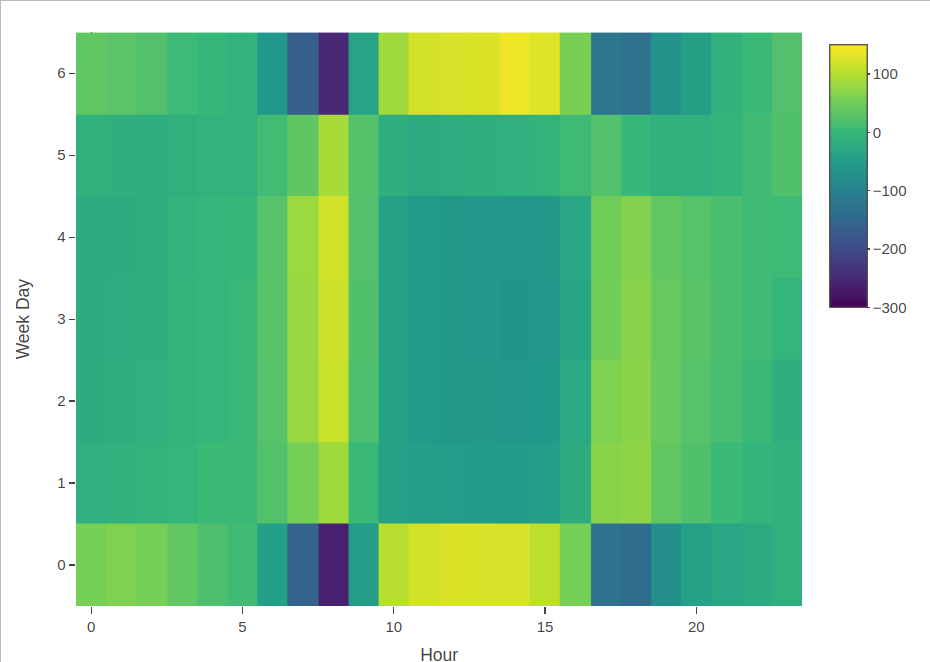}
        \includegraphics[scale=0.2]{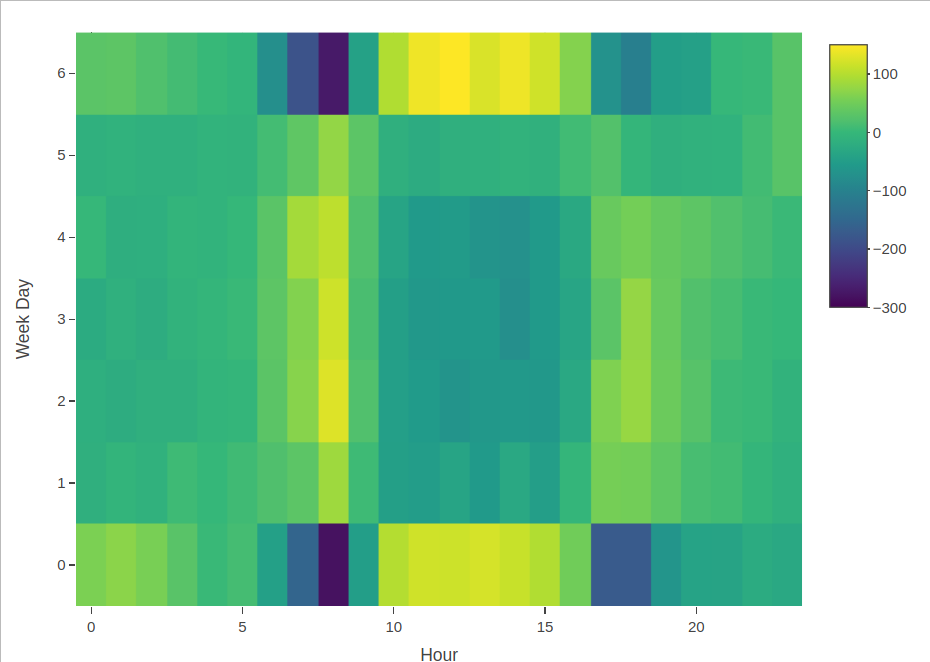}
		\caption{For the ``Bike Sharing'' dataset, interaction between ``Hour'' and ``Week day'', for TreeHFD on the left, and TreeSHAP on the right, both based on \texttt{xgboost}.} \label{fig:xp_bike_interaction}
	\end{center}
\end{figure}

\newpage

\section{Proofs of theorems}

\subsection{Proof of Theorem \ref{thm:hoeffding_trees}}

For $J \in \cP$, we denote by $f^{(J)}$ the marginal density of $\bX^{(J)}$.
For $\ell \in \cT_M$, we also introduce the density $\smash{f_{\ell}^{(J)}}$, defined as the density $f^{(J)}$ averaged over each cell of the partition $\smash{\cA^{(J)}_{\ell}}$, and then the random vector $\smash{\bZ^{(J)}_{\ell}}$ that admits density $\smash{f_{\ell}^{(J)}}$. For $J = \cV$, we simply write $\bZ_{\ell}$ and $f_{\ell}$ to lighten notations.

The cornerstone of the proof of Theorem \ref{thm:hoeffding_trees} is to apply the HFD of Theorem \ref{thm:hoeffding} to each tree $T_{\ell}(\bZ_{\ell})$, and to show that the discretized orthogonality constraints can also be written using $\bX$.

\begin{lemma}[Tree HFD] \label{lemma:hoeffding_trees}
    If Assumption \ref{assumption:distrib} is satisfied, then for $\ell \in \cT_M$, the tree prediction function $T_{\ell}(\bZ_{\ell})$ has a unique decomposition as the sum of the functions $\smash{\{\eta_{\ell}^{(J)}}\}_{J \in \cP}$, defined by
    \begin{align*}
        T_{\ell}(\bZ_{\ell}) = \sum_{J \in \cP} \eta_{\ell}^{(J)}(\bZ_{\ell}^{(J)}),
    \end{align*}
    and such that for all $J \in \cP$, $I \subset J$ with $I \neq J$, $\E[\eta_{\ell}^{(J)}(\bZ_{\ell}^{(J)}) | \bZ_{\ell}^{(I)}] = 0$.
    Additionally, for $J \in \cP$, the function $\smash{\eta_{\ell}^{(J)}}$ is piecewise constant over the partition $\smash{\cA^{(J)}_{\ell}}$.
\end{lemma}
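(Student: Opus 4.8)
The plan is to obtain the decomposition by applying Theorem~\ref{thm:hoeffding} directly to the pair $(T_{\ell}, f_{\ell})$, and then to upgrade the resulting components to piecewise constant ones. First I would check the hypotheses of Theorem~\ref{thm:hoeffding} for $\bZ_{\ell}$: its density $f_{\ell}$ is, by construction, the cell-wise average of $f$ over the partition $\cA_{\ell}$, so it is supported on $[0,1]^p$ and inherits the bounds $c_1 \leq f_{\ell} \leq c_2$ (averaging a function valued in $[c_1, c_2]$ stays in $[c_1, c_2]$), which is exactly Assumption~\ref{assumption:distrib}; moreover $T_{\ell}$ is bounded, hence square-integrable. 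Theorem~\ref{thm:hoeffding} then yields a unique family $\{\eta_{\ell}^{(J)}\}_{J \in \cP}$ with $T_{\ell}(\bZ_{\ell}) = \sum_{J \in \cP} \eta_{\ell}^{(J)}(\bZ_{\ell}^{(J)})$ and $\E[\eta_{\ell}^{(J)}(\bZ_{\ell}^{(J)}) \mid \bZ_{\ell}^{(I)}] = 0$ for $I \subset J$, $I \neq J$. This settles existence, uniqueness, and the orthogonality constraints; only the piecewise constant property remains.

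The key structural fact is that conditional expectation against $f_{\ell}$ preserves the piecewise constant product structure. I would prove the following claim: for any $g$ that is constant on each cell of $\cA_{\ell}$ and any $I \subset \cV$, the conditional expectation $\E[g(\bZ_{\ell}) \mid \bZ_{\ell}^{(I)}]$ depends only on $\bx^{(I)}$ and is constant on each cell of $\cA_{\ell}^{(I)}$. The computation is direct: on a cell $A^{(I)} \in \cA_{\ell}^{(I)}$, the conditional expectation equals the ratio of $\sum_{A'} g_{A^{(I)}, A'} \phi_{A^{(I)}, A'} |A'|$ over $\sum_{A'} \phi_{A^{(I)}, A'} |A'|$, where the sums range over the cells $A'$ of the complementary partition $\cA_{\ell}^{(\cV \setminus I)}$ and $g_{A^{(I)}, A'}$, $\phi_{A^{(I)}, A'}$ denote the constant values of $g$ and $f_{\ell}$ on the product cell $A^{(I)} \times A'$; both sums depend only on $A^{(I)}$, not on the location within it. As a by-product, taking $g \equiv 1$ shows that the marginal density of $\bZ_{\ell}^{(J)}$ is precisely $f_{\ell}^{(J)}$, so the constraints are consistently phrased over the partitions $\cA_{\ell}^{(J)}$.

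To conclude, I would invoke uniqueness. Let $\mathcal{W}$ denote the finite-dimensional space of functions on $[0,1]^p$ that are constant on each cell of $\cA_{\ell}$, equipped with the $L^2(f_{\ell})$ inner product; it contains $T_{\ell}$ and, by the structural claim, is stable under every operator $\E[\cdot \mid \bZ_{\ell}^{(I)}]$. Carrying out the hierarchically orthogonal decomposition inside $\mathcal{W}$ therefore produces components $\{\tilde{\eta}^{(J)}\}$ that depend only on $\bx^{(J)}$, are constant on the cells of $\cA_{\ell}^{(J)}$, and satisfy the same orthogonality constraints; since they form a valid HFD of $T_{\ell}(\bZ_{\ell})$ and the HFD is unique, they must coincide with $\{\eta_{\ell}^{(J)}\}$, which are thus piecewise constant over $\cA_{\ell}^{(J)}$.

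The step I expect to be the main obstacle is the existence of a hierarchically orthogonal decomposition whose components genuinely live in the piecewise constant subspaces $V_J \subset \mathcal{W}$ (functions of $\bx^{(J)}$ constant on the cells of $\cA_{\ell}^{(J)}$), not merely the stability of $\mathcal{W}$ under conditional expectations. The cleanest way to secure it is to observe that the discretized problem on $\mathcal{W}$ is itself an instance covered by the generalized Hoeffding existence theory: the induced measure assigns the strictly positive mass $\int_A f > 0$ to each cell $A$, mirroring Assumption~\ref{assumption:distrib}, so the restricted decomposition exists. One must then verify that each $V_J$ embeds into the larger $L^2$ space of functions of $\bx^{(J)}$, so that this restricted decomposition is admissible for the full problem and uniqueness applies.
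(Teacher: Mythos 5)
Your proposal is correct, and it shares the paper's endgame while taking a genuinely different route through the key step. Both arguments begin identically: apply the generalized HFD theorem to $T_{\ell}$ under the cell-averaged density $f_{\ell}$ (your verification that $f_{\ell}$ inherits the bounds $c_1 \leq f_{\ell} \leq c_2$, and that the $J$-marginal of $f_{\ell}$ is $f_{\ell}^{(J)}$, are points the paper leaves implicit), and both conclude by exhibiting a second, piecewise constant, hierarchically orthogonal decomposition of $T_{\ell}(\bZ_{\ell})$ and invoking uniqueness. The difference is how that competing decomposition is produced. The paper builds it constructively from the continuous one: it cell-averages each component, setting $\mu_{\ell}^{(J)}(\bx^{(J)}) = \sum_{A \in \cA_{\ell}^{(J)}} \E[\eta_{\ell}^{(J)}(\bZ_{\ell}^{(J)}) \mid \bZ_{\ell}^{(J)} \in A] \, \mathds{1}_{\bx^{(J)} \in A}$, then checks by direct computation that these averaged functions still sum to $T_{\ell}$ (using that $T_{\ell}$ is constant on each cell and that $f_{\ell}$ is uniform within each cell, so conditioning on a cell factorizes) and still satisfy the constraints (a tower-property calculation). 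You instead run the Hoeffding existence theory a second time, on the finite cell-level problem, and transfer the result back using your structural claim that conditional expectation under $f_{\ell}$ maps piecewise constant functions to piecewise constant functions; that same claim is exactly what upgrades the cell-level constraints $\E[\tilde{\eta}^{(J)}(\bZ_{\ell}^{(J)}) \mid \bZ_{\ell}^{(I)} \in A] = 0$ to the almost-sure constraints required for admissibility. What your route buys is a clean isolation of the finite-dimensional structure; what it costs is that the discrete existence step cannot literally cite Theorem \ref{thm:hoeffding}, whose statement assumes a density on $[0,1]^p$ bounded above and below --- ``strictly positive mass on each cell'' is not the same hypothesis as Assumption \ref{assumption:distrib}, so the phrase ``mirroring Assumption \ref{assumption:distrib}'' hides a real (though repairable) citation gap. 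It is repairable because the general result of \citet{chastaing2012hoeffding} --- the very reference the paper's own proof invokes --- covers atomic product-structured measures whose cells all carry positive mass (their absolute-continuity and boundedness conditions hold trivially for finitely many atoms), or alternatively existence in the discrete setting can be established by elementary finite-dimensional projection arguments. With that one invocation made precise, your argument is complete; the paper's averaging construction avoids the issue entirely by never leaving the continuous setting.
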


\begin{lemma} \label{lemma:hfd_constraints}
    If Assumption \ref{assumption:distrib} is satisfied, and for $\ell \in \cT_M$ and $J \in \cP$, $\mu_{\ell}^{(J)}$ is a function defined on $[0,1]^{|J|}$ and is piecewise constant on $\smash{\cA_{\ell}^{(J)}}$, then for $I \subset J$ such that $I \neq J$ and $\smash{A \in \cA^{(I)}_{\ell}}$, we have
    \begin{align*}
        \E[\mu_{\ell}^{(J)}(\bX^{(J)}) | \bX^{(I)} \in A] = \E[\mu_{\ell}^{(J)}(\bZ_{\ell}^{(J)}) | \bZ_{\ell}^{(I)} \in A].
    \end{align*}
\end{lemma}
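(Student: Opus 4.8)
The plan is to expand both conditional expectations explicitly through the densities $f^{(J)}$ and $f_\ell^{(J)}$, and to exploit the fact that $\mu_\ell^{(J)}$, being constant on every cell of $\cA_\ell^{(J)}$, is blind to the within-cell variation of the density that the averaging $f \mapsto f_\ell$ removes. First I would write the left-hand side as the ratio
\[
\E[\mu_\ell^{(J)}(\bX^{(J)}) \mid \bX^{(I)} \in A] = \frac{\E[\mu_\ell^{(J)}(\bX^{(J)}) \, \mathds{1}_{\{\bX^{(I)} \in A\}}]}{\P(\bX^{(I)} \in A)},
\]
which is well defined since Assumption~\ref{assumption:distrib} gives $f \geq c_1 > 0$ and $A$ is a product of nondegenerate intervals, hence of positive measure. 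Integrating out the coordinates outside $J$ in the numerator reduces it to an integral of $\mu_\ell^{(J)}$ against the marginal $f^{(J)}$ over $[0,1]^{|J|}$, and the same reduction holds for the $\bZ_\ell^{(J)}$ side with $f_\ell^{(J)}$ in place of $f^{(J)}$.

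The key structural step is that $\cA_\ell^{(J)} = \bigotimes_{j \in J} \cA_\ell^{(j)}$ is a product partition, so the cells lying above a fixed $A \in \cA_\ell^{(I)}$ are exactly the sets $A \times B$ with $B \in \cA_\ell^{(J \setminus I)}$, each of which is itself a cell of $\cA_\ell^{(J)}$. Writing $\mu_\ell^{(J)} = \sum_{B} \beta_{A,B}\, \mathds{1}_{A \times B}$ on this region, the numerator becomes $\sum_{B} \beta_{A,B} \int_{A \times B} f^{(J)}$ and the denominator becomes $\P(\bX^{(I)} \in A) = \sum_{B} \int_{A \times B} f^{(J)}$; both are linear combinations of the cell masses $\int_{A \times B} f^{(J)}$. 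By the very definition of $f_\ell^{(J)}$ as the cell-average of $f^{(J)}$, the mass of each cell is preserved, $\int_{A \times B} f^{(J)} = \int_{A \times B} f_\ell^{(J)}$, so numerator and denominator each coincide with their $\bZ_\ell^{(J)}$ analogues, and the ratio yields the claimed identity.

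The hard part will be handling the subtlety that $\bZ_\ell^{(I)}$ on the right-hand side denotes the subvector of $\bZ_\ell^{(J)}$ rather than the independently defined vector of density $f_\ell^{(I)}$, so I must make sure that conditioning onto $\{\bZ_\ell^{(I)} \in A\}$ and averaging onto cells commute. This is precisely where the Cartesian product structure is indispensable: it guarantees that $\{\bX^{(I)} \in A\}$ corresponds, in the $J$-coordinates, to a union of \emph{full} cells $A \times B$ of $\cA_\ell^{(J)}$, so that the cell-constant function $\mu_\ell^{(J)}$ never meets a partial cell and cell-mass preservation applies term by term. As a byproduct one recovers that the $I$-marginal of $f_\ell^{(J)}$ equals $f_\ell^{(I)}$ (summing the constant cell values over $B$ with Lebesgue weights telescopes to $\frac{1}{|A|}\int_A f^{(I)}$), which reconciles the two readings of $\bZ_\ell^{(I)}$; on a non-product partition this commutation would fail, since $\{\bX^{(I)} \in A\}$ would cut across cells of $\cA_\ell^{(J)}$ and $\mu_\ell^{(J)}$ would couple with the discarded within-cell density variation.
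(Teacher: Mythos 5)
Your proposal is correct and follows essentially the same route as the paper's proof: expand the conditional expectation as a ratio of integrals against $f^{(J)}$, use the Cartesian product structure so that $\{\bX^{(I)} \in A\}$ is covered by full cells $A \times B$ with $B \in \cA_{\ell}^{(J \setminus I)}$, factor out the constant values of $\mu_{\ell}^{(J)}$, and invoke cell-mass preservation of the averaging $f^{(J)} \mapsto f_{\ell}^{(J)}$ to identify both numerator and denominator with their $\bZ_{\ell}$ counterparts. Your explicit verification that the $I$-marginal of $f_{\ell}^{(J)}$ equals $f_{\ell}^{(I)}$ is a welcome addition, since the paper uses this consistency only implicitly when writing $\P(\bZ_{\ell}^{(J)} \in A \times A^{(I)} \mid \bZ_{\ell}^{(I)} \in A^{(I)})$.
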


\begin{proof}[Proof of Theorem \ref{thm:hoeffding_trees}]

Let Assumption \ref{assumption:distrib} be satisfied.
We proceed in two steps, to first show the existence of the decomposition, and then its uniqueness.

\textbf{Existence.}
For each tree $\ell \in \cT_M$, we apply Lemma \ref{lemma:hoeffding_trees} to get the HFD for $T_{\ell}(\bZ_{\ell})$, i.e., a unique decomposition as the sum of the piecewise constant functions $\smash{\{\eta_{\ell}^{(J)}}\}_{J \in \cP}$, defined by
\begin{align*}
    T_{\ell}(\bZ_{\ell}) = \sum_{J \in \cP} \eta_{\ell}^{(J)}(\bZ_{\ell}^{(J)}),
\end{align*}
and such that for all $J \in \cP$, $I \subset J$ with $I \neq J$, $\E[\eta_{\ell}^{(J)}(\bZ_{\ell}^{(J)}) | \bZ_{\ell}^{(I)}] = 0$.
Next, recall that the random vectors $\smash{\bZ_{\ell}^{(J)}}$ have the same distribution support $[0,1]^{|J|}$ as the vectors $\smash{\bX^{(J)}}$, and consequently $\smash{\{\eta_{\ell}^{(J)}(\bX^{(J)})\}_J}$ are well-defined. 
Then, according to Lemma \ref{lemma:hfd_constraints}, the orthogonality constraints implies that, for $I \subset J$ with $I \neq J$ and $\smash{A \in \cA^{(I)}_{\ell}}$,
\begin{align*}
    \E[\eta_{\ell}^{(J)}(\bX^{(J)}) | \bX^{(I)} \in A] = 0.
\end{align*}
Therefore, we obtain the existence of the decomposition
\begin{align*}
    &T(\bX) = \sum_{J \in \cP} \eta^{(J)}(\bX^{(J)}), \\
    \textrm{with} \quad
    \eta^{(J)}(\bX^{(J)}) = & \sum_{\ell = 1}^M \eta_{\ell}^{(J)}(\bX^{(J)}), \quad %\textrm{and} \quad
    T_{\ell}(\bX) = \sum_{J \in \cP} \eta_{\ell}^{(J)}(\bX^{(J)}),
\end{align*}
and such that for all $\ell \in \cT_M$, $J \in \cP$, $I \subset J$ with $I \neq J$, $A \in \cA_{\ell}^{(I)}$, $\E[\eta_{\ell}^{(J)}(\bX^{(J)})  | \bX^{(I)} \in A] = 0$, and where $\smash{\{\eta_{\ell}^{(J)}(\bX^{(J)})\}}$ are piecewise constant on the associated partitions.

\textbf{Uniqueness.}
Notice that if the functions $\smash{\{\eta_{\ell}^{(J)}\}_J}$ are not required to be piecewise constant, the decomposition is not unique in the general case, because the orthogonality constraints are weakened compared to the original HFD formulation. Indeed, we can directly apply Theorem \ref{thm:hoeffding} to the function $T(\bX)$ to get another decomposition, where the resulting components may not be piecewise constant, depending on the variations of the density $f$.

On the other hand, if the functions $\smash{\{\eta_{\ell}^{(J)}\}}$ satisfy the decomposition and are piecewise constant, then according to Lemma \ref{lemma:hoeffding_trees}, the functions $\smash{\{\eta_{\ell}^{(J)}\}}$ are also the HFD of the trees for the vectors $\smash{\bZ_{\ell}^{(J)}}$, where the orthogonality constraints are transformed thanks to Lemma \ref{lemma:hfd_constraints}.
Since each HFD tree decomposition is unique, the target global decomposition is unique.

\end{proof}

\begin{proof}[Proof of Lemma \ref{lemma:hoeffding_trees}]
    We consider a given tree $\ell \in \cT_M$. Since, the tree prediction function $T_{\ell}$ is piecewise constant on $[0,1]^p$, $T_{\ell}$ is square-integrable.
    Then, we can directly apply Theorem $1$ from \citet{chastaing2012hoeffding} to the function $T_{\ell}(\bZ_{\ell})$ to get the existence of a unique set of functions $\smash{\{\eta_{\ell}^{(J)}} \}_{J \in \cP}$, such that the Hoeffding decomposition holds, i.e.,
    \begin{align} \label{eq:hoeffding}
        T_{\ell}(\bZ_{\ell}) = \sum_{J \in \cP} \eta_{\ell}^{(J)}(\bZ_{\ell}^{(J)}),
    \end{align}
    and for all $J \in \cP$, $I \subset J$ with $I \neq J$, $\smash{\E[\eta_{\ell}^{(J)}(\bZ_{\ell}^{(J)}) | \bZ_{\ell}^{(I)}] = 0}$.

    Now, we show that the functions $\eta_{\ell}^{(J)}$ are constant over $\cA_{\ell}^{(J)}$. Let $A$ be a cell of $\cA_{\ell}$.
    Using the HFD above, we have
    \begin{align*}
        \E[T_{\ell}(\bZ_{\ell}) | \bZ_{\ell} \in A] = \sum_{J \in \cP} \E[ \eta_{\ell}^{(J)}(\bZ_{\ell}^{(J)}) | \bZ_{\ell} \in A].
    \end{align*}
    By definition $T_{\ell}$ is constant over the cell $A$, and therefore
    \begin{align*}
        \E[T_{\ell}(\bZ_{\ell}) | \bZ_{\ell} \in A] = T_{\ell}(\bx),
    \end{align*}
    with $\bx \in A$. On the other hand, $\bZ_{\ell}$ has density $f_{\ell}$, which is constant over A. Consequently, the density of $\bZ_{\ell}$ conditional on $\bZ_{\ell} \in A$ takes the value $1 / \mathrm{Vol}[A]$, where the operator $\mathrm{Vol}[.]$ gives the volume of a given cell.
    Then, we have for $J \in \cP$,
    \begin{align*}
        \E[ \eta_{\ell}^{(J)}(\bZ_{\ell}^{(J)}) | \bZ_{\ell} \in A] 
        & = \int_A  \eta_{\ell}^{(J)}(\bx^{(J)}) \frac{1}{\mathrm{Vol}[A]} d\bx
         = \frac{\mathrm{Vol}[A^{(-J)}]}{\mathrm{Vol}[A]} \int_{A^{(J)}} \eta_{\ell}^{(J)}(\bx^{(J)}) d\bx^{(J)} \\
        & = \frac{1}{\mathrm{Vol}[A^{(J)}]} \int_{A^{(J)}} \eta_{\ell}^{(J)}(\bx^{(J)}) d\bx^{(J)}
        = \E[ \eta_{\ell}^{(J)}(\bZ_{\ell}^{(J)}) | \bZ_{\ell}^{(J)} \in A^{(J)}].
    \end{align*}
    Then, we have another decomposition of $T_{\ell}$ as the sum of the functions $\mu_{\ell}^{(J)}$, defined as
    \begin{align*}
        \mu_{\ell}^{(J)}(\bx^{(J)}) = \sum_{A \in \cA_{\ell}^{(J)}} \E[ \eta_{\ell}^{(J)}(\bZ_{\ell}^{(J)}) | \bZ_{\ell}^{(J)} \in A] \mathds{1}_{\bx^{(J)} \in A}.
    \end{align*}
    Indeed, combining the above equations, we have
    \begin{align*}
        T_{\ell}(\bx) = \sum_{J \in \cP} \mu_{\ell}^{(J)}(\bx^{(J)}).
    \end{align*}
    Next, we can show that for all $I \subset J$  with $I \neq J$, $\E[\mu_{\ell}^{(J)}(\bZ_{\ell}^{(J)}) | \bZ_{\ell}^{(J)} ] = 0$, since
    \begin{align*}
        \E[\mu_{\ell}^{(J)}(\bZ_{\ell}^{(J)}) | \bZ_{\ell}^{(I)} ] & = \sum_{A \in \cA_{\ell}^{(J)}} \E[ \eta_{\ell}^{(J)}(\bZ_{\ell}^{(J)}) | \bZ_{\ell}^{(J)} \in A] \P(\bZ_{\ell}^{(J)} \in A | \bZ_{\ell}^{(I)}) \\
        & = \E[ \eta_{\ell}^{(J)}(\bZ_{\ell}^{(J)}) | \bZ_{\ell}^{(I)}] = 0,
    \end{align*}
    by definition of the HFD.
    Finally, since this decomposition is unique, we have for all $J \in \cP$, $\smash{\eta_{\ell}^{(J)} = \mu_{\ell}^{(J)}}$, and therefore, the functions $\smash{\eta_{\ell}^{(J)}}$ are piecewise constant over $\smash{\cA_{\ell}^{(J)}}$. 
\end{proof}

\begin{proof}[Proof of Lemma \ref{lemma:hfd_constraints}]
    Let Assumption \ref{assumption:distrib} be satisfied. For $\ell \in \cT_M$ and $J \in \cP$, we consider a square-integrable function $\smash{\mu_{\ell}^{(J)}}$ defined on $[0, 1]^p$ and piecewise constant on $\cA^{(J)}_{\ell}$.
    With $f^{(J)}$ the marginal distribution of $\bX^{(J)}$, we can expand
    \begin{align*}
        \E[\mu_{\ell}^{(J)}(\bX^{(J)}) | \bX^{(I)} \in A^{(I)}] 
        =& \frac{1}{\P(\bX^{(I)} \in A^{(I)})} \E[\mu_{\ell}^{(J)}(\bX^{(J)}) \mathds{1}_{\bX^{(I)} \in A^{(I)}}] \\
        =& \frac{1}{\P(\bX^{(I)} \in A^{(I)})} \int_{[0,1]^{|J|-|I|} \times A^{(I)}} \mu_{\ell}^{(J)}(\bx^{(J)}) f^{(J)}(\bx^{(J)}) d\bx^{(J)} \\
        =& \frac{1}{\P(\bX^{(I)} \in A^{(I)})} \sum_{A \in \cA_{\ell}^{(J \setminus I)}} \int_{A \times A^{(I)}} \mu_{\ell}^{(J)}(\bx^{(J)}) f^{(J)}(\bx^{(J)}) d\bx^{(J)}.
    \end{align*}
    We introduce a list of vector points $\bx_{A}^{(J)} \in A \times A^{(I)}$, for $A \in \cA_{\ell}^{(J \setminus I)}$. Since $\smash{\mu_{\ell}^{(J)}}$ is piecewise constant over the partition $\smash{\cA_{\ell}^{(J)}}$, we can write
    \begin{align*}
        \E[\mu_{\ell}^{(J)}(\bX^{(J)}) | \bX^{(I)} \in A^{(I)}] 
        =& \frac{1}{\P(\bX^{(I)} \in A^{(I)})} \sum_{A \in \cA_{\ell}^{(J \setminus I)}} \mu_{\ell}^{(J)}(\bx_A^{(J)}) \int_{A \times A^{(I)}} f^{(J)}(\bx^{(J)}) d\bx^{(J)} \\
        =& \frac{1}{\P(\bX^{(I)} \in A^{(I)})} \sum_{A \in \cA_{\ell}^{(J \setminus I)}} \mu_{\ell}^{(J)}(\bx_A^{(J)}) f_{\ell}^{(J)}(\bx_{A}^{(J)}) \mathrm{Vol}[A \times A^{(I)}],
    \end{align*}
    where the last equality follows from the definition of $f_{\ell}^{(J)}$ as the average of $f^{(J)}$ over each cell of the partition. Similarly, $\P(\bX^{(I)} \in A^{(I)}) = \P(\bZ_{\ell}^{(I)} \in A^{(I)})$, and we get
    \begin{align*}
        \E[\mu_{\ell}^{(J)}(\bX^{(J)}) | \bX^{(I)} \in A^{(I)}] 
        =& \frac{1}{\P(\bZ_{\ell}^{(I)} \in A^{(I)})} \sum_{A \in \cA_{\ell}^{(J \setminus I)}} \mu_{\ell}^{(J)}(\bx_A^{(J)}) \P(\bZ_{\ell}^{(J)} \in A \times A^{(I)}) \\
        =& \sum_{A \in \cA_{\ell}^{(J \setminus I)}} \mu_{\ell}^{(J)}(\bx_A^{(J)}) \P(\bZ_{\ell}^{(J)} \in A \times A^{(I)} | \bZ_{\ell}^{(I)} \in A^{(I)}) \\
        =& \E[\mu_{\ell}^{(J)}(\bZ_{\ell}^{(J)}) | \bZ_{\ell}^{(I)} \in A^{(I)}],
    \end{align*}
    which gives the final result.
\end{proof}

\subsection{Proof of Theorem \ref{thm:orthogonal}}

\begin{proof}[Proof of Theorem \ref{thm:orthogonal}]
    We apply Theorem \ref{thm:hoeffding_trees} to get the tree HFD $\smash{\{\eta_{\ell}^{(J)}}\}_{J \in \cP, \ell \in \cT_M}$ of $T(\bX)$.
    We consider a tree $\ell \in \cT_M$, a variable set $J \in \cP$, and an additional set $I \subset J$ with $I \neq J$. 

    For the first result of Theorem \ref{thm:orthogonal}, we have
    \begin{align*}
        \mathrm{Cov}[\eta_{\ell}^{(J)}(\bX^{(J)}), \eta_{\ell}^{(I)}(\bX^{(I)})]
        =& \E[\eta_{\ell}^{(J)}(\bX^{(J)}) \eta_{\ell}^{(I)}(\bX^{(I)})] \\
        =& \sum_{A \in \cA_{\ell}^{(I)}} \E[\eta_{\ell}^{(J)}(\bX^{(J)}) \eta_{\ell}^{(I)}(\bX^{(I)}) | \bX^{(I)} \in A] \P(\bX^{(I)} \in A).
    \end{align*}
    Since $\eta_{\ell}^{(I)}$ is constant over each cell of $\cA_{\ell}^{(I)}$ by construction, the above covariance writes
    \begin{align*}
        \mathrm{Cov}[\eta_{\ell}^{(J)}(\bX^{(J)}), \eta_{\ell}^{(I)}(\bX^{(I)})]
        = \sum_{A \in \cA_{\ell}^{(I)}}  \eta_{\ell}^{(I)}(A) \E[\eta_{\ell}^{(J)}(\bX^{(J)}) | \bX^{(I)} \in A] \P(\bX^{(I)} \in A),
    \end{align*}
    with $\eta_{\ell}^{(I)}(A)$ the value of $\eta_{\ell}^{(I)}(\bx)$ for $\bx \in A$. Finally, using the discretized orthogonality constraints, we conclude that $\smash{\eta_{\ell}^{(J)}}$ and $\smash{\eta_{\ell}^{(I)}}$ are orthogonal, i.e., $\smash{\mathrm{Cov}[\eta_{\ell}^{(J)}(\bX^{(J)}), \eta_{\ell}^{(I)}(\bX^{(I)})] = 0}$.

    For the second part of Theorem \ref{thm:orthogonal}, we consider a given point $\smash{\bx_0^{(I)} \in [0, 1]^{|I|}}$.
    The cell of $\smash{\cA_{\ell}^{(I)}}$ containing $\smash{\bx_0^{(I)}}$ is denoted by $A^{(I)}$. 
    Since $\eta_{\ell}^{(J)}$ is constant over each cell $A$ of $\smash{\cA_{\ell}^{(J)}}$, we write $\smash{\eta_{\ell}^{(J)}(A)}$ this constant value.
    
    Then, we can write
    \begin{align*}
        \E[\eta_{\ell}^{(J)}(\bX^{(J)}) | \bX^{(I)} = \bx_0^{(I)}] = 
        \sum_{A \in \cA_{\ell}^{(J \setminus I)}} \eta_{\ell}^{(J)}(A \times A^{(I)}) \P( \bX^{(J \setminus I)} \in A | \bX^{(I)} =  \bx_0^{(I)}), \\
        \E[\eta_{\ell}^{(J)}(\bX^{(J)}) | \bX^{(I)} \in A^{(I)}] = 
        \sum_{A \in \cA_{\ell}^{(J \setminus I)}} \eta_{\ell}^{(J)}(A \times A^{(I)}) \P( \bX^{(J \setminus I)} \in A | \bX^{(I)} \in A^{(I)}).
    \end{align*}
    According to Theorem \ref{thm:hoeffding_trees}, the second equation is null, since $\smash{\eta_{\ell}^{(J)}}$ satisfy the hierarchical orthogonality constraints. Therefore, we can write
    \begin{align*} 
        \E[\eta_{\ell}^{(J)}(\bX^{(J)}) | \bX^{(I)} = \bx_0^{(I)}] = 
        \sum_{A \in \cA_{\ell}^{(J \setminus I)}} \eta_{\ell}^{(J)}(A \times A^{(I)}) \big[ \P( \bX^{(J \setminus I)}& \in A | \bX^{(I)} =  \bx_0^{(I)}) \\[-1em] &- \P( \bX^{(J \setminus I)} \in A | \bX^{(I)} \in A^{(I)}) \big],
    \end{align*}
    and get
    \begin{align} \label{eq:1}
        \E[\eta_{\ell}^{(J)}(\bX^{(J)}) | \bX^{(I)} = \bx_0^{(I)}] = 
        \sum_{A \in \cA_{\ell}^{(J \setminus I)}} \frac{\eta_{\ell}^{(J)}(A \times A^{(I)})}{\P(\bX^{(I)} \in A^{(I)})} \big[ \P( &\bX^{(J \setminus I)} \in A | \bX^{(I)} =  \bx_0^{(I)}) \P(\bX^{(I)} \in A^{(I)}) \nonumber \\[-1em] &- \P( \bX^{(J)} \in A \times A^{(I)}) \big].
    \end{align}

    Next, we expand the involved probabilities using the density $f^{(J)}$ of the vector $\bX^{(J)}$ to obtain
    \begin{align*}
        \P(\bX^{(J)} \in A \times A^{(I)}) &= \int_{A \times A^{(I)}} f^{(J)}(\bx^{(J)}) d \bx^{(J)}, \\
        \P(\bX^{(I)} \in A^{(I)}) &= \int_{A^{(I)} \times [0,1]^{|J \setminus I|}} f^{(J)}(\bx^{(J)}) d \bx^{(J)}, \\
        \P(\bX^{(J \setminus I)} \in A | \bX^{(I)} =  \bx_0^{(I)}) &= 
        \frac{\int_{A} f^{(J)}(\bx^{(J \setminus I)}, \bx_0^{(I)}) d \bx^{(J \setminus I)}}{\int_{[0,1]^{|J \setminus I|}} f^{(J)}(\bx^{(J \setminus I)}, \bx_0^{(I)}) d \bx^{(J \setminus I)}}.
    \end{align*}
    We recall that the maximum variability of the density $f$ over a cell of the tree partitions is defined by
    \begin{align*}
        \Delta_{\cA, f} = \sup_{A \in \cup_{\ell} \cA_{\ell}} \sup_{\bx, \bx' \in A} |f(\bx) - f(\bx')|.
    \end{align*}
    Since $f^{(J)}$ is a marginal density of $f$, we have $ \Delta_{\cA, f^{(J)}} \leq  \Delta_{\cA, f}$. Additionally, $f$ is bounded by positive constants $c_1, c_2 > 0$ according to Assumption \ref{assumption:distrib}, i.e $c_1 \leq f^{(J)}(\bx^{(J)}) \leq c_2$. 
    Then, we can bound the above integrals as follows. First,
    \begin{align*}
        \frac{1}{\int_{[0,1]^{|J \setminus I|}} f^{(J)}(\bx^{(J \setminus I)}, \bx_0^{(I)}) d \bx^{(J \setminus I)}} \leq 1/c_1,
    \end{align*}
    and
    \begin{align*}
        \P(\bX^{(I)} \in A^{(I)}) &= \int_{A^{(I)} \times [0,1]^{|J \setminus I|}} f^{(J)}(\bx^{(J)}) d \bx^{(J)}
        \leq c_2 \mathrm{Vol}(A^{(I)}).
    \end{align*}
    Then, we separate the variables $\bx^{(J \setminus I)}$ and $\bx^{(I)}$ in the integral of the right hand side of the following line to get
    \begin{align*}
        \mathrm{Vol}(A^{(I)}) \int_{A} f^{(J)}(&\bx^{(J \setminus I)}, \bx_0^{(I)}) d \bx^{(J \setminus I)} 
        = \int_{A \times A^{(I)}} f^{(J)}(\bx^{(J \setminus I)}, \bx_0^{(I)}) d \bx^{(J)}
        \\ &\leq \int_{A \times A^{(I)}} f^{(J)}(\bx^{(J)}) d \bx^{(J)} +
        \int_{A \times A^{(I)}} |f^{(J)}(\bx^{(J \setminus I)}, \bx_0^{(I)}) -  f^{(J)}(\bx^{(J)})|d \bx^{(J)} 
        \\ &\leq \int_{A \times A^{(I)}} f^{(J)}(\bx^{(J)}) d \bx^{(J)} + \Delta_{\cA, f} \mathrm{Vol}(A \times A^{(I)})
        \\ &\leq \P(\bX^{(J)} \in A \times A^{(I)}) + \Delta_{\cA, f} \mathrm{Vol}(A \times A^{(I)}).
    \end{align*}
    Similarly, we also have
    \begin{align*}
        \mathrm{Vol}(A^{(I)}) \int_{[0,1]^{|J \setminus I|}} f^{(J)}(&\bx^{(J \setminus I)}, \bx_0^{(I)}) d \bx^{(J \setminus I)}
        = \sum_{A \in \cA_{\ell}^{(J \setminus I)}} \int_{A \times A^{(I)}} f^{(J)}(\bx^{(J \setminus I)}, \bx_0^{(I)}) d \bx^{(J)} \\
        &\leq \sum_{A \in \cA_{\ell}^{(J \setminus I)}} \int_{A \times A^{(I)}} f^{(J)}(\bx^{(J)}) d \bx^{(J)} + \Delta_{\cA, f} \mathrm{Vol}(A \times A^{(I)}) \\
        &\leq \int_{A^{(I)} \times [0,1]^{|J \setminus I|}} f^{(J)}(\bx^{(J)}) d \bx^{(J)} + \Delta_{\cA, f} \mathrm{Vol}(A^{(I)}) \\
        &\leq \P(\bX^{(I)} \in A^{(I)}) + \Delta_{\cA, f} \mathrm{Vol}(A^{(I)}).
    \end{align*}
    Overall, we obtain
    \begin{align*}
        \Big| \P( \bX^{(J \setminus I)}& \in A | \bX^{(I)} =  \bx_0^{(I)}) \P(\bX^{(I)} \in A^{(I)}) - \P( \bX^{(J)} \in A \times A^{(I)}) \Big| \\
        = &\frac{1}{\int_{[0,1]^{|J \setminus I|}} f^{(J)}(\bx^{(J \setminus I)}, \bx_0^{(I)}) d \bx^{(J \setminus I)}} \Big| \int_{A} f^{(J)}(\bx^{(J \setminus I)}, \bx_0^{(I)}) d \bx^{(J \setminus I)} \P(\bX^{(I)} \in A^{(I)}) \\ & \qquad \qquad - \P( \bX^{(J)} \in A \times A^{(I)}) \int_{[0,1]^{|J \setminus I|}} f^{(J)}(\bx^{(J \setminus I)}, \bx_0^{(I)}) d \bx^{(J \setminus I)} \Big| \\
        \leq & \frac{1}{c_1 \mathrm{Vol}(A^{(I)}) } \Big| \big[ \P(\bX^{(J)} \in A \times A^{(I)}) + \Delta_{\cA, f} \mathrm{Vol}(A \times A^{(I)}) \big] \P(\bX^{(I)} \in A^{(I)})  
        \\ & \qquad \qquad - \P( \bX^{(J)} \in A \times A^{(I)}) \big[\P(\bX^{(I)} \in A^{(I)}) + \Delta_{\cA, f} \mathrm{Vol}(A^{(I)})\big] \Big| \\
        \leq & \frac{1}{c_1 \mathrm{Vol}(A^{(I)}) } \Big| \Delta_{\cA, f} \mathrm{Vol}(A \times A^{(I)}) \P(\bX^{(I)} \in A^{(I)}) - \P( \bX^{(J)} \in A \times A^{(I)}) \Delta_{\cA, f} \mathrm{Vol}(A^{(I)}) \Big| \\
        \leq & \frac{\Delta_{\cA, f}}{c_1} \Big| \mathrm{Vol}(A) \P(\bX^{(I)} \in A^{(I)}) - \P( \bX^{(J)} \in A \times A^{(I)}) \Big| \\
        \leq & \frac{c_2 - c_1}{c_1} \mathrm{Vol}(A \times A^{(I)}) \Delta_{\cA, f},
    \end{align*}
    where the last step follows from $c_1 \mathrm{Vol}(A^{(I)}) \leq \P(\bX^{(I)} \in A^{(I)}) \leq c_2 \mathrm{Vol}(A^{(I)})$.
    Finally, using Equation (\ref{eq:1}),
    \begin{align*}
        \E[\eta_{\ell}^{(J)}(\bX^{(J)}) | \bX^{(I)} = \bx_0^{(I)}] 
        &\leq \sum_{A \in \cA_{\ell}^{(J \setminus I)}} \frac{|\eta_{\ell}^{(J)}(A \times A^{(I)})|}{c_1 \mathrm{Vol}(A^{(I)})} \frac{c_2 - c_1}{c_1} \mathrm{Vol}(A \times A^{(I)}) \Delta_{\cA, f} \\
        &\leq \Delta_{\cA, f} \frac{c_2 - c_1}{c_1^2} \sum_{A \in \cA_{\ell}^{(J \setminus I)}} |\eta_{\ell}^{(J)}(A \times A^{(I)})| \mathrm{Vol}(A) \\
        &\leq \Delta_{\cA, f} \frac{c_2 - c_1}{c_1^2} \E[|\eta_{\ell}^{(J)}(\mathbf{U})| \mid \mathbf{U}^{(I)} \in A^{(I)}],
    \end{align*}
    where $\mathbf{U}$ is a uniform variable on $[0, 1]^{|J|}$.
    Therefore, we have
    \begin{align*}
        \big| \E[\eta_{\ell}^{(J)}(\bX^{(J)}) | \bX^{(I)}] \big| \leq \frac{c_2 - c_1}{c_1^2} \| \eta_{\ell}^{(J)} \|_{\infty} \Delta_{\cA, f}.
    \end{align*}
    We now conclude the proof by the analysis of the targeted covariance using Jensen's inequality, the triangle inequality, and the above inequality,
    \begin{align*}
        \big| \mathrm{Cov}[\eta^{(J)}(\bX^{(J)}), \eta^{(I)}(\bX^{(I)})] \big|
        &= \big| \E[\eta^{(J)}(\bX^{(J)}) \eta^{(I)}(\bX^{(I)})] \big|
        = \big| \E[\E[\eta^{(J)}(\bX^{(J)})\eta^{(I)}(\bX^{(I)}) | \bX^{(I)}]] \big| \\
        &= \big| \E[\eta^{(I)}(\bX^{(I)}) \E[\eta^{(J)}(\bX^{(J)}) | \bX^{(I)}]] \big| \\
        & \leq \|\eta^{(I)} \|_{\infty} \sum_{\ell=1}^M \big| \E[\eta_{\ell}^{(J)}(\bX^{(J)}) | \bX^{(I)}] \big| \\
        & \leq \frac{c_2 - c_1}{c_1^2} \| \eta^{(I)} \|_{\infty} \Big( \sum_{\ell=1}^M \| \eta_{\ell}^{(J)} \|_{\infty} \Big) \Delta_{\cA, f},
    \end{align*}
    which concludes the proof. 
\end{proof}

\subsection{Proof of Theorem \ref{thm:sparsity}}

\begin{proof}[Proof of Theorem \ref{thm:sparsity}]
    We consider a pair of variable sets $I,I' \in \cP$ and the prediction function of a tree ensemble $T=\sum_{\ell} T_{\ell}$, such that $T_{\ell}$ has a decomposition $\smash{T_{\ell}(\bX) = g_{\ell}(\bX^{(I)}) + h_{\ell}(\bX^{(I')})}$ for a pair of functions $g_{\ell}, h_{\ell}$. Since $T_{\ell}$ is piecewise constant over $\cA_{\ell}$, we can average the functions $g_{\ell}$ and $h_{\ell}$ over each cell of $\cA_{\ell}$, and the decomposition of $T_{\ell}$ still holds. Therefore, we have $T_{\ell}(\bX) = g_{\ell}(\bX^{(I)}) + h_{\ell}(\bX^{(I')})$, with $g_{\ell}$ and $h_{\ell}$ two piecewise constant functions over $\cA_{\ell}$.
    Now, we can apply the tree HFD of Theorem \ref{thm:hoeffding_trees} to the functions $g_{\ell}$ and $h_{\ell}$, to get
    \begin{align*}
        g_{\ell}(\bX^{(I)}) &= \sum_{J \subset I} \eta_{\ell}^{(J)}(\bX^{(J)}), \\
        h_{\ell}(\bX^{(I)}) &= \sum_{J \subset I'} \eta_{\ell}'^{(J)}(\bX^{(J)}),
    \end{align*}
    where $\{\eta_{\ell}^{(J)}\}_{J \subset I}$ and $\{\eta_{\ell}'^{(J)}\}_{J \subset I'}$ are piecewise constant functions over the Cartesian tree partitions of $T$, and satisfy the orthogonality constraints. By construction, we have
    \begin{align*}
        T_{\ell}(\bX) &= \sum_{J \subset I} \eta_{\ell}^{(J)}(\bX^{(J)}) + \sum_{J \subset I'} \eta_{\ell}'^{(J)}(\bX^{(J)}),
    \end{align*}
    which defines the tree HFD of $T_{\ell}(\bX)$ using the uniqueness property of the tree HFD. In particular, for $J \subset I \cap I'$, the tree HFD component associated with $J$ is given by $\smash{\eta_{\ell}^{(J)} + \eta_{\ell}'^{(J)}}$.
    Then, we conclude that for all $J \in \cP$ such that $J \not\subset I$ and $J \not\subset I'$, the tree HFD component $\eta^{(J)}$ of $T$ satisfies almost surely
    \begin{align*}
        \eta^{(J)}(\bX^{(J)}) = 0.
    \end{align*}
\end{proof}

\subsection{Proof of Theorem \ref{thm:shap}}

\begin{proof}[Proof of Theorem \ref{thm:shap}]
    Let $\ell \in \cT_M$, $\{\eta_{\ell}^{(J)}\}_{J \in \cP}$ be the tree HFD defined in Theorem \ref{thm:hoeffding_trees}, and $\smash{\{\phi_{\ell}^{(J)}\}_{J \in \cP}}$ the Shapley-GAM decomposition induced by the interventional SHAP value function $v^{(int)}$ \citep{bordt2023shapley}, defined by the marginal expectation as
    \begin{align*}
        v^{(int)}(J, \bx^{(J)}) = \E[T_{\ell}(\bx^{(J)}, \bX^{(-J)})].
    \end{align*}
    Then, we can define the sets of exogenous variables for the two functional decompositions as
    \begin{align*}
        \Gamma_{h} &= \{j \in \cV : \forall J \in \cP, \eta_{\ell}^{(J \cup \{j\})} = 0 \} \\
        \Gamma_{s} &= \{j \in \cV : \forall J \in \cP, \phi_{\ell}^{(J \cup \{j\})} = 0 \}.
    \end{align*}
    
    By definition, we have
    \begin{align*}
        T_{\ell}(\bX) = \sum_{J \subset \cV \setminus \Gamma_{s}} \phi_{\ell}^{(J)}(\bX^{(J)}).
    \end{align*}
    We directly apply Theorem \ref{thm:sparsity} to get that for $J \not \subset  \cV \setminus \Gamma_{s}$, 
    \begin{align*}
        \eta_{\ell}^{(J)} = 0.
    \end{align*}
    Therefore, we have $\Gamma_{s} \subset \Gamma_{h}$.

    Alternatively, we also have
    \begin{align*}
        T_{\ell}(\bX) = \sum_{J \subset \cV \setminus \Gamma_{h}} \eta_{\ell}^{(J)}(\bX^{(J)}).
    \end{align*}
    First, notice that the interventional value function writes, for $A \in \cP$,
    \begin{align*}
        v^{(int)}(A, \bx^{(A)}) = \E[T_{\ell}(\bx^{(A)}, \bX^{(-A)})] 
        = \sum_{J \subset \cV \setminus \Gamma_{h}} \E[\eta_{\ell}^{(J)}(\bx^{(J \cap A)}, \bX^{(J \setminus A)})].
    \end{align*}
    Then, we consider $j \in \Gamma_h$, and $I \subset \cV$ such that $j \in I$, and apply Theorem $4$ from \citet{bordt2023shapley} to get
    \begin{align*}
        \phi_{\ell}^{(I)}(\bx^{(I)}) &= \sum_{A \subset I} (-1)^{|I| - |A|} v^{(int)}(A, \bx^{(A)}) \\
        &= \sum_{A \subset I \setminus \{j\}} (-1)^{|I| - |A|} v^{(int)}(A, \bx^{(A)}) \\ & \qquad + \sum_{A \subset I \setminus \{j\}} (-1)^{|I| - |A \cup \{j\}|}  \sum_{J \subset \cV \setminus \Gamma_{h}} \E[\eta_{\ell}^{(J)}(\bx^{(J \cap (A \cup \{j\}))}, \bX^{(J \setminus (A \cup \{j\}))})].
    \end{align*}
    In the last sum of the above equation, we have $j \in \Gamma_h$ and $J \subset \cV \setminus \Gamma_{h}$. Therefore, $J \cap (A \cup \{j\}) = J \cap A$ and $J \setminus (A \cup \{j\}) = J \setminus A$. Hence,
        \begin{align*}
        \phi_{\ell}^{(I)}(\bx^{(I)})
        &= \sum_{A \subset I \setminus \{j\}} \big[ (-1)^{|I| - |A|} + (-1)^{|I| - |A \cup \{j\}|} \big] v^{(int)}(A, \bx^{(A)}) \\
        &= \sum_{A \subset I \setminus \{j\}} \big[ (-1)^{|I| - |A|} - (-1)^{|I| - |A|} \big] v^{(int)}(A, \bx^{(A)}) \\
        &= 0.
    \end{align*}    
    Consequently, $j \in \Gamma_{s}$, and then $\Gamma_{h} \subset \Gamma_{s}$. Overall, we obtain $\Gamma_{h} = \Gamma_{s}$.
\end{proof}

\subsection{Proof of Theorem \ref{thm:accuracy}}

We first recall the bounds of the distribution $f$ of $\bX$ in Assumption \ref{assumption:distrib}: there exists $c_1, c_2 > 0$ such that for all $\bx \in [0,1]^p$, $c_1 \leq f(\bx) \leq c_2$.
For $J \in \cP$, and $I \subset J$ with $I \neq J$, we also denote by $f^{(J|I)}(\bx)$ the distribution of $\bX^{(J)}$ conditional on $\bX^{(I)} = \bx^{(I)}$, and use the same notation for all conditional distributions.

\begin{lemma}[Lemma 3.1 from \citep{stone1994use}] \label{lemma:stone}
    If Assumption \ref{assumption:distrib} is satisfied, $\nu$ is a square-integrable function defined on $[0,1]^p$, and $\{\nu^{(J)}\}_J$ is the original HFD of $\nu$, then we have
    \begin{align*}
        \E[\nu(\bX)^2] \geq \Big(1 - \sqrt{1 - \frac{c_1}{c_2^2}}\Big)^{2^p-1} \sum_{J \in \cP} \E[\nu^{(J)}(\bX^{(J)})^2].
    \end{align*}
\end{lemma}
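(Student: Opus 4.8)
The statement is a coercivity (lower Riesz) bound. Since $\nu = \sum_{J\in\cP}\nu^{(J)}$, expanding and writing $\langle g,h\rangle = \E[g h]$ and $\|g\|^2=\E[g^2]$ for the $L^2$ geometry under the law of $\bX$, we get $\E[\nu(\bX)^2]=\sum_{J}\|\nu^{(J)}\|^2 + \sum_{J\neq I}\langle \nu^{(J)},\nu^{(I)}\rangle$, so the entire content is to show the cross terms cannot swamp the diagonal. When $f$ is constant the HFD components are mutually orthogonal and the constant is $1$; the exponent $2^p-1$, one less than the number of components, signals that the general bound should be built by introducing the components one at a time, each step costing a single factor governed by an angle between a component and the accumulated partial sum. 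The elementary inequality I would repeatedly invoke is: if $|\langle u,v\rangle|\le\gamma\|u\|\,\|v\|$ with $\gamma<1$, then $\|u+v\|^2\ge(1-\gamma)(\|u\|^2+\|v\|^2)$, which follows at once from $2\|u\|\,\|v\|\le\|u\|^2+\|v\|^2$.

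First I would fix a linear extension $J_1,\dots,J_{2^p}$ of the inclusion order on $\cP$, so that $J_i\subset J_k$ implies $i<k$, with $J_1=\emptyset$. Setting $S_k=\sum_{i\le k}\nu^{(J_i)}$, I would establish the one-step recursion $\|S_{k+1}\|^2\ge(1-\gamma)\big(\|S_k\|^2+\|\nu^{(J_{k+1})}\|^2\big)$ and unroll it from $S_1=\nu^{(\emptyset)}$ to $S_{2^p}=\nu$. The unrolling puts a coefficient of at least $(1-\gamma)^{2^p-1}$ in front of every $\|\nu^{(J)}\|^2$, which is exactly the claimed constant once $\gamma=\sqrt{1-c_1/c_2^2}$. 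Thus the whole proof reduces to the per-step angle bound $|\langle S_k,\nu^{(J_{k+1})}\rangle|\le\gamma\,\|S_k\|\,\|\nu^{(J_{k+1})}\|$.

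For the cross term, write $J=J_{k+1}$. Because the ordering forbids supersets of $J$ among $J_1,\dots,J_k$, every component of $S_k$ indexed by a subset $J_i\subsetneq J$ is orthogonal to $\nu^{(J)}$ by the hierarchical constraint $\E[\nu^{(J)}\mid\bX^{(I)}]=0$ of Theorem \ref{thm:hoeffding}, so only components incomparable to $J$ survive. I would make this precise by projecting: since $\nu^{(J)}$ is a function of $\bX^{(J)}$, one has $\langle S_k,\nu^{(J)}\rangle=\langle \E[S_k\mid\bX^{(J)}],\nu^{(J)}\rangle$, and decomposing $\E[S_k\mid\bX^{(J)}]$ by its HFD in the variables of $J$ shows that pairing with $\nu^{(J)}$ annihilates every lower-order piece, leaving $\langle S_k,\nu^{(J)}\rangle=\langle(\E[S_k\mid\bX^{(J)}])^{(J)},\nu^{(J)}\rangle$ with $(\cdot)^{(J)}$ the top-order term. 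Cauchy--Schwarz then reduces everything to bounding the norm of this top-order part by $\gamma\,\|S_k\|$.

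This last inequality is where the density bounds enter, and it is the delicate heart of the argument. The plan is to compare the $f$-inner product against the inner product under the uniform law on $[0,1]^p$, under which the relevant pieces are genuinely orthogonal; since $\E[g h]=\int g h\,f$ with $c_1\le f\le c_2$, the excess of $f$ over the uniform reference is precisely what produces a nonzero but controlled overlap. Carrying this comparison out so that the cosine comes out to the sharp value $\gamma=\sqrt{1-c_1/c_2^2}$, rather than a cruder bound such as $(c_2-c_1)/c_1$ which could exceed $1$ and break the induction, is the main obstacle, and is the technical content of Stone's Lemma 3.1. As a consistency check, when $c_1=c_2$ the density is constant, $\gamma=0$, and the bound collapses to the exact Parseval identity $\E[\nu(\bX)^2]=\sum_{J}\E[\nu^{(J)}(\bX^{(J)})^2]$ of the independent case, matching the factor $(1-\gamma)^{2^p-1}=1$.
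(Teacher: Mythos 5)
Your structural skeleton---order the $2^p$ components by a linear extension of inclusion, add them one at a time, and pay a factor $(1-\gamma)$ per step via the elementary inequality $\|u+v\|^2 \ge (1-\gamma)(\|u\|^2+\|v\|^2)$---is sound, and it correctly accounts for the exponent $2^p-1$. But be aware that the paper does not prove this lemma at all: it is quoted as Lemma 3.1 of \citet{stone1994use} and used as a black box, so the only ``proof'' on record is the citation. Your proposal, by contrast, presents itself as a proof, and it is not one: the single step where the hypothesis $c_1 \le f \le c_2$ enters---the angle bound $|\langle S_k, \nu^{(J_{k+1})}\rangle| \le \sqrt{1 - c_1/c_2^2}\,\|S_k\|\,\|\nu^{(J_{k+1})}\|$---is exactly the step you defer, calling it ``the technical content of Stone's Lemma 3.1.'' Everything you do prove (the recursion, the unrolling, the orthogonality of $\nu^{(J)}$ to components indexed by proper subsets of $J$) is routine; the lemma \emph{is} that angle bound. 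A writeup that leaves it open establishes nothing beyond what the citation already provides.

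Moreover, the two concrete routes you sketch toward that bound both have problems. First, the reduction $\langle S_k,\nu^{(J)}\rangle=\langle(\E[S_k\mid\bX^{(J)}])^{(J)},\nu^{(J)}\rangle$ followed by the goal $\|(\E[S_k\mid\bX^{(J)}])^{(J)}\|\le\gamma\|S_k\|$ does not obviously make progress: while conditional expectation is an $L^2$ contraction, extraction of the top-order HFD component is not---under a non-product density, $g\mapsto g^{(J)}$ is an oblique projection whose operator norm is controlled precisely by a coercivity inequality of the type being proven (the lemma with $p$ replaced by $|J|$), and even granting that by induction you obtain a bound with a constant $\ge 1$, not the required $\gamma<1$. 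The gain below $1$ must come from the joint structure (no supersets of $J$ among the earlier indices, hierarchical orthogonality inside $S_k$, and the density bounds together), and your sketch exhibits no mechanism producing it. Second, the proposed comparison ``against the uniform law, under which the relevant pieces are genuinely orthogonal'' is not available as stated: the components $\nu^{(I)}$ are the HFD with respect to $f$, so they satisfy hierarchical orthogonality under $f$, not under the uniform measure; the decomposition that is orthogonal under the uniform law is a different one. So the gap is not a small omission---it is the entire analytic content of the lemma, and neither of your suggested approaches closes it.
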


\begin{lemma} \label{lemma:HFD_equivalence}
    If Assumption \ref{assumption:distrib} is satisfied, $\{T_{\ell}^{(J)}\}_J$ is the original HFD of $T_{\ell}(\bX)$, and $\{\eta_{\ell}^{(J)}\}_J$ is the tree HFD of $T_{\ell}(\bX)$, then we have for $J \in \cP$,
    \begin{align*}
        \eta_{\ell}^{(J)} = T_{\ell}^{(J)}\frac{f^{(J)}}{f_{\ell}^{(J)}} - \delta_{\ell}^{(J)},
    \end{align*}
    where $\{\delta_{\ell}^{(J)}\}_J$ is the HFD of $\delta_{\ell}(\bZ_{\ell})$, with 
    \begin{align*}
        \delta_{\ell} = \sum_{J \in \cP} T_{\ell}^{(J)} \Big(\frac{f^{(J)}}{f_{\ell}^{(J)}} - 1\Big).
    \end{align*}
\end{lemma}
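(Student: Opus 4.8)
The plan is to prove the identity by exhibiting a candidate decomposition of $T_{\ell}(\bZ_{\ell})$ of the claimed form and invoking the uniqueness of the Hoeffding decomposition under the averaged density $f_{\ell}$ established in Lemma \ref{lemma:hoeffding_trees}. Concretely, I would start from the original $f$-HFD $\{T_{\ell}^{(J)}\}_J$ of $T_{\ell}(\bX)$, which satisfies $T_{\ell}(\bx) = \sum_{J \in \cP} T_{\ell}^{(J)}(\bx^{(J)})$ together with $\E[T_{\ell}^{(J)}(\bX^{(J)}) \mid \bX^{(I)}] = 0$ for $I \subset J$, $I \neq J$, and define the candidate components $\smash{\tilde{\eta}_{\ell}^{(J)} := T_{\ell}^{(J)} f^{(J)}/f_{\ell}^{(J)} - \delta_{\ell}^{(J)}}$, where $\{\delta_{\ell}^{(J)}\}_J$ is the $f_{\ell}$-HFD of $\delta_{\ell}(\bZ_{\ell})$. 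Each $\smash{\tilde{\eta}_{\ell}^{(J)}}$ depends on $\bx^{(J)}$ only (a product of functions of $\bx^{(J)}$ minus an $f_{\ell}$-HFD component), and is square integrable under $f_{\ell}$ since $f/f_{\ell}$ is bounded between positive constants by Assumption \ref{assumption:distrib} and $T_{\ell}$ is bounded; the same boundedness makes $\delta_{\ell}$ square integrable, so the HFD machinery applies to every piece.

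I would then verify the two defining properties of the $f_{\ell}$-HFD of $T_{\ell}(\bZ_{\ell})$. For the reconstruction, summing over $J$ and using $\sum_{J} \delta_{\ell}^{(J)} = \delta_{\ell}$ together with the definition $\delta_{\ell} = \sum_{J} T_{\ell}^{(J)}(f^{(J)}/f_{\ell}^{(J)} - 1)$ yields $\sum_{J} \smash{\tilde{\eta}_{\ell}^{(J)}} = \sum_{J} T_{\ell}^{(J)} f^{(J)}/f_{\ell}^{(J)} - \delta_{\ell} = T_{\ell}$; since $\sum_{J} T_{\ell}^{(J)} = T_{\ell}$ holds Lebesgue-a.e.\ and both $f$ and $f_{\ell}$ are bounded below by $c_1 > 0$, this pointwise identity transfers verbatim to the vector $\bZ_{\ell}$. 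For the orthogonality constraints, the $\delta_{\ell}^{(J)}$ term contributes zero by construction, so it remains only to establish $\E[T_{\ell}^{(J)}(\bZ_{\ell}^{(J)}) f^{(J)}(\bZ_{\ell}^{(J)})/f_{\ell}^{(J)}(\bZ_{\ell}^{(J)}) \mid \bZ_{\ell}^{(I)}] = 0$ for $I \subset J$, $I \neq J$.

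This conditional-expectation identity is the crux of the argument. The key observation is that the marginal density of $\bZ_{\ell}^{(J)}$ under $f_{\ell}$ is exactly $f_{\ell}^{(J)}$ --- the marginal of a density averaged over a Cartesian product partition is the averaged marginal --- which I would verify by Fubini exactly as in the proof of Lemma \ref{lemma:hoeffding_trees}. Writing the conditional expectation as an integral against the conditional density $f_{\ell}^{(J)}/f_{\ell}^{(I)}$ of $\bZ_{\ell}^{(J \setminus I)}$ given $\bZ_{\ell}^{(I)} = \bz^{(I)}$, the weight $f^{(J)}/f_{\ell}^{(J)}$ cancels $f_{\ell}^{(J)}$ and leaves $\frac{1}{f_{\ell}^{(I)}(\bz^{(I)})}\int T_{\ell}^{(J)}(\bz^{(J \setminus I)}, \bz^{(I)}) f^{(J)}(\bz^{(J \setminus I)}, \bz^{(I)}) \, d\bz^{(J \setminus I)}$, which equals $\frac{f^{(I)}(\bz^{(I)})}{f_{\ell}^{(I)}(\bz^{(I)})} \E[T_{\ell}^{(J)}(\bX^{(J)}) \mid \bX^{(I)} = \bz^{(I)}]$ and hence vanishes by the $f$-orthogonality of the $T_{\ell}^{(J)}$. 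Therefore $\{\tilde{\eta}_{\ell}^{(J)}\}_J$ reconstructs $T_{\ell}(\bZ_{\ell})$ and satisfies the $f_{\ell}$-orthogonality constraints, so by the uniqueness in Lemma \ref{lemma:hoeffding_trees} it coincides with $\{\eta_{\ell}^{(J)}\}_J$, which is the claimed identity.

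The main obstacle is precisely this change-of-measure step: one must reweight the $f_{\ell}$-conditional expectation into an $f$-conditional expectation so that the $f$-orthogonality of $T_{\ell}^{(J)}$ can be invoked, and this works only because of the specific factor $f^{(J)}/f_{\ell}^{(J)}$ and the fact that $f_{\ell}^{(J)}$ is genuinely the $f_{\ell}$-marginal of $\bZ_{\ell}^{(J)}$. The only remaining points are routine: confirming square-integrability of $\delta_{\ell}$ and of each $T_{\ell}^{(J)} f^{(J)}/f_{\ell}^{(J)}$ under $f_{\ell}$, both of which follow from the boundedness of $T_{\ell}$ and of $f/f_{\ell}$ under Assumption \ref{assumption:distrib}.
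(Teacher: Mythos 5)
Your proposal is correct and takes essentially the same route as the paper's proof: define the candidate components $T_{\ell}^{(J)} f^{(J)}/f_{\ell}^{(J)} - \delta_{\ell}^{(J)}$, check reconstruction directly from the definition of $\delta_{\ell}$, reduce the $\bZ_{\ell}$-orthogonality constraints to the $\bX$-orthogonality of $T_{\ell}^{(J)}$ via the change-of-measure computation that produces the constant factor $f^{(I)}/f_{\ell}^{(I)}$ given $\bX^{(I)}$, and conclude by uniqueness of the tree HFD. Your additional verification that $f_{\ell}^{(J)}$ is genuinely the marginal of $f_{\ell}$ (via the Cartesian product structure of the partition) is a point the paper leaves implicit, but it does not change the argument.
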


\begin{lemma} \label{lemma:bound_tree_HFD}
     If Assumption \ref{assumption:distrib} is satisfied, then there exists a constant $K >0$ such that for any tree $T_{\ell}$, we have
     \begin{align*}
         \sum_{J \in \cP} \E[(\eta_{\ell}^{(J)}(\bX^{(J)}) - T_{\ell}^{(J)}(\bX^{(J)}))^2] \leq K \Delta_{\cA, f}^2 \E[T_{\ell}(\bX)^2],
     \end{align*}
     where $\{T_{\ell}^{(J)}\}_J$ is the original HFD of $T_{\ell}(\bX)$, and $\{\eta_{\ell}^{(J)}\}_J$ is the tree HFD of $T_{\ell}(\bX)$.
\end{lemma}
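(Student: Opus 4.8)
The plan is to start from Lemma~\ref{lemma:HFD_equivalence}, which supplies the pointwise identity $\eta_\ell^{(J)} - T_\ell^{(J)} = T_\ell^{(J)}\big(f^{(J)}/f_\ell^{(J)} - 1\big) - \delta_\ell^{(J)}$ for every $J \in \cP$. Viewing $(g^{(J)})_{J \in \cP}$ as a vector equipped with the product norm $\big(\sum_{J} \E[g^{(J)}(\bX^{(J)})^2]\big)^{1/2}$, the triangle inequality reduces the target sum to controlling two contributions: the multiplicative-error term $\sum_{J} \E[(T_\ell^{(J)}(f^{(J)}/f_\ell^{(J)} - 1))^2]$ and the correction term $\sum_{J} \E[\delta_\ell^{(J)}(\bX^{(J)})^2]$. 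The whole argument then rests on two ingredients: a uniform pointwise bound on the density ratio, and Lemma~\ref{lemma:stone} (Stone), which bounds the sum of squared HFD components by a constant times the squared norm of the decomposed function, and which applies under both $f$ and $f_\ell$ since both densities lie in $[c_1, c_2]$.

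First I would establish that $|f^{(J)}/f_\ell^{(J)} - 1| \leq \Delta_{\cA, f}/c_1$ everywhere. Indeed, $f_\ell^{(J)}$ is the cell-average of the marginal $f^{(J)}$, so on each cell of $\cA_\ell^{(J)}$ one has $|f^{(J)} - f_\ell^{(J)}| \leq \Delta_{\cA, f^{(J)}} \leq \Delta_{\cA, f}$ (the marginal variability is dominated by $\Delta_{\cA, f}$, exactly as in the proof of Theorem~\ref{thm:orthogonal}), while $f_\ell^{(J)} \geq c_1$ because it averages $f^{(J)} \geq c_1$. Plugging this into the multiplicative-error term gives $\sum_{J} \E[(T_\ell^{(J)}(f^{(J)}/f_\ell^{(J)} - 1))^2] \leq (\Delta_{\cA, f}/c_1)^2 \sum_{J} \E[T_\ell^{(J)}(\bX^{(J)})^2]$, and Lemma~\ref{lemma:stone} applied to $T_\ell$ under $f$ converts $\sum_{J} \E[T_\ell^{(J)}(\bX^{(J)})^2]$ into $C\,\E[T_\ell(\bX)^2]$ for the explicit Stone constant $C$. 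This settles the first contribution.

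The correction term is the main obstacle, because $\delta_\ell$ is itself an HFD, but taken with respect to the averaged density $f_\ell$ rather than $f$, so Stone's lemma cannot be invoked directly in the $f$-norm. My plan is to switch densities using the two-sided comparisons $f \leq (c_2/c_1) f_\ell$ and $f_\ell \leq (c_2/c_1) f$ pointwise (both valid since $c_1 \leq f, f_\ell \leq c_2$), which also pass to the marginals. This lets me write $\sum_{J} \E[\delta_\ell^{(J)}(\bX^{(J)})^2] \leq (c_2/c_1) \sum_{J} \E[\delta_\ell^{(J)}(\bZ_\ell^{(J)})^2]$, then apply Lemma~\ref{lemma:stone} to $\delta_\ell$ under $f_\ell$ to obtain the upper bound $(c_2/c_1)\,C\,\E[\delta_\ell(\bZ_\ell)^2]$. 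Finally I would bound $\E[\delta_\ell(\bZ_\ell)^2]$ by expanding $\delta_\ell = \sum_{J} T_\ell^{(J)}(f^{(J)}/f_\ell^{(J)} - 1)$, using Cauchy--Schwarz over the $2^p = |\cP|$ summands together with the same ratio bound, and switching densities once more before invoking Stone to return to $\E[T_\ell(\bX)^2]$; concretely $\E[\delta_\ell(\bZ_\ell)^2] \leq 2^p (\Delta_{\cA, f}/c_1)^2 (c_2/c_1)\,C\,\E[T_\ell(\bX)^2]$.

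Both contributions are thus of the form $(\text{constant}) \times \Delta_{\cA, f}^2\,\E[T_\ell(\bX)^2]$; taking square roots, adding, and squaring collects all factors of $c_1$, $c_2$, $p$, and the Stone constant $C$ into a single $K > 0$ independent of the particular tree, which yields the claim. The only delicate points are keeping careful track of which density each expectation is taken against, and repeatedly using $c_1 \leq f, f_\ell \leq c_2$ to move freely between $\E[\,\cdot\,(\bX)]$ and $\E[\,\cdot\,(\bZ_\ell)]$; everything else is a routine chain of the ratio bound, Cauchy--Schwarz, and Lemma~\ref{lemma:stone}.
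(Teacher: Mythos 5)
Your proposal is correct and follows essentially the same route as the paper's own proof: the identity from Lemma~\ref{lemma:HFD_equivalence}, the pointwise ratio bound $|f^{(J)}/f_\ell^{(J)} - 1| \leq \Delta_{\cA,f}/c_1$, Lemma~\ref{lemma:stone} applied both to $T_\ell$ under $f$ and to $\delta_\ell$ under $f_\ell$, and the $c_1,c_2$ density-switching between $\bX$ and $\bZ_\ell$, combined at the end by a square-root/add/square step. Your bookkeeping of which density each expectation is taken under is in fact slightly more careful than the paper's (which is ambiguous at one step about whether $\E[(\sum_J |T_\ell^{(J)}|)^2]$ is under $\bX$ or $\bZ_\ell$), and your $2^p$ Cauchy--Schwarz constant is marginally sharper than the paper's $2^{2p}$, but these are cosmetic differences within the same argument.
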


\begin{proof}[Proof of Theorem \ref{thm:accuracy}]
    We first split the MSE of interest using the original HFD $\{T^{(J)}\}_J$ of $T(\bX)$, as follows
    \begin{align*}
        \E[(\eta^{(J)}(\bX^{(J)}) - m^{(J)}(\bX^{(J)}))^2] = \E[([\eta^{(J)}(\bX^{(J)}) - T^{(J)}(\bX^{(J)})] + [T^{(J)}(\bX^{(J)}) - m^{(J)}(\bX^{(J)})])^2].
    \end{align*}
    Recall that for two real numbers $a, b$, we have $(a + b)^2 \leq 2(a^2 + b^2)$. Hence, we have
    \begin{align*}
        \E[(\eta^{(J)}(\bX^{(J)}) - m^{(J)}(\bX^{(J)}))^2] \leq 2 \E[(\eta^{(J)}(\bX^{(J)})& - T^{(J)}(\bX^{(J)}))^2] \\ &+ 2\E[(T^{(J)}(\bX^{(J)}) - m^{(J)}(\bX^{(J)}))^2].
    \end{align*}
    The core of the proof is thus to bound the two terms of the right hand side of the above inequality. 
    We first focus on $\E[(\eta^{(J)}(\bX^{(J)}) - T^{(J)}(\bX^{(J)}))^2]$. If we denote by $\{T_{\ell}^{(J)}\}_J$ the original HFD of each tree $T_{\ell}(\bX)$, the HFD uniqueness implies that
    \begin{align*}
        T^{(J)}(\bX^{(J)}) = \sum_{\ell=1}^M T_{\ell}^{(J)}(\bX^{(J)}).
    \end{align*}
    Consequently,
    \begin{align*}
        \E[(\eta^{(J)}(\bX^{(J)}) &- T^{(J)}(\bX^{(J)}))^2]
        = \E\Big[\Big(\sum_{\ell=1}^M \eta_{\ell}^{(J)}(\bX^{(J)}) - T_{\ell}^{(J)}(\bX^{(J)})\Big)^2\Big] \\
        &= \E\Big[\sum_{\ell, \ell'=1}^M [\eta_{\ell}^{(J)}(\bX^{(J)}) - T_{\ell}^{(J)}(\bX^{(J)})][\eta_{\ell'}^{(J)}(\bX^{(J)}) - T_{\ell'}^{(J)}(\bX^{(J)})]\Big] \\
        &\leq \sum_{\ell, \ell'=1}^M \sqrt{\E[(\eta_{\ell}^{(J)}(\bX^{(J)}) - T_{\ell}^{(J)}(\bX^{(J)}))^2] \E[(\eta_{\ell'}^{(J)}(\bX^{(J)}) - T_{\ell'}^{(J)}(\bX^{(J)}))^2]},
    \end{align*}
    using Cauchy-Schwartz inequality at the last step. Finally, we apply Lemma \ref{lemma:bound_tree_HFD} to obtain that there exists a constant $K_2 > 0$ such that
    \begin{align*}
        \E[(\eta^{(J)}(\bX^{(J)}) &- T^{(J)}(\bX^{(J)}))^2]
        \leq \frac{K_2}{2} \Delta_{\cA, f}^2 \Big(\sum_{\ell=1}^M \sqrt{\E[T_{\ell}(\bX)^2]}\Big)^2.
    \end{align*}

    For the second term $\E[(T^{(J)}(\bX^{(J)}) - m^{(J)}(\bX^{(J)}))^2]$, we use again the HFD uniqueness to get that the original HFD of $T(\bX) - m(\bX)$ is given by $\{ T^{(J)}(\bX^{(J)}) - m^{(J)}(\bX^{(J)}) \}_J$. We apply Lemma \ref{lemma:stone} to $T(\bX) - m(\bX)$ to get that there exists a constant $K_1 > 0$ such that
    \begin{align*}
        \E[(T^{(J)}(\bX^{(J)}) - m^{(J)}(\bX^{(J)}))^2] \leq \frac{K_1}{2} \E[( m(\bX) - T(\bX) )^2].
    \end{align*}
    We combine the two last inequalities to obtain the final result, that is
    \begin{align*}
        \E[(\eta^{(J)}(\bX^{(J)}) - m^{(J)}(\bX^{(J)}))^2] \leq K_1 \E[( m(\bX) - T(\bX) )^2] + K_2 \Delta_{\cA, f}^2 \Big(\sum_{\ell=1}^M \sqrt{\E[T_{\ell}(\bX)^2]}\Big)^2.
    \end{align*}
\end{proof}

\begin{proof}[Proof of Lemma \ref{lemma:HFD_equivalence}]
    By construction, we have
    \begin{align*}
        \sum_{J \in \cP} \eta_{\ell}^{(J)} = T_{\ell}.
    \end{align*}
    Therefore, we simply need to show that the orthogonality constraints of the tree HFD are satisfied by $\{\eta_{\ell}^{(J)}\}_J$ to obtain the final result using the uniqueness of the tree HFD.
    Hence, we write for $I \subset J$ with $I \neq J$,
    \begin{align*}
        \E[\eta_{\ell}^{(J)}(\bZ_{\ell}^{(J)}) | \bZ_{\ell}^{(I)}]
        &= \E\Big[  T_{\ell}^{(J)}(\bZ_{\ell}^{(J)}) \frac{f^{(J)}(\bZ_{\ell}^{(J)})}{f_{\ell}^{(J)}(\bZ_{\ell}^{(J)})} - \delta_{\ell}^{(J)}(\bZ_{\ell}^{(J)}) | \bZ_{\ell}^{(I)} \Big] \\
        &= \E\Big[  T_{\ell}^{(J)}(\bZ_{\ell}^{(J)}) \frac{f^{(J)}(\bZ_{\ell}^{(J)})}{f_{\ell}^{(J)}(\bZ_{\ell}^{(J)})} | \bZ_{\ell}^{(I)}\Big],
    \end{align*}
    since $\delta_{\ell}^{(J)}(\bZ_{\ell}^{(J)})$ satisfies the orthogonality constraints by definition.
    Next, recall that conditional on $\bZ_{\ell}^{(I)}$, $\bZ_{\ell}^{(J)}$ has density $f_{\ell}^{(J|I)}$, and $\bX^{(J)}$ conditional on $\bX^{(I)}$, has density $f^{(J|I)}$, which are both positive. Therefore, for $\bx^{(I)} \in [0,1]^{|I|}$ we have
    \begin{align*}
        \E\Big[T_{\ell}^{(J)}(\bZ_{\ell}^{(J)}) \frac{f^{(J)}(\bZ_{\ell}^{(J)})}{f_{\ell}^{(J)}(\bZ_{\ell}^{(J)})}& | \bZ_{\ell}^{(I)} = \bx^{(I)} \Big] \\
        &= \E\Big[ T_{\ell}^{(J)}(\bX^{(J)})\frac{f^{(J)}(\bX^{(J)})}{f_{\ell}^{(J)}(\bX^{(J)})} \frac{f_{\ell}^{(J|I)}(\bX^{(J)})}{f^{(J|I)}(\bX^{(J)})} | \bX^{(I)} = \bx^{(I)} \Big] \\
        &=  \frac{\int f^{(J)}(\bx) d \bx^{(J \setminus I)}}{\int f_{\ell}^{(J)}(\bx) d \bx^{(J \setminus I)}} \E\Big[ T_{\ell}^{(J)}(\bX^{(J)}) | \bX^{(I)} = \bx^{(I)}\Big] = 0,
    \end{align*}
    since $T_{\ell}^{(J)}$ satisfies the orthogonality constraints of the original HFD.
\end{proof}

\begin{proof}[Proof of Lemma \ref{lemma:bound_tree_HFD}]
    We consider that Assumption \ref{assumption:distrib} is satisfied, and we apply Lemma \ref{lemma:HFD_equivalence} to get that
    \begin{align*}
        \eta_{\ell}^{(J)} = T_{\ell}^{(J)}\frac{f^{(J)}}{f_{\ell}^{(J)}} - \delta_{\ell}^{(J)},
    \end{align*}
    where $\{T_{\ell}^{(J)}\}_J$ is the original HFD of $T_{\ell}(\bX)$, $\{\eta_{\ell}^{(J)}\}_J$ is the tree HFD of $T_{\ell}(\bX)$, and $\{\delta_{\ell}^{(J)}\}_J$ is the HFD of $\delta_{\ell}(\bZ_{\ell})$, with 
    \begin{align*}
        \delta_{\ell} = \sum_{J \in \cP} T_{\ell}^{(J)} \Big(\frac{f^{(J)}}{f_{\ell}^{(J)}} - 1\Big).
    \end{align*}
    Hence, we have 
    \begin{align*}
        \E[(\eta_{\ell}^{(J)}(\bX^{(J)}) - T_{\ell}^{(J)}(\bX^{(J)}))^2] =& \E\Big[\Big( T_{\ell}^{(J)}(\bX^{(J)})\Big(\frac{f^{(J)}(\bX^{(J)})}{f_{\ell}^{(J)}(\bX^{(J)})} - 1\Big) - \delta_{\ell}^{(J)}(\bX^{(J)})\Big)^2\Big] \\
        \leq& \E\Big[ T_{\ell}^{(J)}(\bX^{(J)})^2 \Big(\frac{f^{(J)}(\bX^{(J)})}{f_{\ell}^{(J)}(\bX^{(J)})} - 1\Big)^2\Big] + \E\Big[\Big(\delta_{\ell}^{(J)}(\bX^{(J)})\Big)^2\Big] \\ &+ 2 \E\Big[\Big|T_{\ell}^{(J)}(\bX^{(J)}) \Big(\frac{f^{(J)}(\bX^{(J)})}{f_{\ell}^{(J)}(\bX^{(J)})} -1\Big) \delta_{\ell}^{(J)}(\bX^{(J)})\Big|\Big].
    \end{align*}
    We apply Cauchy-Schwartz inequality, and obtain
    \begin{align*}
        \E[(\eta_{\ell}^{(J)}(\bX^{(J)}) - &T_{\ell}^{(J)}(\bX^{(J)}))^2] \\
        &\leq \left( \sqrt{\E\Big[T_{\ell}^{(J)}(\bX^{(J)})^2 \Big(\frac{f^{(J)}(\bX^{(J)})}{f_{\ell}^{(J)}(\bX^{(J)})} -1\Big) ^2\Big]} + \sqrt{\E\Big[\Big(\delta_{\ell}^{(J)}(\bX^{(J)})\Big)^2\Big]} \right)^2.
    \end{align*}
    First, we bound $\E\Big[T_{\ell}^{(J)}(\bX^{(J)})^2 \Big(\frac{f^{(J)}(\bX^{(J)})}{f_{\ell}^{(J)}(\bX^{(J)})} -1\Big) ^2\Big]$.
    We define $g_{\ell}^{(J)} = f^{(J)} - f_{\ell}^{(J)}$, and then,
    \begin{align*}
        \Big| \frac{f^{(J)}(\bx^{(J)})}{f_{\ell}^{(J)}(\bx^{(J)})} - 1 \Big| \leq  \Big| \frac{g_{\ell}^{(J)}(\bx^{(J)})}{f_{\ell}^{(J)}(\bx^{(J)})}\Big|.
    \end{align*}
    Using Assumption \ref{assumption:distrib}, which also implies that $c_1 \leq f_{\ell}^{(J)}(\bx^{(J)}) \leq c_2$, and $|g_{\ell}^{(J)}(\bx^{(J)})| < \Delta_{\cA, f}$ by construction, we have
    \begin{align*}
        \frac{g_{\ell}^{(J)}(\bx^{(J)})}{f_{\ell}^{(J)}(\bx^{(J)})} \leq \frac{\Delta_{\cA, f}}{c_1},
    \end{align*}
    and consequently,
    \begin{align*}
        \E\Big[T_{\ell}^{(J)}(\bX^{(J)})^2 \Big(\frac{f^{(J)}(\bX^{(J)})}{f_{\ell}^{(J)}(\bX^{(J)})} -1\Big) ^2\Big]
        \leq \frac{\Delta_{\cA, f}^2}{c_1^2} \E[(T_{\ell}^{(J)}(\bX^{(J)}))^2].
    \end{align*}
    Next, we use Lemma \ref{lemma:stone}, and get
    \begin{align*}
        \sum_{J \in \cP} \E\Big[T_{\ell}^{(J)}(\bX^{(J)})^2 \Big(\frac{f^{(J)}(\bX^{(J)})}{f_{\ell}^{(J)}(\bX^{(J)})} -1\Big) ^2\Big]
        \leq  \Delta_{\cA, f}^2 \frac{K_1}{c_1^2} \E[T_{\ell}(\bX)^2], 
    \end{align*}
    with $K_1 =  \big(1 - \sqrt{1 - \frac{c_1}{c_2^2}}\big)^{-2^p + 1}$.
    Similarly, we use Lemma \ref{lemma:stone} again to get
    \begin{align*}
        \sum_{J \in \cP} \E[\delta_{\ell}^{(J)}(\bZ_{\ell}^{(J)})^2] \leq  \Delta_{\cA, f}^2 \frac{K_1}{c_1^2} \E[(\sum_{J \in \cP} |T_{\ell}^{(J)}|)^2].
    \end{align*}
    Since 
    \begin{align*}
        \E[\delta_{\ell}^{(J)}(\bZ_{\ell}^{(J)})^2] 
        = \E\Big[\delta_{\ell}^{(J)}(\bX^{(J)})^2 \frac{f_{\ell}^{(J)}(\bX^{(J)})}{f^{(J)}(\bX^{(J)})}\Big] 
        \geq \frac{c_1}{c_2} \E[\delta_{\ell}^{(J)}(\bX^{(J)})^2],
    \end{align*}
    we get
    \begin{align*}
        \sum_{J \in \cP} \E[\delta_{\ell}^{(J)}(\bX^{(J)})^2] \leq \Delta_{\cA, f}^2 \frac{c_2}{c_1^3} K_1 \E[(\sum_{J \in \cP} |T_{\ell}^{(J)}(\bX^{(J)})|)^2].
    \end{align*}
    Then, using Cauchy-Schwartz inequality, notice that
    \begin{align*}
        \E\big[\big(\sum_{J \in \cP} |T_{\ell}^{(J)}(\bX^{(J)})|\big)^2\big]
        &= \sum_{I, J \in \cP} \E\big[|T_{\ell}^{(J)}(\bX^{(J)})| |T_{\ell}^{(I)}(\bX^{(I)})|\big]
        \\ &\leq \sum_{I, J \in \cP} \sqrt{\E\big[T_{\ell}^{(J)}(\bX^{(J)})^2] \E\big[T_{\ell}^{(I)}(\bX^{(I)})^2\big]} \\
        &\leq 2^{2p} K_1 \E[T_{\ell}(\bX)^2].
    \end{align*}
    We finally obtain
    \begin{align*}
        \sum_{J \in \cP} \E[\delta_{\ell}^{(J)}(\bX^{(J)})^2] \leq \Delta_{\cA, f}^2 \frac{c_2}{c_1^3} 2^{2p} K_1 \E[T_{\ell}(\bX)^2].
    \end{align*}
    Overall, we combine the previous inequalities to state that
     \begin{align*}
        \E[(\eta_{\ell}^{(J)}(\bX^{(J)}) - T_{\ell}^{(J)}(\bX^{(J)}))^2]
        &\leq \left( \sqrt{\Delta_{\cA, f}^2 \frac{K_1}{c_1^2} \E[T_{\ell}(\bX)^2]} + \sqrt{\Delta_{\cA, f}^2 \frac{c_2}{c_1^3} 2^{2p} K_1 \E[T_{\ell}(\bX)^2]} \right)^2 \\
         &\leq \Delta_{\cA, f}^2 \frac{K_1}{c_1^2} \Big( 1 + 2^{p}\sqrt{\frac{c_2}{c_1}} \Big)^2 \E[T_{\ell}(\bX)^2],
    \end{align*}
    which gives the final result.
\end{proof}

\subsection{Proof of Corollary \ref{cor:accuracy}}

\begin{proof}[Proof of Corollary \ref{cor:accuracy}]
    We first apply Theorem \ref{thm:accuracy} to get
    \begin{align*}
        \E[(\eta^{(J)}(\bX^{(J)}) - m^{(J)}(\bX^{(J)}))^2] \leq K_1 \E[( m(\bX) - T(\bX) )^2] + K_2 \E \big[ \Delta_{\cA, f}^2 \Big(\sum_{\ell=1}^M \sqrt{\E[T_{\ell}(\bX)^2]}\Big)^2 \big],
    \end{align*}
    where expectations are also taken with respect to the random training data of $T$ of size $n_T$.
    Now, we show that both terms of the above inequality converges towards zero when $n_T$ grows.
    For the first term, we use that $T$ is $L_2$-consistent by assumption, that is $\E[( m(\bX) - T(\bX) )^2] \underset{n_T \to \infty}{\longrightarrow} 0$.
    Next, we analyze the second term. By definition, we have
    \begin{align*}
        \Delta_{\cA, f} = \sup_{A \in \cup_{\ell} \cA_{\ell}} \sup_{\bx, \bx' \in A} |f(\bx) - f(\bx')|.
    \end{align*}
    Since the distribution $f$ is assumed to be a Lipchitz function with a constant that we write $K_3 > 0$, we have
    \begin{align*}
        \Delta_{\cA, f} \leq K_3 \sup_{A \in \cup_{\ell} \cA_{\ell}} \sup_{\bx, \bx' \in A} \|\bx - \bx'\|.
    \end{align*}
    Following the notation of \citet{scornet2015consistency}, we define the diameter of a cell $A \subset [0, 1]$ by
    \begin{align*}
        \mathrm{diam}(A) = \sup_{\bx, \bx' \in A} \|\bx - \bx'\|,
    \end{align*}
    and obtain
    \begin{align*}
        \Delta_{\cA, f} \leq K_3 \sup_{A \in \cup_{\ell} \cA_{\ell}} \mathrm{diam}(A).
    \end{align*}
    By assumption, the input distribution $f$ is strictly positive on $[0, 1]^p$, each node split leaves at least a fraction $\gamma > 0$ of the observations in each child node, and the optimization of node splits is slightly randomized to have a positive probability to split with all variables. Then, following \citep[Lemma 2]{meinshausen2006quantile} and \citep[Lemma 5]{benard2022mean}, the number of splits along each direction grows to infinity when $n_T$ grows, and the gap between two consecutive splits cannot vanish too fast. Therefore, we get that in probability,
    \begin{align*}
        \sup_{A \in \cup_{\ell} \cA_{\ell}} \mathrm{diam}(A) \underset{n_T \to \infty}{\longrightarrow} 0.
    \end{align*}
    Finally, since all trees are bounded, we obtain that the second term of the initial inequality converges towards zero, and ultimately
    \begin{align*}
         \E[(\eta^{(J)}(\bX^{(J)}) - m^{(J)}(\bX^{(J)}))^2] \underset{n_T \to \infty}{\longrightarrow} 0.
    \end{align*}
\end{proof}

\subsection{Proof of Theorem \ref{thm:eta_convergence}}

\begin{lemma} \label{lemma:loss}
    If Assumption \ref{assumption:distrib} is satisfied, for $\smash{\beta_{1}^{(J)}, \hdots, \beta_{K_J}^{(J)} \in \R}$, we have
    \begin{align*}
        L_n \overset{p}{\longrightarrow} L^{\star}.
    \end{align*}
\end{lemma}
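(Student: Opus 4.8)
The plan is to establish the convergence term by term, exploiting that for a fixed tree $\ell \in \cT_M$ and fixed coefficients, both $L_n$ and $L^{\star}$ are \emph{finite} sums (over $J \in \cP^{(\ell)}$, $j \in J$, and the finitely many cells of the Cartesian partitions) of fixed continuous functions of empirical means. Every such empirical mean is an average of i.i.d. copies of a \emph{bounded} random variable: $T_{\ell}$ is piecewise constant hence bounded, the indicators take values in $\{0,1\}$, and the $\beta_k^{(J)}$ are fixed reals. Consequently the weak law of large numbers (WLLN) applies to each average, and the continuous mapping theorem together with Slutsky's lemma propagate the convergence through the algebraic combinations (squares, products, and the reciprocal factors). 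Throughout, the orthogonality part of $L^{\star}$ is read with the cell weights $\P(\bX^{(J\setminus j)} \in A)^{-1}$ that the factors $(\,\cdot\,)^{-1/2}$ in $L_n$ encode; since all these probabilities are strictly positive, this weighting leaves the zero set, and hence the minimizer (the tree HFD), unchanged.

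First I would treat the fitting term. Writing $\mu_{\ell}^{(J)}(\bx^{(J)}) = \sum_{k} \beta_k^{(J)} \mathds{1}_{\bx^{(J)} \in A_k^{(J)}}$, the quantity $\frac{1}{n}\sum_{i=1}^n \big(T_{\ell}(\bX_i) - \sum_{J \in \cP^{(\ell)}} \mu_{\ell}^{(J)}(\bX_i^{(J)})\big)^2$ is the empirical mean of i.i.d. copies of the bounded random variable $\big(T_{\ell}(\bX) - \sum_{J} \mu_{\ell}^{(J)}(\bX^{(J)})\big)^2$. The WLLN gives convergence in probability to $\E\big[(T_{\ell}(\bX) - \sum_J \mu_{\ell}^{(J)}(\bX^{(J)}))^2\big]$, which is exactly the first term of $L^{\star}$.

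Next I would handle the orthogonality terms. Fix $J \in \cP^{(\ell)}$, $j \in J$, and a cell index $k$, and set $\hat{p}_k = \frac{1}{n}\sum_{i=1}^n \mathds{1}_{\{\bX_i^{(J\setminus j)} \in A_k^{(J\setminus j)}\}}$ and $\hat{q}_{k'k} = \frac{1}{n}\sum_{i=1}^n \mathds{1}_{\{\bX_i^{(J)} \in A_{k'}^{(J)},\, \bX_i^{(J\setminus j)} \in A_k^{(J\setminus j)}\}}$. By the WLLN, $\hat{p}_k \overset{p}{\longrightarrow} p_k := \P(\bX^{(J\setminus j)} \in A_k^{(J\setminus j)})$ and $\hat{q}_{k'k} \overset{p}{\longrightarrow} q_{k'k} := \P(\bX^{(J)} \in A_{k'}^{(J)},\, \bX^{(J\setminus j)} \in A_k^{(J\setminus j)})$. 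The key point is that $p_k > 0$: each Cartesian cell is a product of nondegenerate intervals and thus has positive volume, and by Assumption \ref{assumption:distrib} the density is bounded below by $c_1 > 0$, so $p_k \geq c_1 \, \mathrm{Vol}(A_k^{(J\setminus j)}) > 0$. Hence the map $(\hat{p}_k, (\hat{q}_{k'k})_{k'}) \mapsto \hat{p}_k^{-1}\big(\sum_{k'} \beta_{k'}^{(J)} \hat{q}_{k'k}\big)^2$ is continuous on a neighborhood of the limit $(p_k, (q_{k'k})_{k'})$ (the reciprocal being continuous away from $0$), so the continuous mapping theorem yields convergence in probability to $p_k^{-1}\big(\sum_{k'} \beta_{k'}^{(J)} q_{k'k}\big)^2 = p_k^{-1}\,\E[\mu_{\ell}^{(J)}(\bX^{(J)}) \mathds{1}_{\{\bX^{(J\setminus j)} \in A_k^{(J\setminus j)}\}}]^2$, the corresponding term of $L^{\star}$. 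For the degenerate case $|J| = 1$, the stated convention fixes the indicators to $1$, so $\hat{p}_k = 1$ and the term reduces to the empirical squared mean, converging to $\E[\mu_{\ell}^{(J)}(\bX^{(J)})]^2$.

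Finally, since $\cP^{(\ell)}$ and all the partitions are finite, $L_n$ and $L^{\star}$ are sums of finitely many terms, each converging in probability to its limit; one last application of the continuous mapping theorem to the addition map on the finite vector of terms yields $L_n \overset{p}{\longrightarrow} L^{\star}$. The main obstacle is the division by $\hat{p}_k$: one must guarantee both that the limit $p_k$ is strictly positive --- so that $x \mapsto x^{-1}$ is continuous at $p_k$, which is precisely where the lower bound $c_1 > 0$ of Assumption \ref{assumption:distrib} is used --- and that the event $\{\hat{p}_k = 0\}$, on which the factor is undefined, has probability tending to $0$ (it is contained in $\{|\hat{p}_k - p_k| \geq p_k\}$, whose probability vanishes by the WLLN). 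Everything else is a routine bounded-variable law of large numbers.
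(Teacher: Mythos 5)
Your proposal is correct and follows essentially the same route as the paper's proof: a term-by-term weak law of large numbers for the bounded empirical averages (the squared fitting residuals, the joint-cell indicators, and the marginal-cell indicators), combined through the continuous mapping theorem over the finitely many terms. Two points where you are in fact more careful than the paper are worth noting. First, you justify that each cell probability $p_k = \P(\bX^{(J\setminus j)} \in A_k^{(J\setminus j)})$ is bounded below by $c_1\,\mathrm{Vol}(A_k^{(J\setminus j)}) > 0$, which is exactly what makes the reciprocal factor continuous at the limit and the event $\{\hat{p}_k = 0\}$ asymptotically negligible; the paper invokes the limit of this factor without comment. Second, your limit for each penalty term, namely $p_k^{-1}\,\E\big[\mu_{\ell}^{(J)}(\bX^{(J)})\mathds{1}_{\{\bX^{(J\setminus j)} \in A_k^{(J\setminus j)}\}}\big]^2$, is the correct one: the paper's final step identifies $\E[\mu_{\ell}^{(J)}(\bX^{(J)}) \mid \bX^{(J\setminus j)} \in A]^2\,\P(\bX^{(J\setminus j)} \in A)$ with $\E[\mu_{\ell}^{(J)}(\bX^{(J)})\mathds{1}_{\{\bX^{(J\setminus j)} \in A\}}]^2$, which differs by a factor $\P(\bX^{(J\setminus j)} \in A)$ per term, so the actual limit is the $\P(A)^{-1}$-weighted version of the stated penalty in $L^{\star}$. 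Your observation that this reweighting leaves the zero set, hence the unique minimizer (the tree HFD), unchanged is precisely the right way to reconcile the limit with the statement, and it is what keeps the lemma's use in the convergence theorem intact.
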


\begin{proof}[Proof of Theorem \ref{thm:eta_convergence}]
    We consider a tree $\ell \in \cT_M$, and that Assumption \ref{assumption:distrib} is satisfied. From Lemma \ref{lemma:loss}, for $\smash{\beta_{1}^{(J)}, \hdots, \beta_{K_J}^{(J)} \in \R}$, we have
    \begin{align*}
        L_n \overset{p}{\longrightarrow} L^{\star}.
    \end{align*}
    Additionally, the set of functions $\smash{\{ \eta_{\ell}^{(J)} \}_{J \in \cP}}$ is the tree HFD of $T_{\ell}(\bX)$, which belongs to the class of functions parameterized by $\{\beta_{1}^{(J)}, \hdots, \beta_{K_J}^{(J)}\}_J$, according to Theorem \ref{thm:hoeffding_trees}.
    Therefore, $L^{\star}$ is a convex positive function of the parameters $\{\beta_{1}^{(J)}, \hdots, \beta_{K_J}^{(J)}\}_J$, that has a unique minimum at $\smash{\{ \eta_{\ell}^{(J)} \}_{J \in \cP}}$, where $L^{\star} = 0$.
    Since the optimization of $L_n$ is done over a compact set, the pointwise convergence above implies the uniform convergence of $L_n$ over this compact set. Finally, we can apply Theorem $5.7$ from \citet{van2000asymptotic} to conclude that for all $J \in \cP$, we have
    \begin{align*}
        \mu_{n, \ell}^{(J)} \overset{p}{\longrightarrow} \eta_{\ell}^{(J)}.
    \end{align*}    
\end{proof}

\begin{proof}[Proof of Lemma \ref{lemma:loss}]
    Using the law of large numbers, we have
    \begin{align*}
        \frac{1}{n} \sum_{i=1}^n \big( T_{\ell}(\bX_{i}) - \sum_{J \in \cP} \mu_{\ell}^{(J)}(\bX_{i}^{(J)}) \big)^2 &\overset{p}{\longrightarrow} \E\big[\big(T_{\ell}(\bX) - \sum_{J \in \cP} \mu_{\ell}^{(J)}(\bX^{(J)})\big)^2\big], \\
        \frac{1}{n} \sum_{i=1}^n \mathds{1}_{\bX_i^{(J)} \in A_{k'}^{(J)} \bigcap \bX_i^{(J \setminus j)} \in A_{k}^{(J \setminus j)}} &\overset{p}{\longrightarrow} \P(\bX^{(J)} \in A_{k'}^{(J)} \cap \bX^{(J \setminus j)} \in A_{k}^{(J \setminus j)}), \\
        \frac{1}{n} \sum_{i=1}^n \mathds{1}_{\bX_i^{(J \setminus j)} \in A_{k}^{(J \setminus j)}} &\overset{p}{\longrightarrow} \P(\bX^{(J \setminus j)} \in A_{k}^{(J \setminus j)}).
    \end{align*}
    Then, we combine these limits to state the convergence of the empirical loss $L_n$ as follows,
    \begin{align*}
         L_n \overset{p}{\longrightarrow} \E\big[&\big(T_{\ell}(\bX) - \sum_{J \in \cP} \mu_{\ell}^{(J)}(\bX^{(J)})\big)^2\big] \\[-0.3em]
         & \quad + \sum_{J \in \cP} \sum_{j \in J} \sum_{k=1}^{K_{J \setminus j}} \Big[ \sqrt{\P(\bX^{(J \setminus j)} \in A_{k}^{(J \setminus j)})} \sum_{k'=1}^{K_J} \beta_{k'}^{(J)}  \P(\bX^{(J)} \in A_{k'}^{(J)} | \bX^{(J \setminus j)} \in A_{k}^{(J \setminus j)}) \Big]^2 \\
         = & \E\big[\big(T_{\ell}(\bX) - \sum_{J \in \cP} \mu_{\ell}^{(J)}(\bX^{(J)})\big)^2\big] \\[-0.5em]
         & \quad + \sum_{J \in \cP} \sum_{j \in J}\sum_{k=1}^{K_{J \setminus j}} \E[\mu_{\ell}^{(J)}(\bX^{(J)}) | \bX^{(J \setminus j)} \in A_{k}^{(J \setminus j)}]^2 \P(\bX^{(J \setminus j)} \in A_{k}^{(J \setminus j)}),
    \end{align*}
    and finally, we obtain that
    \begin{align*}
         L_n \overset{p}{\longrightarrow} 
         &\E\big[\big(T_{\ell}(\bX) - \sum_{J \in \cP} \mu_{\ell}^{(J)}(\bX^{(J)})\big)^2\big] + \sum_{J \in \cP} \sum_{j \in J}  \sum_{A \in \cA^{(J \setminus j)}} \E[\mu_{\ell}^{(J)}(\bX^{(J)}) \mathds{1}_{\{\bX^{(J \setminus j)} \in A\}}]^2 \\
         &= L^{\star}.
    \end{align*}
\end{proof}

\newpage
\section*{NeurIPS Paper Checklist}

\begin{enumerate}

\item {\bf Claims}
    \item[] Question: Do the main claims made in the abstract and introduction accurately reflect the paper's contributions and scope?
    \item[] Answer: \answerYes{}
    \item[] Justification: See Sections \ref{sec:treeHFD} and \ref{sec:algo} for the theoretical claims and introduced algorithm, and Section \ref{sec:xp} for the experimental claims.
    \item[] Guidelines:
    \begin{itemize}
        \item The answer NA means that the abstract and introduction do not include the claims made in the paper.
        \item The abstract and/or introduction should clearly state the claims made, including the contributions made in the paper and important assumptions and limitations. A No or NA answer to this question will not be perceived well by the reviewers. 
        \item The claims made should match theoretical and experimental results, and reflect how much the results can be expected to generalize to other settings. 
        \item It is fine to include aspirational goals as motivation as long as it is clear that these goals are not attained by the paper. 
    \end{itemize}

\item {\bf Limitations}
    \item[] Question: Does the paper discuss the limitations of the work performed by the authors?
    \item[] Answer: \answerYes{}
    \item[] Justification: Sections \ref{sec:treeHFD} and \ref{sec:algo} carefully analyze the theoretical properties of the introduced algorithm, in particular the approximation with respect to the target HFD. Section \ref{sec:xp} provides experiments with the errors of the introduced algorithm in practical cases. The limitation paragraph in the conclusion highlights the main limitations of the introduced algorithm.
    \item[] Guidelines:
    \begin{itemize}
        \item The answer NA means that the paper has no limitation while the answer No means that the paper has limitations, but those are not discussed in the paper. 
        \item The authors are encouraged to create a separate "Limitations" section in their paper.
        \item The paper should point out any strong assumptions and how robust the results are to violations of these assumptions (e.g., independence assumptions, noiseless settings, model well-specification, asymptotic approximations only holding locally). The authors should reflect on how these assumptions might be violated in practice and what the implications would be.
        \item The authors should reflect on the scope of the claims made, e.g., if the approach was only tested on a few datasets or with a few runs. In general, empirical results often depend on implicit assumptions, which should be articulated.
        \item The authors should reflect on the factors that influence the performance of the approach. For example, a facial recognition algorithm may perform poorly when image resolution is low or images are taken in low lighting. Or a speech-to-text system might not be used reliably to provide closed captions for online lectures because it fails to handle technical jargon.
        \item The authors should discuss the computational efficiency of the proposed algorithms and how they scale with dataset size.
        \item If applicable, the authors should discuss possible limitations of their approach to address problems of privacy and fairness.
        \item While the authors might fear that complete honesty about limitations might be used by reviewers as grounds for rejection, a worse outcome might be that reviewers discover limitations that aren't acknowledged in the paper. The authors should use their best judgment and recognize that individual actions in favor of transparency play an important role in developing norms that preserve the integrity of the community. Reviewers will be specifically instructed to not penalize honesty concerning limitations.
    \end{itemize}

\item {\bf Theory assumptions and proofs}
    \item[] Question: For each theoretical result, does the paper provide the full set of assumptions and a complete (and correct) proof?
    \item[] Answer: \answerYes{}
    \item[] Justification: See Assumptions defined in Section \ref{sec:treeHFD}, and the proofs in the Supplementary Material.
    \item[] Guidelines:
    \begin{itemize}
        \item The answer NA means that the paper does not include theoretical results. 
        \item All the theorems, formulas, and proofs in the paper should be numbered and cross-referenced.
        \item All assumptions should be clearly stated or referenced in the statement of any theorems.
        \item The proofs can either appear in the main paper or the supplemental material, but if they appear in the supplemental material, the authors are encouraged to provide a short proof sketch to provide intuition. 
        \item Inversely, any informal proof provided in the core of the paper should be complemented by formal proofs provided in appendix or supplemental material.
        \item Theorems and Lemmas that the proof relies upon should be properly referenced. 
    \end{itemize}

    \item {\bf Experimental result reproducibility}
    \item[] Question: Does the paper fully disclose all the information needed to reproduce the main experimental results of the paper to the extent that it affects the main claims and/or conclusions of the paper (regardless of whether the code and data are provided or not)?
    \item[] Answer: \answerYes{}
    \item[] Justification: The introduced algorithm is fully described in Section \ref{sec:algo}, and we provided all details about the experiments in Section \ref{sec:xp} and the Supplementary Material.
    \item[] Guidelines:
    \begin{itemize}
        \item The answer NA means that the paper does not include experiments.
        \item If the paper includes experiments, a No answer to this question will not be perceived well by the reviewers: Making the paper reproducible is important, regardless of whether the code and data are provided or not.
        \item If the contribution is a dataset and/or model, the authors should describe the steps taken to make their results reproducible or verifiable. 
        \item Depending on the contribution, reproducibility can be accomplished in various ways. For example, if the contribution is a novel architecture, describing the architecture fully might suffice, or if the contribution is a specific model and empirical evaluation, it may be necessary to either make it possible for others to replicate the model with the same dataset, or provide access to the model. In general. releasing code and data is often one good way to accomplish this, but reproducibility can also be provided via detailed instructions for how to replicate the results, access to a hosted model (e.g., in the case of a large language model), releasing of a model checkpoint, or other means that are appropriate to the research performed.
        \item While NeurIPS does not require releasing code, the conference does require all submissions to provide some reasonable avenue for reproducibility, which may depend on the nature of the contribution. For example
        \begin{enumerate}
            \item If the contribution is primarily a new algorithm, the paper should make it clear how to reproduce that algorithm.
            \item If the contribution is primarily a new model architecture, the paper should describe the architecture clearly and fully.
            \item If the contribution is a new model (e.g., a large language model), then there should either be a way to access this model for reproducing the results or a way to reproduce the model (e.g., with an open-source dataset or instructions for how to construct the dataset).
            \item We recognize that reproducibility may be tricky in some cases, in which case authors are welcome to describe the particular way they provide for reproducibility. In the case of closed-source models, it may be that access to the model is limited in some way (e.g., to registered users), but it should be possible for other researchers to have some path to reproducing or verifying the results.
        \end{enumerate}
    \end{itemize}

\item {\bf Open access to data and code}
    \item[] Question: Does the paper provide open access to the data and code, with sufficient instructions to faithfully reproduce the main experimental results, as described in supplemental material?
    \item[] Answer: \answerYes{}
    \item[] Justification: We provide the code with instructions to reproduce the experiments of the article. Real datasets are available online on the UCI repository \citep{kelly_uci}.
    \item[] Guidelines:
    \begin{itemize}
        \item The answer NA means that paper does not include experiments requiring code.
        \item Please see the NeurIPS code and data submission guidelines (\url{https://nips.cc/public/guides/CodeSubmissionPolicy}) for more details.
        \item While we encourage the release of code and data, we understand that this might not be possible, so “No” is an acceptable answer. Papers cannot be rejected simply for not including code, unless this is central to the contribution (e.g., for a new open-source benchmark).
        \item The instructions should contain the exact command and environment needed to run to reproduce the results. See the NeurIPS code and data submission guidelines (\url{https://nips.cc/public/guides/CodeSubmissionPolicy}) for more details.
        \item The authors should provide instructions on data access and preparation, including how to access the raw data, preprocessed data, intermediate data, and generated data, etc.
        \item The authors should provide scripts to reproduce all experimental results for the new proposed method and baselines. If only a subset of experiments are reproducible, they should state which ones are omitted from the script and why.
        \item At submission time, to preserve anonymity, the authors should release anonymized versions (if applicable).
        \item Providing as much information as possible in supplemental material (appended to the paper) is recommended, but including URLs to data and code is permitted.
    \end{itemize}

\item {\bf Experimental setting/details}
    \item[] Question: Does the paper specify all the training and test details (e.g., data splits, hyperparameters, how they were chosen, type of optimizer, etc.) necessary to understand the results?
    \item[] Answer: \answerYes{}
    \item[] Justification: See Section \ref{sec:xp} of the article and Section \ref{sec_smat:xp} of the Supplementary Material.
    \item[] Guidelines:
    \begin{itemize}
        \item The answer NA means that the paper does not include experiments.
        \item The experimental setting should be presented in the core of the paper to a level of detail that is necessary to appreciate the results and make sense of them.
        \item The full details can be provided either with the code, in appendix, or as supplemental material.
    \end{itemize}

\item {\bf Experiment statistical significance}
    \item[] Question: Does the paper report error bars suitably and correctly defined or other appropriate information about the statistical significance of the experiments?
    \item[] Answer: \answerYes{}
    \item[] Justification: Experiments with simulated data are run multiple times to compute the standard deviations of all metrics, which are displayed in Tables \ref{table:xp_analytical_mse_nsample} and \ref{table:xp_analytical_mse_depth} for example. Notice that for Tables \ref{table:xp_analytical_mse} and \ref{table:xp_analytical_mse_glex}, all standard deviations are small with respect to the associated metric values, and are therefore omitted for the sake of clarity.
    \item[] Guidelines:
    \begin{itemize}
        \item The answer NA means that the paper does not include experiments.
        \item The authors should answer "Yes" if the results are accompanied by error bars, confidence intervals, or statistical significance tests, at least for the experiments that support the main claims of the paper.
        \item The factors of variability that the error bars are capturing should be clearly stated (for example, train/test split, initialization, random drawing of some parameter, or overall run with given experimental conditions).
        \item The method for calculating the error bars should be explained (closed form formula, call to a library function, bootstrap, etc.)
        \item The assumptions made should be given (e.g., Normally distributed errors).
        \item It should be clear whether the error bar is the standard deviation or the standard error of the mean.
        \item It is OK to report 1-sigma error bars, but one should state it. The authors should preferably report a 2-sigma error bar than state that they have a 96\% CI, if the hypothesis of Normality of errors is not verified.
        \item For asymmetric distributions, the authors should be careful not to show in tables or figures symmetric error bars that would yield results that are out of range (e.g. negative error rates).
        \item If error bars are reported in tables or plots, The authors should explain in the text how they were calculated and reference the corresponding figures or tables in the text.
    \end{itemize}

\item {\bf Experiments compute resources}
    \item[] Question: For each experiment, does the paper provide sufficient information on the computer resources (type of compute workers, memory, time of execution) needed to reproduce the experiments?
    \item[] Answer: \answerYes{} % Replace by \answerYes{}, \answerNo{}, or \answerNA{}.
    \item[] Justification: See Section \ref{sec_smat:xp} of the Supplementary Material.
    \item[] Guidelines:
    \begin{itemize}
        \item The answer NA means that the paper does not include experiments.
        \item The paper should indicate the type of compute workers CPU or GPU, internal cluster, or cloud provider, including relevant memory and storage.
        \item The paper should provide the amount of compute required for each of the individual experimental runs as well as estimate the total compute. 
        \item The paper should disclose whether the full research project required more compute than the experiments reported in the paper (e.g., preliminary or failed experiments that didn't make it into the paper). 
    \end{itemize}
    
\item {\bf Code of ethics}
    \item[] Question: Does the research conducted in the paper conform, in every respect, with the NeurIPS Code of Ethics \url{https://neurips.cc/public/EthicsGuidelines}?
    \item[] Answer: \answerYes{}
    \item[] Justification: We conducted this work following the NeurIPS Code of Ethics.
    \item[] Guidelines:
    \begin{itemize}
        \item The answer NA means that the authors have not reviewed the NeurIPS Code of Ethics.
        \item If the authors answer No, they should explain the special circumstances that require a deviation from the Code of Ethics.
        \item The authors should make sure to preserve anonymity (e.g., if there is a special consideration due to laws or regulations in their jurisdiction).
    \end{itemize}

\item {\bf Broader impacts}
    \item[] Question: Does the paper discuss both potential positive societal impacts and negative societal impacts of the work performed?
    \item[] Answer: \answerYes{} % Replace by \answerYes{}, \answerNo{}, or \answerNA{}.
    \item[] Justification: The article states that black-box models, such as tree ensembles, may have negative societal impacts in critical domains (e.g., healthcare or industry). The article is expected to have a positive societal impact, since it introduces a new XAI algorithm, in order to improve explainability of machine learning models for a safe use in these critical domains.
    \item[] Guidelines:
    \begin{itemize}
        \item The answer NA means that there is no societal impact of the work performed.
        \item If the authors answer NA or No, they should explain why their work has no societal impact or why the paper does not address societal impact.
        \item Examples of negative societal impacts include potential malicious or unintended uses (e.g., disinformation, generating fake profiles, surveillance), fairness considerations (e.g., deployment of technologies that could make decisions that unfairly impact specific groups), privacy considerations, and security considerations.
        \item The conference expects that many papers will be foundational research and not tied to particular applications, let alone deployments. However, if there is a direct path to any negative applications, the authors should point it out. For example, it is legitimate to point out that an improvement in the quality of generative models could be used to generate deepfakes for disinformation. On the other hand, it is not needed to point out that a generic algorithm for optimizing neural networks could enable people to train models that generate Deepfakes faster.
        \item The authors should consider possible harms that could arise when the technology is being used as intended and functioning correctly, harms that could arise when the technology is being used as intended but gives incorrect results, and harms following from (intentional or unintentional) misuse of the technology.
        \item If there are negative societal impacts, the authors could also discuss possible mitigation strategies (e.g., gated release of models, providing defenses in addition to attacks, mechanisms for monitoring misuse, mechanisms to monitor how a system learns from feedback over time, improving the efficiency and accessibility of ML).
    \end{itemize}
    
\item {\bf Safeguards}
    \item[] Question: Does the paper describe safeguards that have been put in place for responsible release of data or models that have a high risk for misuse (e.g., pretrained language models, image generators, or scraped datasets)?
    \item[] Answer: \answerNA{}
    \item[] Justification: The article deals with XAI methods for tree ensembles, which do not have a high risk of misuse. Experiments are conducted with existing open datasets.
    \item[] Guidelines:
    \begin{itemize}
        \item The answer NA means that the paper poses no such risks.
        \item Released models that have a high risk for misuse or dual-use should be released with necessary safeguards to allow for controlled use of the model, for example by requiring that users adhere to usage guidelines or restrictions to access the model or implementing safety filters. 
        \item Datasets that have been scraped from the Internet could pose safety risks. The authors should describe how they avoided releasing unsafe images.
        \item We recognize that providing effective safeguards is challenging, and many papers do not require this, but we encourage authors to take this into account and make a best faith effort.
    \end{itemize}

\item {\bf Licenses for existing assets}
    \item[] Question: Are the creators or original owners of assets (e.g., code, data, models), used in the paper, properly credited and are the license and terms of use explicitly mentioned and properly respected?
    \item[] Answer: \answerYes{}
    \item[] Justification: See Section \ref{sec_smat:xp} of the Supplementary Material.
    \item[] Guidelines:
    \begin{itemize}
        \item The answer NA means that the paper does not use existing assets.
        \item The authors should cite the original paper that produced the code package or dataset.
        \item The authors should state which version of the asset is used and, if possible, include a URL.
        \item The name of the license (e.g., CC-BY 4.0) should be included for each asset.
        \item For scraped data from a particular source (e.g., website), the copyright and terms of service of that source should be provided.
        \item If assets are released, the license, copyright information, and terms of use in the package should be provided. For popular datasets, \url{paperswithcode.com/datasets} has curated licenses for some datasets. Their licensing guide can help determine the license of a dataset.
        \item For existing datasets that are re-packaged, both the original license and the license of the derived asset (if it has changed) should be provided.
        \item If this information is not available online, the authors are encouraged to reach out to the asset's creators.
    \end{itemize}

\item {\bf New assets}
    \item[] Question: Are new assets introduced in the paper well documented and is the documentation provided alongside the assets?
    \item[] Answer: \answerYes{}
    \item[] Justification: We introduce the \texttt{treeHFD} software package, which has a specific documentation provided in the package.
    \item[] Guidelines:
    \begin{itemize}
        \item The answer NA means that the paper does not release new assets.
        \item Researchers should communicate the details of the dataset/code/model as part of their submissions via structured templates. This includes details about training, license, limitations, etc. 
        \item The paper should discuss whether and how consent was obtained from people whose asset is used.
        \item At submission time, remember to anonymize your assets (if applicable). You can either create an anonymized URL or include an anonymized zip file.
    \end{itemize}

\item {\bf Crowdsourcing and research with human subjects}
    \item[] Question: For crowdsourcing experiments and research with human subjects, does the paper include the full text of instructions given to participants and screenshots, if applicable, as well as details about compensation (if any)? 
    \item[] Answer: \answerNA{}
    \item[] Justification: The article does not involve crowdsourcing nor research with human subjects.
    \item[] Guidelines:
    \begin{itemize}
        \item The answer NA means that the paper does not involve crowdsourcing nor research with human subjects.
        \item Including this information in the supplemental material is fine, but if the main contribution of the paper involves human subjects, then as much detail as possible should be included in the main paper. 
        \item According to the NeurIPS Code of Ethics, workers involved in data collection, curation, or other labor should be paid at least the minimum wage in the country of the data collector. 
    \end{itemize}

\item {\bf Institutional review board (IRB) approvals or equivalent for research with human subjects}
    \item[] Question: Does the paper describe potential risks incurred by study participants, whether such risks were disclosed to the subjects, and whether Institutional Review Board (IRB) approvals (or an equivalent approval/review based on the requirements of your country or institution) were obtained?
    \item[] Answer: \answerNA{}
    \item[] Justification: The article does not involve crowdsourcing nor research with human subjects.
    \item[] Guidelines:
    \begin{itemize}
        \item The answer NA means that the paper does not involve crowdsourcing nor research with human subjects.
        \item Depending on the country in which research is conducted, IRB approval (or equivalent) may be required for any human subjects research. If you obtained IRB approval, you should clearly state this in the paper. 
        \item We recognize that the procedures for this may vary significantly between institutions and locations, and we expect authors to adhere to the NeurIPS Code of Ethics and the guidelines for their institution. 
        \item For initial submissions, do not include any information that would break anonymity (if applicable), such as the institution conducting the review.
    \end{itemize}

\item {\bf Declaration of LLM usage}
    \item[] Question: Does the paper describe the usage of LLMs if it is an important, original, or non-standard component of the core methods in this research? Note that if the LLM is used only for writing, editing, or formatting purposes and does not impact the core methodology, scientific rigorousness, or originality of the research, declaration is not required.
    %this research? 
    \item[] Answer: \answerNA{}
    \item[] Justification: This research work was conducted without using LLMs.
    \item[] Guidelines:
    \begin{itemize}
        \item The answer NA means that the core method development in this research does not involve LLMs as any important, original, or non-standard components.
        \item Please refer to our LLM policy (\url{https://neurips.cc/Conferences/2025/LLM}) for what should or should not be described.
    \end{itemize}

\end{enumerate}

%%%%%%%%%%%%%%%%%%%%%%%%%%%%%%%%%%%%%%%%%%%%%%%%%%%%%%%%%%%%

\end{document}